\newtheoremstyle{my_style}% name
{5pt}% Space above
{5pt}% Space below
{\itshape}% Body font
{}% Indent amount
{\bfseries}% Theorem head font
{.}% Punctuation after theorem head
{.5em}% Space after theorem head
{}% Theorem head spec (can be left empty, meaning `normal')
\theoremstyle{my_style}
\newtheorem{theorem}{Theorem}[section]
\newtheorem{lemma}[theorem]{Lemma}
\newtheorem{proposition}[theorem]{Proposition}
\newcommand{\cH}{\mathcal{H}}
\newcommand{\bW}{\boldsymbol{W}}
\newcommand{\bR}{\boldsymbol{R}}
\newcommand{\bS}{\boldsymbol{S}}
\newcommand{\bT}{\boldsymbol{T}}
\newcommand{\bh}{\boldsymbol{h}}
\newcommand{\bc}{\boldsymbol{c}}
\newcommand{\bw}{\boldsymbol{w}}
\newcommand{\bbf}{\boldsymbol{f}}
\newcommand{\bg}{\boldsymbol{g}}
\newcommand{\bZero}{\boldsymbol{0}}
\newcommand{\beq}{\begin{equation}}
\newcommand{\eeq}{\end{equation}}
\newcommand{\beqn}{\begin{eqnarray}}
\newcommand{\eeqn}{\end{eqnarray}}
\newcommand{\beqns}{\begin{eqnarray*}}
\newcommand{\eeqns}{\end{eqnarray*}}
\newcommand{\R}{\mathbb{R}}
\newcommand{\HH}{\mathbb{H}}
\newcommand{\C}{\mathbb{C}}
\newcommand{\A}{\mathbb{A}}
\newcommand{\F}{\mathbb{F}}
\newcommand{\N}{\mathbb{N}}
\newcommand{\frechet}{\textrm{Fr\'{e}chet }}
\newcommand{\fredif}{\textrm{Fr\'{e}chet differentiable }}
\newcommand{\bdiv}{\mathop{\operator@font div}}
\newcommand{\diag}{\mathop{\operator@font diag}}
\newcommand{\conv}{\mathop{\operator@font conv}}
\newcommand{\sign}{\mathop{\operator@font sign}}
\newcommand{\proj}{\mathop{\operator@font proj}}
\newcommand{\spa}{\mathop{\operator@font span}}
\newcommand{\epi}{\mathop{\operator@font epi}}
\newcommand{\dom}{\mathop{\operator@font dom}}
\begin{document}
%
% paper title
% can use linebreaks \\ within to get better formatting as desired
\title{\textbf{Wirtinger's Calculus in general Hilbert Spaces}}
%
%
% author names and IEEE memberships
% note positions of commas and nonbreaking spaces ( ~ ) LaTeX will not break
% a structure at a ~ so this keeps an author's name from being broken across
% two lines.
% use \thanks{} to gain access to the first footnote area
% a separate \thanks must be used for each paragraph as LaTeX2e's \thanks
% was not built to handle multiple paragraphs
%

\author{Pantelis Bouboulis, Member, IEEE, AMS.% <-this % stops a space
\thanks{P. Bouboulis is with the Department
of Informatics and telecommunications, University of Athens, Greece,
e-mail: (see http://users.uoa.gr/~ldalla/pantelis/).}% <-this % stops a space
}

% note the % following the last \IEEEmembership and also \thanks -
% these prevent an unwanted space from occurring between the last author name
% and the end of the author line. i.e., if you had this:
%
% \author{....lastname \thanks{...} \thanks{...} }
%                     ^------------^------------^----Do not want these spaces!
%
% a space would be appended to the last name and could cause every name on that
% line to be shifted left slightly. This is one of those "LaTeX things". For
% instance, "\textbf{A} \textbf{B}" will typeset as "A B" not "AB". To get
% "AB" then you have to do: "\textbf{A}\textbf{B}"
% \thanks is no different in this regard, so shield the last } of each \thanks
% that ends a line with a % and do not let a space in before the next \thanks.
% Spaces after \IEEEmembership other than the last one are OK (and needed) as
% you are supposed to have spaces between the names. For what it is worth,
% this is a minor point as most people would not even notice if the said evil
% space somehow managed to creep in.

% The paper headers
\markboth{April 2010}%
{P. Bouboulis, Wirtinger's Calculus in complex Hilbert Spaces}
% The only time the second header will appear is for the odd numbered pages
% after the title page when using the twoside option.
%
% *** Note that you probably will NOT want to include the author's ***
% *** name in the headers of peer review papers.                   ***
% You can use \ifCLASSOPTIONpeerreview for conditional compilation here if
% you desire.

% If you want to put a publisher's ID mark on the page you can do it like
% this:
%\IEEEpubid{0000--0000/00\$00.00~\copyright~2007 IEEE}
% Remember, if you use this you must call \IEEEpubidadjcol in the second
% column for its text to clear the IEEEpubid mark.

% use for special paper notices
%\IEEEspecialpapernotice{(Invited Paper)}

% make the title area
\maketitle

\section{Introduction}
Wirtinger's calculus \cite{Wirti} has become very popular in the signal processing community mainly in the context of
complex adaptive filtering \cite{Picin95, ManGoh, Adali10, Adali08a, Adali08b, MatPaSte, CaGePaVe, Moreno}, as a means of computing, in an elegant way,  gradients of real valued cost functions defined on complex domains ($\C^\nu$). Such functions, obviously, are not holomorphic and therefore the complex derivative cannot be used. Instead, if we consider that the cost function is defined on a Euclidean domain with a double dimensionality ($\R^{2\nu}$), then the real derivatives may be employed. The price of this approach is that the computations become cumbersome and tedious. Wirtinger's calculus provides an alternative equivalent formulation, that is based on simple rules and principles and which bears a great resemblance to the rules of the standard complex derivative. Although this methodology is relatively known in the German speaking countries and has been applied to practical applications \cite{Brandwood, VanDeBos}, only recently has become popular in the signal processing community, mostly due to the works of Picinbono on widely linear estimation filters \cite{Picin95}.

Most Complex Analysis' textbooks deal with \textit{complex analytic} (i.e., \textit{holomorphic}) functions and their properties, which in order to be studied properly a great deal of notions from topology, differential geometry, calculus on manifolds and from other mathematical fields need to be employed. Therefore, most of these materials are accessible only to the specialist. It is only natural that in the scope of this literature, Wirtinger's calculus is usually ignored, since it involves non-holomorphic functions. Nevertheless, in some of these textbooks, the ideas of Wirtinger's calculus are mentioned, although, in most cases, they are presented either superficially, or they are introduced in a completely different set-up (mainly as a bi-product of the Cauchy Riemann conditions). Some of these textbooks are \cite{Remmert, Nehari, MerkHatzi}. However, most of these works are highly specialized and technically abstruse, and therefore not recommended for someone who wants only to understand the concepts of Wirtinger's calculus and use them in his/her field. Moreover, a rigorous and self-consistent presentation of the main ideas of Wirtinger's calculus cannot be found in any of those works. An excellent and highly recommended first attempt to summarize all the related concepts and present them in a unified framework is the introductory text of K. Kreutz-Delgado \cite{Delga}. The aim of the present manuscript is twofold: a) it endeavors to provide a more rigorous presentation of the related material, focusing on aspects that the author finds more insightful and b) it extends the notions of Wirtinger's calculus on general Hilbert spaces (such as Reproducing Hilbert Kernel Spaces).

A common misconception (usually done by beginners in the field) is that Wirtinger's calculus uses an alternative definition of derivatives and therefore results in different gradient rules in minimization problems. We should emphasize that the theoretical foundation of Wirtinger's calculus is the common definition of the real derivative. However, it turns out that when the complex structure is taken into account, the real derivatives may be described using an equivalent and more elegant formulation which bears a surprising resemblance with the complex derivative. Therefore, simple rules may be derived and the computations of the gradients, which may become tedious if the double dimensional space $\R^{2\nu}$ is considered, are simplified.

The manuscript has two main parts. Section \ref{SEC:wirti1}, deals with ordinary Wirtinger's calculus for functions of one complex variable, while in section \ref{SEC:wirti_hilbert} the main notions and results of the extended Wirtinger's Calculus in RKHSs are presented. Throughout the paper, we will denote the set of all integers, real and complex numbers by $\N$, $\R$ and $\C$ respectively. Vector or matrix valued quantities appear in boldfaced symbols.

The present report, has been inspired by the need of the author and its colleagues to understand the underlying theory of Wirtinger's Calculus and to further extend it to include the kernel case. Many parts have been considerably improved after long discussions with prof. S. Theodoridis and L. Dalla.

%---------------------------------------------------------------------------------------
\section{Wirtinger's Calculus on $\C$}\label{SEC:wirti1}
%---------------------------------------------------------------------------------------
Consider the function $f:X\subseteq\C\rightarrow\C$, $f(z)=f(x+iy)=u(x,y)+v(x,y)i$, where $u$, and $v$ are real valued functions defined on an open subset $X$ of $\R^2$. Any such function, $f$, may be regarded as defined either on a subset of $\C$ or on a subset of $\R^2$. Furthermore, $f$ may be regarded either as a complex valued function, or as a vector valued function, which takes values in $\R^2$. Therefore, we may equivalently write:
\begin{align*}
f(z)=f(x+iy)=f(x,y)=u(x,y) + iv(x,y) = (u(x,y), v(x,y)).
\end{align*}
The complex derivative of $f$ at $c$, if it exists, is defined as the limit:
\begin{align*}
f'(c)=\lim_{h\rightarrow 0}\frac{f(c+h)-f(c)}{h}.
\end{align*}
This definition, although similar with the typical real derivative of elementary calculus, exploits the complex structure of $X$. More specifically, the division that appears in the definition is based on the complex multiplication, which forces a great deal of structure on $f$. From this simple fact follow all the important strong properties of the complex derivative, which do not have counterparts in the ordinary real calculus. For example, it is well known that if $f'$ exists, then so does $f^{(n)}$, for $n\in\N$.  If $f$ is differentiable at any $z_0\in A$, $f$ is called \textit{holomorphic} in $A$, or \textit{complex analytic} in $A$, in the sense that it can be expanded as a Taylor series, i.e.,
\begin{align}\label{EQ:complex_Taylor}
f(c+h) = \sum_{n=0}^{\infty} \frac{f^{(n)}(c)}{n!} h^n.
\end{align}
The proof of this statement is out of the scope of this manuscript, but it can be found at any complex analysis textbook. The expression ``\textit{$f$ is complex analytic at  $z_0$}'' means that $f$ is complex-analytic at a neighborhood around $z_0$. We will say that $f$ is \textit{real analytic}, when both $u$ and $v$ have a Taylor's series expansion in the real domain.

%----------------------------------------------------------------------
\subsection{Cauchy-Riemann conditions}\label{SEC:CR_cond}
We begin our study, exploring the relations between the complex derivative and the real derivatives. In the following we will say that $f$ is \textit{differentiable in the complex sense}, if the complex derivative exists, and that $f$ is \textit{differentiable in the real sense}, if both $u$ and $v$ have partial derivatives.

\noindent\rule[1ex]{\linewidth}{1pt}
\begin{proposition}
If the complex derivative of $f$ at a point $c$ (i.e.,$f'(c)$) exists, then $u$ and $v$ are differentiable at the point $(c_1,c_2)$, where $c=c_1+c_2 i$. Furthermore,
\begin{align}\label{EQ:cauchy-riemann}
\frac{\partial u}{\partial x}(c_1,c_2)=\frac{\partial v}{\partial y}(c_1,c_2) \textrm{ and } \frac{\partial u}{\partial y}(c_1,c_2)=-\frac{\partial v}{\partial x}(c_1,c_2).
\end{align}
\end{proposition}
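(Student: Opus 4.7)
The plan is to exploit the fact that the complex limit $\lim_{h \to 0} (f(c+h)-f(c))/h$ is independent of the path along which $h$ approaches $0$. I will obtain two different expressions for $f'(c)$ by restricting $h$ to two distinguished directions in the plane, and then match real and imaginary parts.

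First, I restrict $h$ to be real, writing $h = t$ with $t \in \R$, $t \to 0$. Then $f(c+t) - f(c) = [u(c_1+t,c_2) - u(c_1,c_2)] + i[v(c_1+t,c_2) - v(c_1,c_2)]$, and dividing by $t$ and taking the limit shows simultaneously that $\partial u/\partial x$ and $\partial v/\partial x$ exist at $(c_1,c_2)$, with
\begin{align*}
f'(c) = \frac{\partial u}{\partial x}(c_1,c_2) + i\,\frac{\partial v}{\partial x}(c_1,c_2).
\end{align*}
Next, I restrict $h = it$, $t \in \R$, $t \to 0$. A parallel computation, using that $1/i = -i$, shows that $\partial u/\partial y$ and $\partial v/\partial y$ exist at $(c_1,c_2)$, with
\begin{align*}
f'(c) = \frac{\partial v}{\partial y}(c_1,c_2) - i\,\frac{\partial u}{\partial y}(c_1,c_2).
\end{align*}
Equating real and imaginary parts of these two expressions for $f'(c)$ yields the two Cauchy--Riemann identities and, along the way, the existence of all four partial derivatives of $u$ and $v$ at $(c_1,c_2)$.

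If differentiability is to be interpreted here in the stronger (total) sense rather than merely as existence of partial derivatives, a small additional step is needed, and this is the only delicate part of the argument. I would rewrite complex differentiability as $f(c+h) - f(c) = f'(c)\,h + r(h)$ with $|r(h)|/|h| \to 0$, set $f'(c) = a+ib$ and $h = h_1 + ih_2$, and expand $f'(c)\,h$ to read off the real and imaginary components
\begin{align*}
u(c_1+h_1,c_2+h_2) - u(c_1,c_2) &= a h_1 - b h_2 + \operatorname{Re} r(h),\\
v(c_1+h_1,c_2+h_2) - v(c_1,c_2) &= b h_1 + a h_2 + \operatorname{Im} r(h).
\end{align*}
Since $|\operatorname{Re} r(h)|, |\operatorname{Im} r(h)| \le |r(h)| = o(|h|) = o(\|(h_1,h_2)\|)$, these are genuine first-order linear approximations, so $u$ and $v$ are (totally) differentiable at $(c_1,c_2)$, and one simultaneously reads $\partial u/\partial x = a = \partial v/\partial y$ and $\partial u/\partial y = -b = -\partial v/\partial x$. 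The main obstacle is precisely this control of the remainder: the complex-analytic hypothesis gives $o(|h|)$ in the complex norm, and one must observe that this automatically transfers to the Euclidean norm on $\R^2$, which is what upgrades directional information into full real differentiability.
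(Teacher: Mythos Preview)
Your proposal is correct and in fact mirrors the paper almost exactly: the paper gives two proofs, the first being precisely your path-restriction argument along $h=t$ and $h=it$, and the second being precisely your Taylor-expansion argument with $f'(c)=A+Bi$ to extract total differentiability of $u$ and $v$ together with the Cauchy--Riemann relations. Your observation that the first argument alone yields only the partial derivatives, while the second upgrades this to full real differentiability via the $o(|h|)$ remainder, is a clarification the paper leaves implicit.
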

\noindent\rule[1ex]{\linewidth}{1pt}
There are several proves of this proposition, that can be found in any complex analysis textbook. Here we present the two most characteristic ones.\\

\begin{proof}[1st Proof]
Since $f$ is differentiable in the complex sense, the limit:
\begin{align*}
f'(c)=\lim_{h\rightarrow 0}\frac{f(c+h)-f(c)}{h}
\end{align*}
exists. Consider the special case where $h=h_1$ (i.e., $h\rightarrow 0$ on the real axes). Then
\begin{align*}
\frac{f(c+h)-f(c)}{h} &=\frac{u(c_1+h_1, c_2) + iv(c_1+h_1, c_2) - u(c_1,c_2) - iv(c_1,c_2)}{h_1}\\
&= \frac{u(c_1+h_1, c_2) - u(c_1,c_2)}{h_1} + i\frac{v(c_1+h_1, c_2) - v(c_1,c_2)}{h_1}.
\end{align*}
In this case, since the left part of the equation converges to $f(c)$, the real and imaginary parts of  the second half of the equation must also be convergent. Thus, $u$, and $v$ have partial derivatives with respect to $x$ and $f'(c)=\partial u(c)/\partial x + i\partial v(c)/\partial x$. Following the same rationale, if we set $h=ih_2$ (i.e., $h\rightarrow 0$ on the imaginary axes), we take
\begin{align*}
\frac{f(c+h)-f(c)}{h} &=\frac{u(c_1, c_2+h_2) + iv(c_1, c_2+h_2) - u(c_1,c_2) - iv(c_1,c_2)}{ih_2}\\
&= \frac{u(c_1, c_2+h_2) - u(c_1,c_2)}{ih_2} + i\frac{v(c_1, c_2+h2) - v(c_1,c_2)}{ih_2}\\
&= \frac{v(c_1, c_2+h_2) - v(c_1,c_2)}{h_2}  -  i\frac{u(c_1, c_2+h_2) - u(c_1,c_2)}{h_2}.
\end{align*}
The last equation guarantees the existence of the partial derivatives of $u$ and $v$ with respect to $y$. We conclude that $u$ and $v$ have partial derivatives and that
\begin{align*}
f'(c)=\frac{\partial u}{\partial x}(c) + i\frac{\partial v}{\partial x}(c) =  \frac{\partial v}{\partial y}(c) - i\frac{\partial u}{\partial x}(c).
\end{align*}
\end{proof}
\begin{proof}[2nd Proof]
Considering the first order Taylor expansion of $f$ around $c$, we take:
\begin{align}\label{EQ:c_tayl1}
f(c+h)=f(c) + f'(c) h + o(|h|),
\end{align}
where the notation $o$ means that $o(|h|)/|h|\rightarrow 0$, as $|h|\rightarrow 0$. Substituting $f'(c)=A+Bi$ we have:
\begin{align*}
f(c+h) =& f(c) + (A+Bi) (h_1+ih_2) + o(h)\\
=& u(c_1,c_2) + Ah_1 - Bh_2 + \Re[o(h)] + i \left(v(c_1,c_2) + B h_1+ A h_2 + \Im[o(|h|)] \right).
\end{align*}
Therefore,
\begin{align}
u(c_1+h_1,c_2+h_2) =& u(c_1,c_2) + Ah_1 - Bh_2 + \Re[o(|h|)],\label{EQ:r_tayl1}\\
v(c_1+h_1,c_2+h_2) =&   v(c_1,c_2) + B h_1+ A h_2 + \Im[o(|h|)]\label{EQ:r_tayl2}.
\end{align}
Since $o(|h|)/|h|\rightarrow 0$, we also have
\begin{align*}
\Re[o(|(h_1,h_2)|)]/\sqrt{h_1^2+h_2^2}\rightarrow 0 \textrm{ and } \Im[o(|(h_1,h_2)|)]/\sqrt{h_1^2+h_2^2}\rightarrow 0 \textrm{ as } h\rightarrow 0.
\end{align*}
Thus, equations (\ref{EQ:r_tayl1}-\ref{EQ:r_tayl2}) are the first order Taylor expansions of $u$ and $v$ around $(c_1,c_2)$. Hence we deduce that:
\begin{align*}
\frac{\partial u}{\partial x}(c_1,c_2)=A, \frac{\partial u}{\partial y}(c_1,c_2)=-B, \frac{\partial v}{\partial x}(c_1,c_2)=B, \frac{\partial v}{\partial y}(c_1,c_2)=A.
\end{align*}
This completes the proof.
\end{proof}

Equations (\ref{EQ:cauchy-riemann}) are called the \textit{Cauchy Riemann conditions}. It is well known that they provide a necessary and sufficient condition, for a complex function $f$ to be differentiable in the complex sense, providing that $f$ is differentiable in the real sense. This is explored in the following proposition.

\noindent\rule[1ex]{\linewidth}{1pt}
\begin{proposition}\label{PRO:CR2}
If  $f$ is differentiable in the real sense at a point $(c_1,c_2)$ and the Cauchy-Riemann conditions hold:
\begin{align}
\frac{\partial u}{\partial x}(c_1,c_2)=\frac{\partial v}{\partial y}(c_1,c_2) \textrm{ and } \frac{\partial u}{\partial y}(c_1,c_2)=-\frac{\partial v}{\partial x}(c_1,c_2),
\end{align}
then $f$ is differentiable in the complex sense at the point $c=c_1+c_2 i$.
\end{proposition}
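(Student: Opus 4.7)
The plan is to essentially reverse the second proof of the previous proposition. Real differentiability of $u$ and $v$ at $(c_1,c_2)$ gives first-order Taylor expansions
\begin{align*}
u(c_1+h_1,c_2+h_2) &= u(c_1,c_2) + \tfrac{\partial u}{\partial x}(c)\,h_1 + \tfrac{\partial u}{\partial y}(c)\,h_2 + o(|h|),\\
v(c_1+h_1,c_2+h_2) &= v(c_1,c_2) + \tfrac{\partial v}{\partial x}(c)\,h_1 + \tfrac{\partial v}{\partial y}(c)\,h_2 + o(|h|),
\end{align*}
where $h = h_1 + i h_2$ and $|h| = \sqrt{h_1^2+h_2^2}$. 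I would then introduce the abbreviations $A := \tfrac{\partial u}{\partial x}(c) = \tfrac{\partial v}{\partial y}(c)$ and $B := \tfrac{\partial v}{\partial x}(c) = -\tfrac{\partial u}{\partial y}(c)$, which is exactly the content of the Cauchy--Riemann hypothesis, so that the two real expansions become
\begin{align*}
u(c_1+h_1,c_2+h_2) &= u(c_1,c_2) + A h_1 - B h_2 + o(|h|),\\
v(c_1+h_1,c_2+h_2) &= v(c_1,c_2) + B h_1 + A h_2 + o(|h|).
\end{align*}

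Next I would assemble $f(c+h) = u(c_1+h_1,c_2+h_2) + i v(c_1+h_1,c_2+h_2)$ and rearrange the linear-in-$h$ term. The key algebraic observation is
\begin{align*}
(A h_1 - B h_2) + i(B h_1 + A h_2) = (A + i B)(h_1 + i h_2) = (A+iB)\,h,
\end{align*}
so that
\begin{align*}
f(c+h) = f(c) + (A + i B)\,h + o(|h|).
\end{align*}
Dividing by $h$ (for $h \neq 0$) gives $\frac{f(c+h)-f(c)}{h} = (A+iB) + \frac{o(|h|)}{h}$, and since $\left|\frac{o(|h|)}{h}\right| = \frac{|o(|h|)|}{|h|} \to 0$ as $h \to 0$ in $\C$, the complex limit exists and equals $A + i B$. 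Hence $f'(c) = A + i B$, which concludes the proof.

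There is really no serious obstacle here; the argument is an almost mechanical rearrangement. The only point requiring a moment of care is the passage from a real $\R^2$-remainder to a complex one, namely recognizing that a term which is $o(|h|)$ in the Euclidean norm of $(h_1,h_2)$ is also $o(|h|)$ in the complex modulus (these are the same quantity), so that dividing by the complex number $h$ rather than taking a directional limit in $\R^2$ is legitimate. Once that is in place, the Cauchy--Riemann identities are exactly what is needed to turn the real Jacobian action on $(h_1,h_2)$ into multiplication by a single complex number, which is the defining feature of complex differentiability.
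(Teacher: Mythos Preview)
Your proof is correct and, in fact, more direct than the paper's. Both arguments begin from the real first-order Taylor expansions of $u$ and $v$, but then diverge: you immediately invoke the Cauchy--Riemann equations to write the linear part as $(A+iB)h$ and conclude. The paper instead combines the expansions into $f(c+h) = f(c) + \frac{\partial f}{\partial x}(c)h_1 + \frac{\partial f}{\partial y}(c)h_2 + o(|h|)$, substitutes $h_1 = \tfrac{h+h^*}{2}$, $h_2 = \tfrac{h-h^*}{2i}$, and obtains the expansion
\[
f(c+h) = f(c) + \tfrac{1}{2}\left(\tfrac{\partial f}{\partial x} - i\tfrac{\partial f}{\partial y}\right)h + \tfrac{1}{2}\left(\tfrac{\partial f}{\partial x} + i\tfrac{\partial f}{\partial y}\right)h^* + o(|h|),
\]
then argues that complex differentiability forces the coefficient of $h^*$ to vanish, which is precisely the Cauchy--Riemann condition. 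Your route is shorter and entirely adequate for the proposition as stated; the paper's detour through the $(h,h^*)$ representation is deliberate, since that expansion is exactly what motivates the definitions of the Wirtinger derivatives $\partial f/\partial z$ and $\partial f/\partial z^*$ in the next subsection. So the paper is proving the proposition and simultaneously setting up its central formula, whereas you are proving the proposition only.
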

\noindent\rule[1ex]{\linewidth}{1pt}

\begin{proof}
Consider the first order Taylor expansions of $u$ and $v$ at $c=c_1 + i c_2 = (c_1,c_2)$:
\begin{align*}
u(c+h) &= u(c) + \frac{\partial u}{\partial x}(c) h_1 + \frac{\partial u}{\partial y}(c) h_2 + o(|h|),\\
v(c+h) &= v(c) + \frac{\partial v}{\partial x}(c) h_1 + \frac{\partial v}{\partial y}(c) h_2 + o(|h|).
\end{align*}
Multiplying the second relation by $i$ and adding it to the first one, we take:
\begin{align*}
f(c+h) &= f(c) + \left(\frac{\partial u}{\partial x}(c) + i\frac{\partial v}{\partial x}(c)\right) h_1 + \left(\frac{\partial u}{\partial y}(c)+i\frac{\partial v}{\partial y}(c)\right) h_2 + o(|h|).
\end{align*}
To simplify the notation we may define
\begin{align*}
\frac{\partial f}{\partial x}(c) = \frac{\partial u}{\partial x}(c) + i\frac{\partial v}{\partial x}(c) \textrm{ and } \frac{\partial f}{\partial y}(c) = \frac{\partial u}{\partial y}(c) + i\frac{\partial v}{\partial y}(c)
\end{align*}
and obtain:
\begin{align*}
f(c+h) &= f(c) + \frac{\partial f}{\partial x}(c)  h_1 + \frac{\partial f}{\partial y}(c) h_2 + o(|h|).
\end{align*}
Nest, we substitute $h_1$ and $h_2$ using the relations $h_1= \frac{h+h^*}{2}$ and $h_2=\frac{h-h^*}{2i}$.
\begin{align}
f(c+h) &= f(c) + \frac{1}{2}\left(\frac{\partial f}{\partial x}(c)  +\frac{1}{i} \frac{\partial f}{\partial y}(c)\right) h +
\frac{1}{2}\left(\frac{\partial f}{\partial x}(c)  -\frac{1}{i} \frac{\partial f}{\partial y}(c)\right) h^* + o(|h|)\nonumber\\
    &=f(c) + \frac{1}{2}\left(\frac{\partial f}{\partial x}(c)  -i \frac{\partial f}{\partial y}(c)\right) h +
\frac{1}{2}\left(\frac{\partial f}{\partial x}(c)  + i \frac{\partial f}{\partial y}(c)\right) h^* + o(|h|).\label{EQ:wirti2}
\end{align}
It will be shown that equation (\ref{EQ:wirti2}) is essential for the development of Wirtinger's calculus. To complete the proof of the proposition we compute the fraction that appears in the definition of the complex derivative:
\begin{align*}
\frac{f(c+h)-f(c)}{h}  &= \frac{1}{2}\left(\frac{\partial f}{\partial x}(c)  -i \frac{\partial f}{\partial y}(c)\right) +
\frac{1}{2}\left(\frac{\partial f}{\partial x}(c)  + i \frac{\partial f}{\partial y}(c)\right) \frac{h^*}{h} + \frac{o(|h|)}{h}
\end{align*}
Recall that for the limit $\lim_{h\rightarrow 0}\lambda\frac{h^*}{h}$ to exist, it is necessary that $\lambda=0$\footnote{To prove it, just set $h=r e^{i\theta}$ and let $r\rightarrow 0$, while keeping $\theta$ constant. Then $\lambda\frac{h^*}{h}=\lambda e^{-2i\theta}\rightarrow 0$, if and only if $\lambda=0$.}. Hence, since $o(|h|)/h\rightarrow 0$ as $h\rightarrow 0$, $f$ is differentiable in the complex sense, iff
\begin{align*}
\frac{\partial f}{\partial x}(c)  + i \frac{\partial f}{\partial y}(c)=0.
\end{align*}
However, according to our definition,
\begin{align*}
\frac{\partial f}{\partial x}(c)  + i \frac{\partial f}{\partial y}(c)= \left(\frac{\partial u}{\partial x}(c) - \frac{\partial v}{\partial y}(c)\right) + i \left(\frac{\partial v}{\partial x}(c) + \frac{\partial u}{\partial y}(c)\right).
\end{align*}
Thus, $f$ is differentiable in the complex sense, iff the Cauchy-Riemann conditions hold. Moreover, in this case:
\begin{align*}
f'(c) =& \frac{1}{2}\left(\frac{\partial f}{\partial x}(c)  -i \frac{\partial f}{\partial y}(c)\right)\\
         =& \frac{1}{2}\left(\frac{\partial u}{\partial x}(c) + i\frac{\partial v}{\partial x}(c)\right)
           - \frac{i}{2}\left(\frac{\partial u}{\partial y}(c) + i\frac{\partial v}{\partial y}(c)\right)\\
         =& \frac{1}{2}\left(\frac{\partial u}{\partial x}(c) + \frac{\partial v}{\partial y}(c)\right)
           + \frac{i}{2}\left(\frac{\partial v}{\partial x}(c) - \frac{\partial u}{\partial y}(c)\right)\\
         =& \frac{\partial u}{\partial x}(c) + i\frac{\partial v}{\partial x}(c)\\
         =& \frac{\partial v}{\partial y}(c) - i\frac{\partial u}{\partial y}(c).
\end{align*}
\end{proof}

More information on holomorphic functions and their relation to harmonic real functions may be found in \cite{Remmert, Nehari, MerkHatzi}.

%---------------------------------------------------------------------
\subsection{Wirtinger's Derivatives}\label{SEC:wirti_deriv}
In many practical applications we are dealing with functions $f$ that are not differentiable in the complex sense. For example, in minimization problems the cost function is real valued and thus cannot be complex-differentiable at every $x\in X$ (unless it is a constant function\footnote{This statement can be proved using the Cauchy Riemann conditions. For any real valued complex function $f$ defined on a , $v$ vanishes. Therefore the Cauchy-Riemann conditions dictate that $\partial u/\partial x = \partial u/\partial y=0$. Hence, $u$ must be a constant.}). In these cases, our only option is to work with the real derivatives of $u$ and $v$. However, this might make the computations of the gradients cumbersome and tedious. To cope with this problem, we will develop an alternative formulation which, although it is based on the real derivatives, it strongly resembles the notion of the complex derivative. In fact, if $f$ is differentiable in the complex sense, the developed derivatives will coincide with the complex ones. To provide some more insights into the ideas that lie behind the derivation of Wirtinger's Calculus, we present an alternative definition of a complex derivative, which we call the \textit{conjugate-complex derivative} at $c$. We shall say that $f$ is \textit{conjugate-complex differentiable} (or that it is differentiable in the \textit{conjugate-complex sense}) at $c$, if the limit
\begin{align}
f_*'(c)=\lim_{h^*\rightarrow 0} \frac{f(c+h^*) -f(c)}{h} = \lim_{h\rightarrow 0} \frac{f(c+h) -f(c)}{h^*}
\end{align}
exists. Following a procedure similar to the one presented in section \ref{SEC:CR_cond} we may prove the following propositions.

\noindent\rule[1ex]{\linewidth}{1pt}
\begin{proposition}
If the conjugate-complex derivative of $f$ at a point $c$ (i.e.,$f_*'(c)$) exists, then $u$ and $v$ are differentiable at the point $(c_1,c_2)$, where $c=c_1+c_2 i$. Furthermore,
\begin{align}\label{EQ:con_cauchy-riemann}
\frac{\partial u}{\partial x}(c_1,c_2)=-\frac{\partial v}{\partial y}(c_1,c_2) \textrm{ and } \frac{\partial u}{\partial y}(c_1,c_2)=\frac{\partial v}{\partial x}(c_1,c_2).
\end{align}
These are called the \textit{conjugate Cauchy Riemann conditions}.
\end{proposition}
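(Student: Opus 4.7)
The plan is to mirror the first proof of the earlier Cauchy--Riemann proposition, exploiting two special directions of approach and equating the resulting expressions for $f_*'(c)$. Since the limit $\lim_{h\to 0}(f(c+h)-f(c))/h^{*}$ exists by hypothesis, its value must be the same along every path. I will evaluate the difference quotient along the real axis ($h=h_1\in\R$) and along the imaginary axis ($h=ih_2$, $h_2\in\R$), each time splitting $f=u+iv$ into its real and imaginary parts, and then deduce (a) existence of the partial derivatives of $u$ and $v$, and (b) two complex-valued expressions for $f_*'(c)$.

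For the real-axis path $h=h_1$, we have $h^{*}=h_1$, so the quotient becomes
\begin{align*}
\frac{f(c+h_1)-f(c)}{h_1}=\frac{u(c_1+h_1,c_2)-u(c_1,c_2)}{h_1}+i\,\frac{v(c_1+h_1,c_2)-v(c_1,c_2)}{h_1}.
\end{align*}
Convergence of the left-hand side forces the real and imaginary parts of the right-hand side to converge, yielding the partial derivatives with respect to $x$ and the identity
\begin{align*}
f_*'(c)=\frac{\partial u}{\partial x}(c)+i\,\frac{\partial v}{\partial x}(c).
\end{align*}
For the imaginary-axis path $h=ih_2$, we have $h^{*}=-ih_2$, and the quotient becomes
\begin{align*}
\frac{f(c+ih_2)-f(c)}{-ih_2}=i\,\frac{u(c_1,c_2+h_2)-u(c_1,c_2)}{h_2}-\frac{v(c_1,c_2+h_2)-v(c_1,c_2)}{h_2},
\end{align*}
using $1/(-i)=i$. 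Again the existence of the limit gives the partial derivatives with respect to $y$ and the alternative expression
\begin{align*}
f_*'(c)=-\frac{\partial v}{\partial y}(c)+i\,\frac{\partial u}{\partial y}(c).
\end{align*}

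Equating real and imaginary parts of the two formulas for $f_*'(c)$ gives $\partial u/\partial x=-\partial v/\partial y$ and $\partial v/\partial x=\partial u/\partial y$ at $(c_1,c_2)$, which are precisely the conjugate Cauchy--Riemann conditions. There is no real obstacle in this argument; the only point requiring a little care is correctly handling the sign and the factor of $i$ coming from dividing by $-ih_2$ in the imaginary-direction computation (a slip there would flip both equations and destroy the claim). Note that this argument, like its classical analogue, only yields that the partial derivatives exist and satisfy the stated relations; full real-differentiability of $u$ and $v$ at $(c_1,c_2)$ would follow from the Taylor-style variant (as in the second proof of the earlier proposition), but since the statement here is phrased in the weaker pointwise-partial-derivative form used throughout the section, the path argument above already suffices.
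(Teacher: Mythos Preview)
Your proof is correct and is exactly the approach the paper intends: it explicitly states that this proposition is proved ``following a procedure similar to the one presented in section \ref{SEC:CR_cond}'', i.e., by specializing the limit along the real and imaginary axes as you do. Your computations and sign handling are accurate, and your closing remark about partial derivatives versus full real differentiability matches the paper's own conventions.
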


\begin{proposition}
If  $f$ is differentiable in the real sense at a point $(c_1,c_2)$ and the conjugate Cauchy-Riemann conditions hold,
then $f$ is differentiable in the conjugate-complex sense at the point $c=c_1+c_2 i$.
\end{proposition}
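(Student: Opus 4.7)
The plan is to mirror the proof of Proposition \ref{PRO:CR2} almost verbatim, with the sole difference that the difference quotient is formed with $h^*$ in the denominator instead of $h$. Since equation (\ref{EQ:wirti2}) was derived using only the real differentiability of $f$ at $c$ and nothing else, it is already available to us under the current hypotheses; I would invoke it as the starting point.

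Concretely, the first step is to divide (\ref{EQ:wirti2}) by $h^*$ rather than by $h$, obtaining
\begin{align*}
\frac{f(c+h)-f(c)}{h^*} = \frac{1}{2}\left(\frac{\partial f}{\partial x}(c) - i\frac{\partial f}{\partial y}(c)\right)\frac{h}{h^*} + \frac{1}{2}\left(\frac{\partial f}{\partial x}(c) + i\frac{\partial f}{\partial y}(c)\right) + \frac{o(|h|)}{h^*}.
\end{align*}
The error term $o(|h|)/h^*$ tends to $0$ as $h\rightarrow 0$, and the same footnote argument used earlier (setting $h = re^{i\theta}$ and letting $r\rightarrow 0$ with $\theta$ fixed, so that $h/h^* = e^{2i\theta}$) shows that the limit of $\lambda\, h/h^*$ exists if and only if $\lambda=0$. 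Hence $f$ is differentiable in the conjugate-complex sense at $c$ if and only if
\begin{align*}
\frac{\partial f}{\partial x}(c) - i\frac{\partial f}{\partial y}(c) = 0.
\end{align*}

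The final step is the algebraic verification that this vanishing condition is exactly the conjugate Cauchy--Riemann system. Expanding in terms of $u$ and $v$ gives
\begin{align*}
\frac{\partial f}{\partial x}(c) - i\frac{\partial f}{\partial y}(c) = \left(\frac{\partial u}{\partial x}(c)+\frac{\partial v}{\partial y}(c)\right) + i\left(\frac{\partial v}{\partial x}(c)-\frac{\partial u}{\partial y}(c)\right),
\end{align*}
whose real and imaginary parts vanish precisely under the hypothesis (\ref{EQ:con_cauchy-riemann}). Moreover, from the decomposition of the difference quotient one reads off
\begin{align*}
f_*'(c) = \frac{1}{2}\left(\frac{\partial f}{\partial x}(c) + i\frac{\partial f}{\partial y}(c)\right),
\end{align*}
which can be rewritten, using the conjugate CR equations, in various symmetric forms analogous to those derived at the end of Proposition \ref{PRO:CR2}.

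I do not anticipate any real obstacle, since all the substantive work already lives in equation (\ref{EQ:wirti2}) and in the observation about $\lim h/h^*$; the main thing to watch is bookkeeping of signs so that the emerging conditions come out as $\partial u/\partial x = -\partial v/\partial y$ and $\partial u/\partial y = \partial v/\partial x$ (conjugate CR), rather than the ordinary CR conditions.
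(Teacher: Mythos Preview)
Your proposal is correct and follows exactly the approach the paper intends: it states that this proposition is proved ``following a procedure similar to the one presented in section~\ref{SEC:CR_cond}'', i.e., by reusing equation~(\ref{EQ:wirti2}) and forming the difference quotient with $h^*$ in place of $h$. Your sign bookkeeping is accurate, and the identification of the vanishing condition with the conjugate Cauchy--Riemann equations is correct.
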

\noindent\rule[1ex]{\linewidth}{1pt}
If a function $f$ is differentiable in the conjugate-complex sense, at every point of an open set $A$, we will say that $f$ is \textit{conjugate holomorphic} on $A$. As in the case of the holomorphic functions, one may prove that similar strong results hold for conjugate-holomorphic functions. In particular, it can be shown that if $f_*'(z)$ exists for every $z$ in a neighborhood of $c$, then $f$ has a form of a Taylor series expansion around $c$, i.e.,
\begin{align}\label{EQ:con_complex_Taylor}
f(c+h) = \sum_{n=0}^{\infty} \frac{f_*^{(n)}(c)}{n!} (h^*)^n.
\end{align}
In this case, we will say that $f$ is \textit{conjugate-complex analytic} at $c$. Note, that if $f(z)$ is complex analytic at $c$, then $f^*(z)$ is conjugate-complex analytic at $c$.

It is evident that if neither the Cauchy Riemann conditions, nor the conjugate Cauchy-Riemann conditions are satisfied for a function $f$, then the complex derivatives cannot be exploited and the function cannot be expressed neither in terms of $h$ or $h^*$, as in the case of complex or conjugate-complex differentiable functions.  Nevertheless, if $f$ is differentiable in the real sense (i.e., $u$ and $v$ have partial derivatives), we may still find a form of Taylor's series expansion. Recall, for example, that in the proof of proposition \ref{PRO:CR2}, we concluded, based on the first order Taylor's series expansion of $u$, $v$, that (equation (\ref{EQ:wirti2})):
\begin{align*}
f(c+h) =f(c) + \frac{1}{2}\left(\frac{\partial f}{\partial x}(c)  -i \frac{\partial f}{\partial y}(c)\right) h +
\frac{1}{2}\left(\frac{\partial f}{\partial x}(c)  + i \frac{\partial f}{\partial y}(c)\right) h^* + o(|h|).
\end{align*}
One may notice that in the more general case, where $f$ is real-differentiable, it's Taylor's expansion is casted in terms of both $h$ and $h^*$. This can be generalized for higher order Taylor's expansion formulas following the same rationale. Observe also that, if $f$ is complex or conjugate-complex differentiable, this relation degenerates (due to the Cauchy Riemann conditions) to the respective Taylor's expansion formula (i.e., (\ref{EQ:complex_Taylor}) or \ref{EQ:con_complex_Taylor})). In this context, the following definitions come naturally.

We define the \textit{Wirtinger's derivative} (or \textit{W-derivative} for short) of $f$ at $c$ as follows
\begin{align}\label{EQ:wirti_der}
\frac{\partial f}{\partial z}(c) = \frac{1}{2}\left(\frac{\partial f}{\partial x}(c)  -i \frac{\partial f}{\partial y}(c)\right)
= \frac{1}{2}\left(\frac{\partial u}{\partial x}(c) + \frac{\partial v}{\partial y}(c)\right)
      + \frac{i}{2}\left(\frac{\partial v}{\partial x}(c) - \frac{\partial u}{\partial y}(c)\right).
\end{align}
Consequently, the \textit{conjugate Wirtinger's derivative} (or \textit{CW-derivative} for short) of $f$ at $c$ is defined by:
\begin{align}\label{EQ:conj_wirti_der}
\frac{\partial f}{\partial z^*}(c) = \frac{1}{2}\left(\frac{\partial f}{\partial x}(c)  +i \frac{\partial f}{\partial y}(c)\right)
= \frac{1}{2}\left(\frac{\partial u}{\partial x}(c) - \frac{\partial v}{\partial y}(c)\right)
      + \frac{i}{2}\left(\frac{\partial v}{\partial x}(c) + \frac{\partial u}{\partial y}(c)\right).
\end{align}
Note that both the W-derivative and the CW-derivative exist, if $f$ is differentiable in the real sense. In view of these new definitions, equation (\ref{EQ:wirti2}) may now be recasted as follows
\begin{align}\label{EQ:wirti3}
f(c+h) =f(c) + \frac{\partial f}{\partial z}(c) h +
\frac{\partial f}{\partial z^*}(c) h^* + o(|h|).
\end{align}

At first glance the definitions the W and CW derivatives seem rather obscure. Although, it is evident that they are defined so that that they are consistent with the Taylor's formula (equation (\ref{EQ:wirti2})), their computation seems quite difficult. However, this is not the case. We will show that they may be computed quickly using well-known differentiation rules. First, observe that if $f$ satisfies the Cauchy Riemann conditions then the W-derivative degenerates to the standard complex derivative. The following theorem establishes the fundamental property of W and CW derivatives. Its proof is rather obvious.

\noindent\rule[1ex]{\linewidth}{1pt}
\begin{theorem}\label{THE:wirti}
If $f$ is complex differentiable at $c$, then its W derivative degenerates to the standard complex derivative, while its CW derivative vanishes, i.e.,
\begin{align*}
\frac{\partial f}{\partial z}(c)=f'(c),\quad \frac{\partial f}{\partial z^*}(c)=0.
\end{align*}
Consequently, if $f$ is conjugate-complex differentiable at $c$, then its CW derivative degenerates to the standard conjugate-complex derivative, while its W derivative vanishes, i.e.,
\begin{align*}
\frac{\partial f}{\partial z^*}(c)=f'_*(c),\quad \frac{\partial f}{\partial z}(c)=0.
\end{align*}
\end{theorem}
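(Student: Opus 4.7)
The plan is to reduce the theorem to a direct substitution. Both the W- and CW-derivatives are defined purely in terms of the real partial derivatives of $u$ and $v$ via the expanded forms in (\ref{EQ:wirti_der}) and (\ref{EQ:conj_wirti_der}), so nothing more than algebra is required once the appropriate Cauchy-Riemann-type identities are in hand.

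For the first assertion, I would invoke Proposition 2.1: complex differentiability at $c$ guarantees both real differentiability at $(c_1,c_2)$ (so the W- and CW-derivatives actually exist) and the Cauchy-Riemann equalities $\partial u/\partial x = \partial v/\partial y$ and $\partial u/\partial y = -\partial v/\partial x$. Substituted into the real/imaginary decomposition of $\partial f/\partial z^*(c)$, the real part $\frac{1}{2}(\partial u/\partial x - \partial v/\partial y)$ and the imaginary part $\frac{1}{2}(\partial v/\partial x + \partial u/\partial y)$ both collapse to zero, yielding $\partial f/\partial z^*(c)=0$. The same substitutions turn the expanded form of $\partial f/\partial z(c)$ into $\partial u/\partial x(c) + i\,\partial v/\partial x(c)$, and the first proof of Proposition 2.1 already identified this quantity with $f'(c)$.

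The second assertion is completely parallel: Proposition 2.3 delivers the conjugate Cauchy-Riemann identities $\partial u/\partial x = -\partial v/\partial y$ and $\partial u/\partial y = \partial v/\partial x$, under which the expanded form of $\partial f/\partial z(c)$ vanishes term by term, while $\partial f/\partial z^*(c)$ reduces to $\partial u/\partial x(c) + i\,\partial v/\partial x(c)$; this coincides with $f_*'(c)$, as one sees by running the $h=h_1$ specialization inside the conjugate-complex difference quotient, mirroring the first proof of Proposition 2.1. There is no serious obstacle here; the only point that demands a little care is keeping track of which sign convention annihilates which operator, so that the Cauchy-Riemann identities kill $\partial/\partial z^*$ while the conjugate Cauchy-Riemann identities kill $\partial/\partial z$.
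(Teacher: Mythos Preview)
Your proposal is correct and is exactly the ``rather obvious'' verification the paper has in mind: the paper does not spell out a proof at all, merely remarking before the theorem that the W-derivative collapses to the complex derivative once the Cauchy--Riemann conditions hold, and leaving the rest implicit. Your direct substitution of the (conjugate) Cauchy--Riemann identities from Propositions~2.1 and~2.3 into the expanded forms~(\ref{EQ:wirti_der}) and~(\ref{EQ:conj_wirti_der}) is precisely how one makes this explicit, and your identification of the surviving expression with $f'(c)$ (respectively $f_*'(c)$) via the $h=h_1$ specialization is the right finishing touch.
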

\noindent\rule[1ex]{\linewidth}{1pt}

In the sequel, we will develop the main differentiation rules of Wirtinger's derivatives. Most of the proofs of the following properties are straightforward. Nevertheless, we present them all for completeness.

\noindent\rule[1ex]{\linewidth}{1pt}
\begin{proposition}\label{PRO:w_rule1}
If $f$ is  differentiable in the real sense at $c$, then
\begin{align}
\left(\frac{\partial f}{\partial z}(c)\right)^* &= \frac{\partial f^*}{\partial z^*}(c).\label{EQ:w_rule1}
\end{align}
\end{proposition}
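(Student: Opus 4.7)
The plan is to prove this by direct computation from the definitions in equations (\ref{EQ:wirti_der}) and (\ref{EQ:conj_wirti_der}), so the whole argument reduces to careful bookkeeping of real and imaginary parts and their partial derivatives. Since $f$ is differentiable in the real sense at $c$, all four partial derivatives $\partial u/\partial x$, $\partial u/\partial y$, $\partial v/\partial x$, $\partial v/\partial y$ exist at $(c_1,c_2)$, and this is exactly what is needed for both sides of (\ref{EQ:w_rule1}) to make sense.

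First I would take the W-derivative of $f$ using the second form of (\ref{EQ:wirti_der}) and then simply conjugate the resulting complex number, flipping the sign of the imaginary part, to obtain
\begin{align*}
\left(\frac{\partial f}{\partial z}(c)\right)^{*} = \frac{1}{2}\left(\frac{\partial u}{\partial x}(c) + \frac{\partial v}{\partial y}(c)\right) - \frac{i}{2}\left(\frac{\partial v}{\partial x}(c) - \frac{\partial u}{\partial y}(c)\right).
\end{align*}

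Next I would compute the right-hand side. The crucial observation is that the decomposition of $f^{*}$ into real and imaginary parts is $f^{*} = u + i(-v)$, so in the formula (\ref{EQ:conj_wirti_der}) applied to $f^{*}$ one must substitute $u$ for the real part but $-v$ for the imaginary part. After this substitution the signs on the $v$-derivatives flip, and a short rearrangement yields exactly the same expression obtained above. Comparing the two expressions term by term completes the proof.

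The only ``obstacle'' is the sign bookkeeping for the imaginary part of $f^{*}$; beyond that the statement is essentially a tautology coming directly from the definitions. For this reason I expect the proof to be only a few lines, and the same strategy (computing both sides in terms of the four real partial derivatives and comparing) will also yield the companion identity $(\partial f/\partial z^{*}(c))^{*} = \partial f^{*}/\partial z(c)$, even though it is not stated in this proposition.
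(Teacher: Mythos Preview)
Your proposal is correct and is essentially identical to the paper's own proof: the paper also conjugates the expression in (\ref{EQ:wirti_der}), rewrites it with $-v$ in place of $v$, and recognizes the result as (\ref{EQ:conj_wirti_der}) applied to $f^{*}=u+i(-v)$. Your remark about the companion identity is likewise on target, as the paper proves it as a separate proposition by the same sign bookkeeping.
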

\noindent\rule[1ex]{\linewidth}{1pt}

\begin{proof}
\begin{align*}
\left(\frac{\partial f}{\partial z}(c)\right)^* &=  \frac{1}{2}\left(\frac{\partial u}{\partial x}(c) + \frac{\partial v}{\partial y}(c)\right)
     - \frac{i}{2}\left(\frac{\partial v}{\partial x}(c) - \frac{\partial u}{\partial y}(c)\right)\\
&=  \frac{1}{2}\left(\frac{\partial u}{\partial x}(c) - \frac{\partial (-v)}{\partial y}(c)\right)
     + \frac{i}{2}\left(\frac{\partial (-v)}{\partial x}(c) + \frac{\partial u}{\partial y}(c)\right)\\
&=\frac{\partial f^*}{\partial z^*}(c).
\end{align*}
\end{proof}

\noindent\rule[1ex]{\linewidth}{1pt}
\begin{proposition}\label{PRO:w_rule2}
If $f$ is  differentiable in the real sense at $c$, then
\begin{align}
\left(\frac{\partial f}{\partial z^*}(c)\right)^* &= \frac{\partial f^*}{\partial z}(c).\label{EQ:w_rule2}
\end{align}
\end{proposition}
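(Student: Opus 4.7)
The plan is to imitate the proof of Proposition \ref{PRO:w_rule1} almost verbatim, swapping the roles of the W- and CW-derivatives. First I would write out $\frac{\partial f}{\partial z^*}(c)$ using definition (\ref{EQ:conj_wirti_der}), which expresses it in terms of the four real partials $\partial u/\partial x$, $\partial u/\partial y$, $\partial v/\partial x$, $\partial v/\partial y$ evaluated at $c$. Taking the complex conjugate of this expression then simply flips the sign of its imaginary part.

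Next I would rewrite the result by pulling the minus signs inside the partial derivatives, using the linearity of $\partial/\partial x$ and $\partial/\partial y$, so that every occurrence of $v$ is replaced by $-v$. Since $f^* = u + i(-v)$, the formula so obtained is exactly the right-hand side of (\ref{EQ:wirti_der}) applied to $f^*$, i.e.\ it coincides with $\frac{\partial f^*}{\partial z}(c)$. The only ancillary remark needed is that real-differentiability of $f$ at $c$ transfers to $f^*$ (its components are $u$ and $-v$), so $\frac{\partial f^*}{\partial z}(c)$ is well-defined. No real obstacle is anticipated; the argument is a line-by-line mirror of the previous proposition, with the interchange of $\partial/\partial z$ and $\partial/\partial z^*$ reflecting that taking a conjugate swaps holomorphic and antiholomorphic parts.
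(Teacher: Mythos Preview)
Your proposal is correct and matches the paper's proof essentially line for line: the paper also starts from definition (\ref{EQ:conj_wirti_der}), conjugates to flip the sign of the imaginary part, absorbs the minus signs into $-v$, and identifies the result as $\frac{\partial f^*}{\partial z}(c)$ via (\ref{EQ:wirti_der}).
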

\noindent\rule[1ex]{\linewidth}{1pt}
\begin{proof}
\begin{align*}
\left(\frac{\partial f}{\partial z^*}(c)\right)^* &=   \frac{1}{2}\left(\frac{\partial u}{\partial x}(c) - \frac{\partial v}{\partial y}(c)\right)
      - \frac{i}{2}\left(\frac{\partial v}{\partial x}(c) + \frac{\partial u}{\partial y}(c)\right)\\
&=   \frac{1}{2}\left(\frac{\partial u}{\partial x}(c) + \frac{\partial (-v)}{\partial y}(c)\right)
      + \frac{i}{2}\left(\frac{\partial (-v)}{\partial x}(c) - \frac{\partial u}{\partial y}(c)\right)\\
&=\frac{\partial f^*}{\partial z}(c).
\end{align*}
\end{proof}

\noindent\rule[1ex]{\linewidth}{1pt}
\begin{proposition}[Linearity]\label{PRO:w_linearity}
If $f$, $g$ are  differentiable in the real sense at $c$ and $\alpha, \beta\in\C$, then
\begin{align}
\frac{\partial (\alpha f + \beta g)}{\partial z}(c) &= \alpha\frac{\partial f}{\partial z}(c) + \beta\frac{\partial g}{\partial z}(c),\\
\frac{\partial (\alpha f + \beta g)}{\partial z^*}(c) &= \alpha\frac{\partial f}{\partial z^*}(c) + \beta\frac{\partial g}{\partial z^*}(c)
\end{align}
\end{proposition}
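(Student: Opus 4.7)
The plan is to reduce both identities to $\C$-linearity of the real partial-derivative operators $\partial/\partial x$ and $\partial/\partial y$ acting on complex-valued functions, and then substitute into the defining equations (\ref{EQ:wirti_der}) and (\ref{EQ:conj_wirti_der}), which express the W- and CW-derivatives as fixed $\C$-linear combinations of those operators. The whole argument will then be a few lines of substitution and regrouping.

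The first step is to verify the auxiliary claim that, for real-differentiable $f,g:X\subseteq\C\to\C$ and scalars $\alpha,\beta\in\C$,
$$\frac{\partial(\alpha f+\beta g)}{\partial x}(c)=\alpha\frac{\partial f}{\partial x}(c)+\beta\frac{\partial g}{\partial x}(c),$$
and analogously for $\partial/\partial y$. Additivity is immediate from additivity of the real partial derivatives applied to real and imaginary parts. For scalar multiplication, writing $\alpha=\alpha_1+i\alpha_2$ and $f=u+iv$ gives $\alpha f=(\alpha_1 u-\alpha_2 v)+i(\alpha_2 u+\alpha_1 v)$; differentiating component-wise in $x$ and regrouping recovers $\alpha\bigl(\partial u/\partial x+i\,\partial v/\partial x\bigr)=\alpha\,\partial f/\partial x$, under the convention for $\partial f/\partial x$ introduced in the proof of Proposition \ref{PRO:CR2}.

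Once this is in hand, both desired identities follow by direct substitution into (\ref{EQ:wirti_der}) and (\ref{EQ:conj_wirti_der}): applying $\partial/\partial x$ and $\partial/\partial y$ to $\alpha f+\beta g$, pulling $\alpha$ and $\beta$ out of each via the auxiliary claim, and combining with the fixed coefficients $\tfrac{1}{2}$ and $\mp i/2$ yields $\alpha\,\partial f/\partial z+\beta\,\partial g/\partial z$ and $\alpha\,\partial f/\partial z^{*}+\beta\,\partial g/\partial z^{*}$ respectively.

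The only subtle point, and the main (very modest) obstacle, is that $\alpha$ and $\beta$ are complex rather than real, so one cannot merely quote $\R$-linearity of real partial derivatives; the short computation in the second paragraph is exactly what upgrades $\R$-linearity to $\C$-linearity once the convention $\partial f/\partial x:=\partial u/\partial x+i\,\partial v/\partial x$ is adopted. Beyond this, the argument is purely mechanical.
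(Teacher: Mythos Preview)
Your argument is correct. It differs from the paper's proof in organization rather than substance. For the W-derivative, the paper performs the full brute-force expansion: it writes out the real and imaginary parts of $\alpha f+\beta g$, plugs into the definition $\frac{1}{2}(\partial u_r/\partial x+\partial v_r/\partial y)+\frac{i}{2}(\partial v_r/\partial x-\partial u_r/\partial y)$, and then factors the resulting sixteen terms by hand. Your approach isolates the one non-trivial step --- that $\partial/\partial x$ and $\partial/\partial y$, viewed as operators on $\C$-valued functions via the convention $\partial f/\partial x=\partial u/\partial x+i\,\partial v/\partial x$, are $\C$-linear --- and then both identities drop out of (\ref{EQ:wirti_der}) and (\ref{EQ:conj_wirti_der}) simultaneously. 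For the CW-derivative the paper does something different again: rather than repeat the expansion, it invokes the conjugation rules (Propositions~\ref{PRO:w_rule1} and~\ref{PRO:w_rule2}) to reduce the CW case to the already-proved W case. Your uniform treatment is shorter and more transparent; the paper's second half, on the other hand, illustrates how the conjugation identities can be used to transfer results from one derivative to the other, a technique it reuses in later propositions.
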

\noindent\rule[1ex]{\linewidth}{1pt}

\begin{proof}
Let $f(z)=f(x,y)=u_{f}(x,y) + i v_{f}(x,y)$, $g(z)=g(x,y)=u_{g}(x,y) + i v_{g}(x,y)$ be two complex functions and $\alpha, \beta\in\C$, such that $\alpha=\alpha_1 + i\alpha_2$, $\beta=\beta_1 + i\beta_2$. Then
\begin{align*}
 r(z) =& \alpha f(z) + \beta g(z) = (\alpha_1+ i\alpha_2)(u_{f}(x,y) + iv_{f}(x,y)) + (\beta_1+ i\beta_2)(u_{g}(x,y) + iv_{g}(x,y))\\
=& \left(\alpha_1 u_{f}(x,y) - \alpha_2 v_{f}(x,y) + \beta_1 u_{g}(x,y) -\beta_2 v_{g}(x,y)\right)\\
& + i\left(\alpha_1 v_{f}(x,y) +\alpha_2 u_{f}(x,y) + \beta_1 v_{g}(x,y) +\beta_2 u_{g}(x,y) \right).
\end{align*}
Thus, the W-derivative of $ r$ will be given by:
\begin{align*}
\frac{\partial r}{\partial z}(c) =& \frac{1}{2}\left(\frac{\partial u_{ r}}{\partial x}(c) + \frac{\partial v_{ r}}{\partial y}(c)\right)
      + \frac{i}{2}\left(\frac{\partial v_{ r}}{\partial x}(c) - \frac{\partial u_{ r}}{\partial y}(c)\right)\\
   =& \frac{1}{2}\left( \alpha_1 \frac{\partial u_{f}}{\partial x}(c) -\alpha_2 \frac{\partial v_{f}}{\partial x}(c) + \beta_1 \frac{\partial u_{g}}{\partial x}(c) -\beta_2 \frac{\partial v_{g}}{\partial x}(c)
+\alpha_1 \frac{\partial v_{f}}{\partial y}(c)  + \alpha_2 \frac{\partial u_{f}}{\partial y}(c)
+\beta_1 \frac{\partial v_{g}}{\partial y}(c) +\beta_2 \frac{\partial u_{g}}{\partial y}(c) \right)\\
&+\frac{i}{2}\left( \alpha_1 \frac{\partial v_{f}}{\partial x}(c) +\alpha_2 \frac{\partial u_{f}}{\partial x}(c) + \beta_1 \frac{\partial v_{g}}{\partial x}(c) +\beta_2 \frac{\partial u_{g}}{\partial x}(c)
  -\alpha_1 \frac{\partial u_{f}}{\partial y}(c)  + \alpha_2 \frac{\partial v_{f}}{\partial y}(c)
   -\beta_1 \frac{\partial u_{g}}{\partial y}(c) +\beta_2 \frac{\partial v_{g}}{\partial y}(c) \right)\\
=& \frac{1}{2}(\alpha_1+i\alpha_2) \frac{\partial u_{f}}{\partial x}(c)  + \frac{i}{2}(\alpha_1+i\alpha_2) \frac{\partial v_{f}}{\partial x}(c)
+ \frac{1}{2}(\beta_1+i\beta_2) \frac{\partial u_{g}}{\partial x}(c) + \frac{i}{2}(\beta_1+i\beta_2) \frac{\partial v_{g}}{\partial x}(c)\\
  & + \frac{1}{2}(\alpha_1+i\alpha_2) \frac{\partial v_{f}}{\partial y}(c) - \frac{i}{2}(\alpha_1+i\alpha_2) \frac{\partial u_{f}}{\partial y}(c)
  +\frac{1}{2}(\beta_1+i\beta_2) \frac{\partial v_{g}}{\partial y}(c) - \frac{i}{2}(\beta_1+i\beta_2) \frac{\partial u_{g}}{\partial y}(c)\\
=& \alpha\left(\frac{1}{2}\left(\frac{\partial u_{f}}{\partial x}(c) + \frac{\partial v_{f}}{\partial y}(c)\right) +  \frac{i}{2}\left(\frac{\partial v_{f}}{\partial x}(c) - \frac{\partial u_{f}}{\partial y}(c)\right)  \right)\\
&+\beta\left(\frac{1}{2}\left(\frac{\partial u_{g}}{\partial x}(c) + \frac{\partial v_{g}}{\partial y}(c)\right) +  \frac{i}{2}\left(\frac{\partial v_{g}}{\partial x}(c) - \frac{\partial u_{g}}{\partial y}(c)\right)  \right)\\
=& \alpha \frac{\partial f}{\partial z}(c) + \beta \frac{\partial g}{\partial z}(c).
\end{align*}
On the other hand, in view of Propositions \ref{PRO:w_rule2} and \ref{PRO:w_rule1} and the linearity property of the W-derivative, the CW-derivative of $ r$ at $c$ will be given by:
\begin{align*}
\frac{\partial r}{\partial z^*}(c) =& \frac{\partial(\alpha f+\beta g)}{\partial z^*}(c) = \left(\frac{\partial(\alpha f+\beta g)^*}{\partial z}(c)\right)^*\\
    =& \left(\frac{\partial(\alpha^*f^*+\beta^*g^*)}{\partial z}(c)\right)^* = \left(\alpha^*\frac{\partial f^*}{\partial z}(c) + \beta^*\frac{\partial g^*}{\partial z}(c) \right)^*\\
    =& \alpha \left(\frac{\partial f^*}{\partial z}(c)\right)^* + \beta \left(\frac{\partial g^*}{\partial z}(c)\right)^*
    = \alpha \frac{\partial f}{\partial z^*}(c) + \beta \frac{\partial g}{\partial z^*}(c).
\end{align*}
\end{proof}

\noindent\rule[1ex]{\linewidth}{1pt}
\begin{proposition}[Product Rule]\label{PRO:w_product}
If $f$, $g$ are  differentiable in the real sense at $c$, then
\begin{align}
\frac{\partial (f \cdot g)}{\partial z}(c) &= \frac{\partial f}{\partial z}(c)g(c) + f(c)\frac{\partial g}{\partial z}(c),\\
\frac{\partial (f \cdot g)}{\partial z^*}(c) &= \frac{\partial f}{\partial z^*}(c)g(c) + f(c)\frac{\partial g}{\partial z^*}(c).
\end{align}
\end{proposition}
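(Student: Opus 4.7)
The plan is to reduce the product rule for Wirtinger derivatives to the ordinary product rule for the partial derivatives $\partial/\partial x$ and $\partial/\partial y$ of complex-valued functions. First I would establish, as a preliminary observation, that if $f$ and $g$ are differentiable in the real sense at $c$, then the componentwise real product rule applied to $fg = (u_f u_g - v_f v_g) + i(u_f v_g + v_f u_g)$ yields
\begin{align*}
\frac{\partial (fg)}{\partial x}(c) = \frac{\partial f}{\partial x}(c)\,g(c) + f(c)\,\frac{\partial g}{\partial x}(c), \qquad
\frac{\partial (fg)}{\partial y}(c) = \frac{\partial f}{\partial y}(c)\,g(c) + f(c)\,\frac{\partial g}{\partial y}(c).
\end{align*}
This is a routine verification: expand both sides into their four real components and use the standard Leibniz rule on each term.

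Next, I would substitute these two identities into the definition
\begin{align*}
\frac{\partial (fg)}{\partial z}(c) = \frac{1}{2}\left(\frac{\partial (fg)}{\partial x}(c) - i\,\frac{\partial (fg)}{\partial y}(c)\right)
\end{align*}
from (\ref{EQ:wirti_der}) and simply factor the result. The cross-terms involving $g(c)$ collect into $\tfrac{1}{2}(\partial f/\partial x - i\,\partial f/\partial y)(c)\,g(c) = (\partial f/\partial z)(c)\,g(c)$, and the terms involving $f(c)$ collect into $f(c)\,(\partial g/\partial z)(c)$, proving the first equality.

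For the CW-derivative, rather than repeat the computation I would invoke Propositions~\ref{PRO:w_rule1} and \ref{PRO:w_rule2} together with the first identity already proved. Explicitly,
\begin{align*}
\frac{\partial (fg)}{\partial z^*}(c) = \left(\frac{\partial (fg)^*}{\partial z}(c)\right)^{\!*} = \left(\frac{\partial (f^*g^*)}{\partial z}(c)\right)^{\!*} = \left(\frac{\partial f^*}{\partial z}(c)\,g^*(c) + f^*(c)\,\frac{\partial g^*}{\partial z}(c)\right)^{\!*},
\end{align*}
and conjugating back with Proposition~\ref{PRO:w_rule2} gives the desired formula. There is no real obstacle here: the only thing to be careful about is that the product rule for $\partial/\partial x$ on complex-valued functions is valid because multiplication in $\C$ is bilinear over $\R$, so the usual real Leibniz rule propagates through the real and imaginary parts without any conjugation issues; once that is noted, the rest is linear algebra.
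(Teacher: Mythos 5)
Your proof is correct, and for the first identity it takes a somewhat different route than the paper. The paper works directly with the real-component form of the definition in (\ref{EQ:wirti_der}): it writes $fg=(u_fu_g-v_fv_g)+i(u_fv_g+v_fu_g)$, expands all the partials of $u_{fg}$ and $v_{fg}$ by the real Leibniz rule, and then carries out a lengthy factorization (using $1/i=-i$) to reassemble $g(c)\,\frac{\partial f}{\partial z}(c)+f(c)\,\frac{\partial g}{\partial z}(c)$. You instead isolate as a lemma the Leibniz rule for the complex-valued partials $\partial/\partial x$ and $\partial/\partial y$ (valid, as you note, because multiplication in $\C$ is $\R$-bilinear, so the componentwise real product rule recombines without conjugation issues), after which the W-derivative product rule follows in one line from the form $\frac{\partial}{\partial z}=\frac{1}{2}\left(\frac{\partial}{\partial x}-i\frac{\partial}{\partial y}\right)$ of the definition; this form is legitimately available, since $\frac{\partial f}{\partial x}$ and $\frac{\partial f}{\partial y}$ are introduced in the proof of Proposition \ref{PRO:CR2} and appear explicitly in (\ref{EQ:wirti_der}). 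The computational content is the same in both arguments -- everything ultimately rests on the real Leibniz rule applied to the four real component products -- but your packaging replaces the paper's long expansion-and-factorization with a short collection step, and it also makes evident that $fg$ is again differentiable in the real sense, which both proofs use implicitly. For the CW-derivative your argument coincides with the paper's: conjugate, apply the W-rule to $f^*g^*$, and conjugate back via Propositions \ref{PRO:w_rule1} and \ref{PRO:w_rule2}.
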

\noindent\rule[1ex]{\linewidth}{1pt}

\begin{proof}
Let $f(z)=f(x,y)=u_{f}(x,y) + i v_{f}(x,y)$, $g(z)=g(x,y)=u_{g}(x,y) + i v_{g}(x,y)$ be two complex functions differentiable at $c$. Consider the complex function $ r$ defined as $ r(z)=f(z)g(z)$. Then
\begin{align*}
 r(z)=(u_{f}(z)+i v_{g}(z))(u_{g}(z) + i v_{g}(z))=(u_{f}u_{g}-v_{f}v_{g}) + i(u_{f}v_{g}+v_{f}u_{g}).
\end{align*}
Hence the W-derivative of $ r$ at $c$ is given by:
\begin{align*}
\frac{\partial  r}{\partial z}(c) =& \frac{1}{2}\left(\frac{\partial u_{ r}}{\partial x}(c) + \frac{\partial v_{ r}}{\partial y}(c) \right) + \frac{i}{2}\left(\frac{\partial v_{ r}}{\partial x}(c) - \frac{\partial u_{ r}}{\partial y}(c) \right)\\
=& \frac{1}{2}\left(\frac{\partial (u_{f}u_{g}-v_{f}v_{g})}{\partial x}(c) + \frac{\partial (u_{f}v_{g}+v_{f}u_{g})}{\partial y}(c) \right) + \frac{i}{2}\left(\frac{\partial (u_{f}v_{g}+v_{f}u_{g})}{\partial x}(c) - \frac{\partial (u_{f}u_{g}-v_{f}v_{g})}{\partial y}(c) \right)\\
=& \frac{1}{2}\left(\frac{\partial (u_{f}u_{g})}{\partial x}(c) - \frac{\partial (v_{f}v_{g})}{\partial x}(c)
+ \frac{\partial (u_{f}v_{g})}{\partial y}(c) + \frac{\partial (v_{f}u_{g})}{\partial y}(c) \right) \\
&+ \frac{i}{2}\left(\frac{\partial (u_{f}v_{g})}{\partial x}(c) + \frac{\partial (v_{f}u_{g})}{\partial x}(c)
- \frac{\partial (u_{f}u_{g})}{\partial y}(c) +  \frac{\partial (v_{f}v_{g})}{\partial y}(c) \right)
\end{align*}
\begin{align*}
=& \frac{1}{2}\left(\frac{\partial u_{f}}{\partial x}(c)u_{g}(c) + \frac{\partial u_{g}}{\partial x}(c)u_{f}(c)
- \frac{\partial v_{f}}{\partial x}(c)v_{g}(c) - \frac{\partial v_{g}}{\partial x}(c)v_{f}(c) \right.\\
&\left. + \frac{\partial u_{f}}{\partial y}(c)v_{g}(c) + \frac{\partial v_{g}}{\partial y}(c)u_{f}(c)
+ \frac{\partial v_{f}}{\partial y}(c)u_{g}(c) + \frac{\partial u_{g}}{\partial y}(c)v_{f}(c)   \right)  \\
&+ \frac{i}{2}\left(\frac{\partial u_{f}}{\partial x}(c)v_{g}(c) + \frac{\partial v_{g}}{\partial x}(c)u_{f}(c)
+ \frac{\partial v_{f}}{\partial x}(c)u_{g}(c) + \frac{\partial u_{g}}{\partial x}(c)v_{f}(c)\right.\\
&\left. - \frac{\partial u_{f}}{\partial y}(c)u_{g}(c)  - \frac{\partial u_{g}}{\partial y}(c) u_{f}(c)
+  \frac{\partial v_{f}}{\partial y}(c)v_{g}(c)   +  \frac{\partial v_{g}}{\partial y}(c)v_{f}(c) \right).
\end{align*}
After factorization we obtain:
\begin{align*}
\frac{\partial  r}{\partial z}(c)
=& u_{g}(c)\left(\frac{1}{2}\left(\frac{\partial u_{f}}{\partial x}(c) +\frac{\partial v_{f}}{\partial y}(c)\right) + \frac{i}{2}\left(\frac{\partial v_{f}}{\partial x}(c) -\frac{\partial u_{f}}{\partial y}(c)\right)\right)\\
&+ v_{g}(c)\left(\frac{1}{2}\left(-\frac{\partial v_{f}}{\partial x}(c) +\frac{\partial u_{f}}{\partial y}(c)\right) + \frac{i}{2}\left(\frac{\partial u_{f}}{\partial x}(c) +\frac{\partial v_{f}}{\partial y}(c)\right)\right)\\
&+ u_{f}(c)\left(\frac{1}{2}\left(\frac{\partial u_{g}}{\partial x}(c) +\frac{\partial v_{g}}{\partial y}(c)\right) + \frac{i}{2}\left(\frac{\partial v_{g}}{\partial x}(c) -\frac{\partial u_{g}}{\partial y}(c)\right)\right)\\
&+ v_{f}(c)\left(\frac{1}{2}\left(-\frac{\partial v_{g}}{\partial x}(c) +\frac{\partial u_{g}}{\partial y}(c)\right) + \frac{i}{2}\left(\frac{\partial u_{g}}{\partial x}(c) +\frac{\partial v_{g}}{\partial y}(c)\right)\right).
\end{align*}
Applying the simple rule $1/i=-i$, we take:
\begin{align*}
\frac{\partial  r}{\partial z}(c)
=& u_{g}(c)\left(\frac{1}{2}\left(\frac{\partial u_{f}}{\partial x}(c) +\frac{\partial v_{f}}{\partial y}(c)\right) + \frac{i}{2}\left(\frac{\partial v_{f}}{\partial x}(c) -\frac{\partial u_{f}}{\partial y}(c)\right)\right)\\
&+ i v_{g}(c)\left(\frac{1}{2}\left(\frac{\partial u_{f}}{\partial x}(c) +\frac{\partial v_{f}}{\partial y}(c)\right) + \frac{i}{2}\left(\frac{\partial v_{f}}{\partial x}(c) - \frac{\partial u_{f}}{\partial y}(c)\right)\right)\\
&+ u_{f}(c)\left(\frac{1}{2}\left(\frac{\partial u_{g}}{\partial x}(c) +\frac{\partial v_{g}}{\partial y}(c)\right) + \frac{i}{2}\left(\frac{\partial v_{g}}{\partial x}(c) -\frac{\partial u_{g}}{\partial y}(c)\right)\right)\\
&+ i v_{f}(c)\left(\frac{1}{2}\left(\frac{\partial u_{g}}{\partial x}(c) +\frac{\partial v_{g}}{\partial y}(c)\right) + \frac{i}{2}\left(\frac{\partial v_{g}}{\partial x}(c) -\frac{\partial u_{g}}{\partial y}(c)\right)\right)\\
=& (u_{g}(c) + i v_{g})\frac{\partial f}{\partial z}(c) + (u_{f}(c) + i v_{f})\frac{\partial g}{\partial z}(c),
\end{align*}
which gives the result.

The product rule of the CW-derivative follows from the product rule of the W-derivative and Propositions \ref{PRO:w_rule1}, \ref{PRO:w_rule2} as follows:
\begin{align*}
\frac{\partial (fg)}{\partial z^*}(c) &= \left(\frac{\partial (fg)^*}{\partial z}(c)\right)^*
=\left(\frac{\partial f^*}{\partial z}(c)g^*(c) + \frac{\partial g^*}{\partial z}(c)f^*(c)\right)^*\\
&= \frac{\partial f}{\partial z^*}(c)g(c) + \frac{\partial g}{\partial z^*}(c)f(c).
\end{align*}
\end{proof}

\begin{lemma}[Reciprocal Rule]\label{PRO:w_reciprocal}
If $f$ is  differentiable in the real sense at $c$ and $f(c)\not=0$, then
\begin{align}
\frac{\partial(\frac{1}{f})}{\partial z}(c) &= - \frac{\frac{\partial f}{\partial z}(c)}{f^2(c)},\\
\frac{\partial(\frac{1}{f})}{\partial z^*}(c) &= - \frac{\frac{\partial f}{\partial z^*}(c)}{f^2(c)}.
\end{align}
\end{lemma}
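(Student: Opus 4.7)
The plan is to derive the reciprocal rule from the product rule (Proposition \ref{PRO:w_product}) applied to the identity $f \cdot (1/f) = 1$, exactly as one does in elementary real calculus.

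First, I would verify the prerequisite for invoking the product rule, namely that $1/f$ is itself differentiable in the real sense at $c$. Writing $f = u + iv$ with $u,v$ possessing partial derivatives at $(c_1,c_2)$, and using $f(c)\neq 0$ (so that $|f(c)|^2 = u(c)^2 + v(c)^2 \neq 0$), we have
\begin{align*}
\frac{1}{f} = \frac{f^*}{|f|^2} = \frac{u}{u^2+v^2} - i\frac{v}{u^2+v^2},
\end{align*}
whose real and imaginary parts are rational expressions in $u,v$ with a nonvanishing denominator in a neighborhood of $c$. Hence they admit partial derivatives at $(c_1,c_2)$, so $1/f$ is differentiable in the real sense at $c$.

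Next, observe that the constant function $1$ has $u \equiv 1$ and $v \equiv 0$, so by definition (\ref{EQ:wirti_der}) and (\ref{EQ:conj_wirti_der}) both its W- and CW-derivatives vanish identically. Applying the product rule to $f\cdot(1/f) \equiv 1$ therefore gives
\begin{align*}
0 = \frac{\partial(f \cdot \tfrac{1}{f})}{\partial z}(c) = \frac{\partial f}{\partial z}(c)\cdot\frac{1}{f(c)} + f(c)\cdot\frac{\partial(1/f)}{\partial z}(c),
\end{align*}
which I would solve for $\partial(1/f)/\partial z(c)$, yielding the claimed formula. The identical argument with $\partial/\partial z^*$ in place of $\partial/\partial z$ handles the CW-derivative.

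I do not expect any real obstacle here: both computations are one-line consequences of the product rule once the real-differentiability of $1/f$ is justified. The only subtle point worth stating explicitly is the latter justification, since Proposition \ref{PRO:w_product} requires both factors to be real-differentiable at $c$; everything else is purely algebraic.
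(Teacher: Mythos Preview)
Your proposal is correct and logically sound; in particular, the product rule (Proposition~\ref{PRO:w_product}) is already established before this lemma, so invoking it is legitimate, and your verification that $1/f$ is real-differentiable at $c$ is exactly the prerequisite that needs to be checked.

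The paper, however, takes a different route: it carries out a direct brute-force computation. Writing $r=1/f$ with real and imaginary parts $u_f/(u_f^2+v_f^2)$ and $-v_f/(u_f^2+v_f^2)$, it computes all four partial derivatives $\partial u_r/\partial x$, $\partial u_r/\partial y$, $\partial v_r/\partial x$, $\partial v_r/\partial y$ explicitly via the real quotient rule, substitutes them into the definition (\ref{EQ:wirti_der}), and then factors the resulting expression to recognize $-\frac{\partial f}{\partial z}(c)/f(c)^2$. For the CW-derivative it then appeals to Propositions~\ref{PRO:w_rule1} and~\ref{PRO:w_rule2} rather than repeating the computation. Your argument is considerably shorter and more conceptual, mirroring the standard real-calculus derivation; the paper's approach is longer but entirely self-contained from the definitions, not relying on the product rule as an intermediate step. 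Both are valid, and yours is arguably the more natural once Proposition~\ref{PRO:w_product} is in hand.
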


\begin{proof}
Let $f(z)=u_{f}(x,y) + i v_{f}(x,y)$ be a complex function,  differentiable in the real sense at $c$, such that $f(c)\not=0$. Consider the function $ r(z)=1/f(z)$. Then
\begin{align*}
 r(z)= \frac{u_{f}(z)}{u^2_{f}(z) + v^2_{f}(z)}  -   i\frac{v_{f}(z)}{u^2_{f}(z) + v^2_{f}(z)}.
\end{align*}
For the partial derivatives of $u_{ r}$, $v_{ r}$ we have:
\begin{align*}
\frac{\partial u_{ r}}{\partial x}(c) =& \frac{\frac{\partial u_{f}}{\partial x}(c)(u^2_{f}(c) + v^2_{f}(c)) -2u^2_{f}(c)\frac{\partial u_{f}}{\partial x}(c) - 2u_{f}(c)v_{f}(c)\frac{\partial v_{f}}{\partial x}(c)} {(u^2_{f}(c) + v^2_{f}(c))^2},\\
\frac{\partial u_{ r}}{\partial y}(c) =& \frac{\frac{\partial u_{f}}{\partial y}(c)(u^2_{f}(c) + v^2_{f}(c)) -2u^2_{f}(c)\frac{\partial u_{f}}{\partial y}(c) - 2u_{f}(c)v_{f}(c)\frac{\partial v_{f}}{\partial y}(c)} {(u^2_{f}(c) + v^2_{f}(c))^2},\\
\frac{\partial v_{ r}}{\partial x}(c) =& - \frac{\frac{\partial v_{f}}{\partial x}(c)(u^2_{f}(c) + v^2_{f}(c)) -2v^2_{f}(c)\frac{\partial v_{f}}{\partial x}(c) - 2u_{f}(c)v_{f}(c)\frac{\partial u_{f}}{\partial x}(c)} {(u^2_{f}(c) + v^2_{f}(c))^2},\\
\frac{\partial v_{ r}}{\partial y}(c) =& - \frac{\frac{\partial v_{f}}{\partial y}(c)(u^2_{f}(c) + v^2_{f}(c)) -2v^2_{f}(c)\frac{\partial v_{f}}{\partial y}(c) - 2u_{f}(c)v_{f}(c)\frac{\partial u_{f}}{\partial y}(c)} {(u^2_{f}(c) + v^2_{f}(c))^2}.
\end{align*}
Therefore,
\begin{align*}
\frac{\partial  r}{\partial z}(c) = & \frac{1}{2}\left(\frac{\partial u_{ r}}{\partial x}(c) + \frac{\partial v_{ r}}{\partial y}(c)\right) + \frac{i}{2}\left( \frac{\partial v_{ r}}{\partial x}(c) - \frac{\partial u_{ r}}{\partial y}(c) \right)\\
= & \frac{1}{2(u^2_{f}(c) + v^2_{f}(c))^2}\left( \frac{\partial u_{f}}{\partial x}(c) \left(-u^2_{f}(c) + v^2_{f}(c) + 2iu_{f}(c)v_{f}(c) \right) +
 \frac{\partial v_{f}}{\partial x}(c) \left(-2u_{f}(c)v_{f}(c) - iu^2_{f}+i v^2_{f}(c)\right)   \right.\\
 & \left.  + \frac{\partial v_{f}}{\partial y}(c) \left(-u^2_{f}(c)+v^2_{f}(c)+2i u_{f}(c)v_{f}(c)\right)
 + \frac{\partial u_{f}}{\partial y}(c) \left(2 u_{f}(c)v_{f}(c) + iu^2_{f}(c) - iv^2_{f}(c)\right)    \right)\\
= &  \frac{u^2_{f}(c) - v^2_{f}(c) - 2iu_{f}(c)v_{f}(c)}{2(u^2_{f}(c) + v^2_{f}(c))^2}
 \left( -  \left( \frac{\partial u_{f}}{\partial x}(c) + \frac{\partial v_{f}}{\partial y}(c)\right) - i \left( \frac{\partial v_{f}}{\partial x}(c) - \frac{\partial u_{f}}{\partial y}(c)  \right) \right)\\
= & - \frac{\frac{\partial f}{\partial z}(c)}{f^2(c)}.
\end{align*}

To prove the corresponding rule of the CW-derivative we apply the reciprocal rule of the W-derivative as well as Propositions \ref{PRO:w_rule1}, \ref{PRO:w_rule2}:
\begin{align*}
\frac{\partial(\frac{1}{f})}{\partial z^*}(c) &= \left(\frac{\partial(\frac{1}{f^*})}{\partial z}(c)\right)^*
= \left( - \frac{\frac{\partial f^*}{\partial z}(c)}{\left(f^*(c)\right)^2}\right)^*
=  - \frac{\frac{\partial f}{\partial z^*}(c)}{\left(f(c)\right)^2}.
\end{align*}
\end{proof}

\noindent\rule[1ex]{\linewidth}{1pt}
\begin{proposition}[Division Rule]\label{PRO:w_division}
If $f$, $g$ are  differentiable in the real sense at $c$ and $g(c)\not=0$, then
\begin{align}
\frac{\partial(\frac{f}{g})}{\partial z}(c) &= \frac{\frac{\partial f}{\partial z}(c) g(c) - f(c)\frac{\partial g}{\partial z}(c)}{g^2(c)},\\
\frac{\partial(\frac{f}{g})}{\partial z}(c) &= \frac{\frac{\partial f}{\partial z^*}(c) g(c) - f(c)\frac{\partial g}{\partial z^*}(c)}{g^2(c)}.
\end{align}
\end{proposition}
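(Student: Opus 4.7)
The plan is to derive the Division Rule as a direct consequence of the Product Rule (Proposition \ref{PRO:w_product}) and the Reciprocal Rule (Lemma \ref{PRO:w_reciprocal}), rather than by brute-force computation with the partial derivatives of $u_r, v_r$ where $r = f/g$. The latter approach would be analogous to what was done for the Reciprocal Rule and is feasible, but it is notationally heavy and entirely avoidable.

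First, I would write $f/g = f \cdot (1/g)$. Since $g$ is differentiable in the real sense at $c$ with $g(c) \neq 0$, Lemma \ref{PRO:w_reciprocal} guarantees that $1/g$ is well-defined near $c$ and its W- and CW-derivatives at $c$ exist and equal
\begin{align*}
\frac{\partial(1/g)}{\partial z}(c) = -\frac{\frac{\partial g}{\partial z}(c)}{g^2(c)}, \qquad \frac{\partial(1/g)}{\partial z^*}(c) = -\frac{\frac{\partial g}{\partial z^*}(c)}{g^2(c)}.
\end{align*}
Since both $f$ and $1/g$ are differentiable in the real sense at $c$, I can then apply the Product Rule (Proposition \ref{PRO:w_product}) to the product $f \cdot (1/g)$:
\begin{align*}
\frac{\partial(f/g)}{\partial z}(c) = \frac{\partial f}{\partial z}(c) \cdot \frac{1}{g(c)} + f(c) \cdot \frac{\partial(1/g)}{\partial z}(c).
\end{align*}
Substituting the reciprocal expression and putting everything over the common denominator $g^2(c)$ produces
\begin{align*}
\frac{\partial(f/g)}{\partial z}(c) = \frac{\frac{\partial f}{\partial z}(c) g(c) - f(c) \frac{\partial g}{\partial z}(c)}{g^2(c)},
\end{align*}
which is the claimed formula. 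The CW-derivative formula follows by the identical argument using the CW-versions of Propositions \ref{PRO:w_product} and \ref{PRO:w_reciprocal}; alternatively, one can obtain it from the W-derivative formula by conjugation using Propositions \ref{PRO:w_rule1} and \ref{PRO:w_rule2}, exactly as the product rule for the CW-derivative was deduced in the proof of Proposition \ref{PRO:w_product}.

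There is essentially no hard step here, since both input results have already been established. The only minor point worth verifying explicitly is that $1/g$ is itself differentiable in the real sense in a neighborhood of $c$, so that the Product Rule applies; this is immediate from $g(c) \neq 0$ together with the continuity of $g$ (hence of $u_g, v_g$) at $c$, which ensures $u_g^2 + v_g^2$ stays bounded away from zero near $c$ and that the quotient formulas for partial derivatives of $1/g$ used in Lemma \ref{PRO:w_reciprocal} are valid on an open neighborhood of $c$.
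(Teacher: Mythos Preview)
Your proposal is correct and matches the paper's approach exactly: the paper's proof consists of the single sentence ``It follows immediately from the multiplication rule and the reciprocal rule $\left(\frac{f(c)}{g(c)}=f(c)\cdot\frac{1}{g(c)}\right)$.'' Your write-up simply fills in the one-line computation and adds the (harmless) remark about real differentiability of $1/g$ near $c$.
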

\noindent\rule[1ex]{\linewidth}{1pt}

\begin{proof}
It follows immediately from the multiplication rule and the reciprocal rule $\left(\frac{f(c)}{g(c)}=f(c)\cdot\frac{1}{g(c)}\right)$.
\end{proof}

\noindent\rule[1ex]{\linewidth}{1pt}
\begin{proposition}[Chain Rule]\label{PRO:w_chain}
If $f$ is  differentiable in the real sense at $c$ and $g$ is differentiable in the real sense at $f(c)$, then
\begin{align}
\frac{\partial g\circ f}{\partial z}(c) &= \frac{\partial g}{\partial z}(f(c))\frac{\partial f}{\partial z}(c) + \frac{\partial g}{\partial z^*}(f(c))\frac{\partial f^*}{\partial z}(c),\\
\frac{\partial g\circ f}{\partial z^*}(c) &= \frac{\partial g}{\partial z}(f(c))\frac{\partial f}{\partial z^*}(c) + \frac{\partial g}{\partial z^*}(f(c))\frac{\partial f^*}{\partial z^*}(c).
\end{align}
\end{proposition}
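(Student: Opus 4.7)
The natural tool is equation (\ref{EQ:wirti3}), the first-order Wirtinger Taylor expansion, which we have for both $f$ at $c$ and $g$ at $w=f(c)$:
\begin{align*}
f(c+h) &= f(c) + \tfrac{\partial f}{\partial z}(c)\, h + \tfrac{\partial f}{\partial z^*}(c)\, h^* + o(|h|),\\
g(w+k) &= g(w) + \tfrac{\partial g}{\partial z}(w)\, k + \tfrac{\partial g}{\partial z^*}(w)\, k^* + o(|k|).
\end{align*}
The plan is to set $k := f(c+h)-f(c)$, substitute into the expansion of $g$, collect the terms of order $h$ and $h^*$, and read off the two derivatives by the uniqueness of the Wirtinger expansion.

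First, I would note that $|k|\le (|\partial f/\partial z(c)|+|\partial f/\partial z^*(c)|)|h|+o(|h|)$, so $|k|=O(|h|)$, which lets me rewrite $o(|k|)$ as $o(|h|)$ as $h\to 0$. Next, conjugating the expansion of $f$ and using Propositions \ref{PRO:w_rule1} and \ref{PRO:w_rule2},
\[
k^* = \tfrac{\partial f^*}{\partial z}(c)\, h + \tfrac{\partial f^*}{\partial z^*}(c)\, h^* + o(|h|).
\]
Substituting both expressions for $k$ and $k^*$ into the expansion of $g(w+k)$ and grouping,
\begin{align*}
(g\circ f)(c+h) = (g\circ f)(c)
&+ \Bigl(\tfrac{\partial g}{\partial z}(f(c))\tfrac{\partial f}{\partial z}(c) + \tfrac{\partial g}{\partial z^*}(f(c))\tfrac{\partial f^*}{\partial z}(c)\Bigr) h\\
&+ \Bigl(\tfrac{\partial g}{\partial z}(f(c))\tfrac{\partial f}{\partial z^*}(c) + \tfrac{\partial g}{\partial z^*}(f(c))\tfrac{\partial f^*}{\partial z^*}(c)\Bigr) h^* + o(|h|).
\end{align*}

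Finally, I would invoke the uniqueness lemma: if a real-differentiable function $r$ admits an expansion $r(c+h)=r(c)+Ah+Bh^*+o(|h|)$, then $A=\partial r/\partial z(c)$ and $B=\partial r/\partial z^*(c)$. This is obtained by setting $h=h_1\in\R$ and $h=ih_2$, $h_2\in\R$, to recover $\partial r/\partial x(c)=A+B$ and $\partial r/\partial y(c)=i(A-B)$, and then combining via the definitions (\ref{EQ:wirti_der}) and (\ref{EQ:conj_wirti_der}). Reading off $A$ and $B$ from the display above yields the two chain-rule identities simultaneously.

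The only delicate step is the bookkeeping on the remainder: one must verify that the composition preserves the $o(|h|)$ behaviour, i.e.\ that $o(|k|)$ terms and products of $o(|h|)$ factors with bounded Wirtinger derivatives can all be absorbed into a single $o(|h|)$ term. The algebra of identifying the coefficients and applying Propositions \ref{PRO:w_rule1}--\ref{PRO:w_rule2} is routine; the substantive content is that the first-order Wirtinger expansion is the correct linearisation for composing real-differentiable complex functions.
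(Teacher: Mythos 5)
Your proof is correct, but it takes a genuinely different route from the paper's. The paper proves the W-rule by direct computation: it writes $g\circ f$ as $u_{g}(u_{f},v_{f})+iv_{g}(u_{f},v_{f})$, applies the ordinary real chain rule to the four partial derivatives, expands the products $\frac{\partial g}{\partial z}(f(c))\frac{\partial f}{\partial z}(c)$ and $\frac{\partial g}{\partial z^*}(f(c))\frac{\partial f^*}{\partial z}(c)$ into sixteen terms each, and verifies after cancellation that the sum matches the defining expression (\ref{EQ:wirti_der}) for $\frac{\partial (g\circ f)}{\partial z}(c)$; the CW-rule is then obtained by conjugation via Propositions \ref{PRO:w_rule1} and \ref{PRO:w_rule2}, exactly as you do when forming $k^*$. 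You instead compose the two first-order expansions (\ref{EQ:wirti3}), control the remainder via $|k|=O(|h|)$, and identify the coefficients of $h$ and $h^*$ through a uniqueness lemma; this yields both identities simultaneously and exposes the structural reason the rule holds (the Wirtinger expansion is the correct linearisation), whereas the paper's argument stays entirely at the level of real partial derivatives and needs no discussion of remainders or uniqueness. Your uniqueness lemma is sound as sketched (restrict to $h=h_1$ and $h=ih_2$ and combine via (\ref{EQ:wirti_der})--(\ref{EQ:conj_wirti_der})), and the remainder bookkeeping goes through since the coefficients are fixed constants. The one caveat---shared by the paper's own proof, which invokes the multivariable chain rule for the composition---is that both arguments implicitly use ``differentiable in the real sense'' in the strong (total) sense at the relevant points: the expansion (\ref{EQ:wirti3}) for $g$ at $f(c)$, like the real chain rule, is not guaranteed by mere existence of partial derivatives.
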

\noindent\rule[1ex]{\linewidth}{1pt}

\begin{proof}
Consider the function $h(z)=g\circ f(z)=u_{g}(u_{f}(x,y), v_{f}(x,y)) + i v_{g}(u_{f}(x,y), v_{f}(x,y))$. Then the partial derivatives of $u_{h}$ and $v_{h}$ are given by the chain rule:
\begin{align*}
\frac{\partial u_{h}}{\partial x}(c) &= \frac{\partial u_{g}}{\partial x}(f(c))\frac{\partial u_{f}}{\partial x}(c) + \frac{\partial u_{g}}{\partial y}(f(c))\frac{\partial v_{f}}{\partial x}(c),\\
\frac{\partial u_{h}}{\partial y}(c) &= \frac{\partial u_{g}}{\partial x}(f(c))\frac{\partial u_{f}}{\partial y}(c) + \frac{\partial u_{g}}{\partial y}(f(c))\frac{\partial v_{f}}{\partial y}(c),\\
\frac{\partial v_{h}}{\partial x}(c) &= \frac{\partial v_{g}}{\partial x}(f(c))\frac{\partial u_{f}}{\partial x}(c) + \frac{\partial v_{g}}{\partial y}(f(c))\frac{\partial v_{f}}{\partial x}(c),\\
\frac{\partial v_{h}}{\partial y}(c) &= \frac{\partial v_{g}}{\partial x}(f(c))\frac{\partial u_{f}}{\partial y}(c) + \frac{\partial v_{g}}{\partial y}(f(c))\frac{\partial v_{f}}{\partial y}(c).
\end{align*}
In addition, we have:
\begin{align*}
\frac{\partial g}{\partial z}(f(c))\frac{\partial f}{\partial z}(c) =&
\frac{1}{4}\left(
\frac{\partial u_{g}}{\partial x}(f(c)) \frac{\partial u_{f}}{\partial x}(c)
+ \frac{\partial u_{g}}{\partial x}(f(c)) \frac{\partial v_{f}}{\partial y}(c)
+ i\frac{\partial u_{g}}{\partial x}(f(c)) \frac{\partial v_{f}}{\partial x}(c)
- i\frac{\partial u_{g}}{\partial x}(f(c)) \frac{\partial u_{f}}{\partial y}(c)
\right.\\
& +\frac{\partial v_{g}}{\partial y}(f(c)) \frac{\partial u_{f}}{\partial x}(c)
+ \frac{\partial v_{g}}{\partial y}(f(c)) \frac{\partial v_{f}}{\partial y}(c)
+ i\frac{\partial v_{g}}{\partial y}(f(c)) \frac{\partial v_{f}}{\partial x}(c)
- i\frac{\partial v_{g}}{\partial y}(f(c)) \frac{\partial u_{f}}{\partial y}(c)\\
& +i \frac{\partial v_{g}}{\partial x}(f(c)) \frac{\partial u_{f}}{\partial x}(c)
+ i\frac{\partial v_{g}}{\partial x}(f(c)) \frac{\partial v_{f}}{\partial y}(c)
- \frac{\partial v_{g}}{\partial x}(f(c)) \frac{\partial v_{f}}{\partial x}(c)
+ \frac{\partial v_{g}}{\partial x}(f(c)) \frac{\partial u_{f}}{\partial y}(c)\\
& \left.
- i\frac{\partial u_{g}}{\partial y}(f(c)) \frac{\partial u_{f}}{\partial x}(c)
- i\frac{\partial u_{g}}{\partial y}(f(c)) \frac{\partial v_{f}}{\partial y}(c)
+  \frac{\partial u_{g}}{\partial y}(f(c)) \frac{\partial v_{f}}{\partial x}(c)
-  \frac{\partial u_{g}}{\partial y}(f(c)) \frac{\partial u_{f}}{\partial y}(c)\right)
\end{align*}
and
\begin{align*}
\frac{\partial g}{\partial z^*}(f(c))\frac{\partial f^*}{\partial z}(c) =&
\frac{1}{4}\left(
  \frac{\partial u_{g}}{\partial x}(f(c)) \frac{\partial u_{f}}{\partial x}(c)
- \frac{\partial u_{g}}{\partial x}(f(c)) \frac{\partial v_{f}}{\partial y}(c)
- i\frac{\partial u_{g}}{\partial x}(f(c)) \frac{\partial v_{f}}{\partial x}(c)
- i\frac{\partial u_{g}}{\partial x}(f(c)) \frac{\partial u_{f}}{\partial y}(c)
\right.\\
&
-  \frac{\partial v_{g}}{\partial y}(f(c)) \frac{\partial u_{f}}{\partial x}(c)
+  \frac{\partial v_{g}}{\partial y}(f(c)) \frac{\partial v_{f}}{\partial y}(c)
+ i\frac{\partial v_{g}}{\partial y}(f(c)) \frac{\partial v_{f}}{\partial x}(c)
+ i\frac{\partial v_{g}}{\partial y}(f(c)) \frac{\partial u_{f}}{\partial y}(c)\\
&
+ i\frac{\partial v_{g}}{\partial x}(f(c)) \frac{\partial u_{f}}{\partial x}(c)
- i\frac{\partial v_{g}}{\partial x}(f(c)) \frac{\partial v_{f}}{\partial y}(c)
+ \frac{\partial v_{g}}{\partial x}(f(c)) \frac{\partial v_{f}}{\partial x}(c)
+ \frac{\partial v_{g}}{\partial x}(f(c)) \frac{\partial u_{f}}{\partial y}(c)\\
&
\left.+ i\frac{\partial u_{g}}{\partial y}(f(c)) \frac{\partial u_{f}}{\partial x}(c)
- i\frac{\partial u_{g}}{\partial y}(f(c)) \frac{\partial v_{f}}{\partial y}(c)
+  \frac{\partial u_{g}}{\partial y}(f(c)) \frac{\partial v_{f}}{\partial x}(c)
+  \frac{\partial u_{g}}{\partial y}(f(c)) \frac{\partial u_{f}}{\partial y}(c)\right).
\end{align*}
Summing up the last two relations and eliminating the opposite terms, we obtain:
\begin{align*}
\frac{\partial g}{\partial z}(f(c))\frac{\partial f}{\partial z}(c) +\frac{\partial g}{\partial z^*}(f(c))\frac{\partial f^*}{\partial z}(c) =& \frac{1}{2}\left(
\frac{\partial u_{g}}{\partial x}(f(c)) \frac{\partial u_{f}}{\partial x}(c)
- i\frac{\partial u_{g}}{\partial x}(f(c)) \frac{\partial u_{f}}{\partial y}(c)
+ \frac{\partial u_{g}}{\partial y}(f(c)) \frac{\partial u_{f}}{\partial y}(c)  \right.\\
& + i\frac{\partial u_{g}}{\partial y}(f(c)) \frac{\partial v_{f}}{\partial x}(c)
+ i\frac{\partial v_{g}}{\partial x}(f(c)) \frac{\partial u_{f}}{\partial x}(c)
+ \frac{\partial v_{g}}{\partial x}(f(c)) \frac{\partial u_{f}}{\partial y}(c)\\
& \left. -i \frac{\partial u_{g}}{\partial y}(f(c)) \frac{\partial v_{f}}{\partial y}(c)
+ \frac{\partial u_{g}}{\partial y}(f(c)) \frac{\partial v_{f}}{\partial x}(c)\right)\\
=& \frac{1}{2}\left( \frac{\partial u_{h}}{\partial x}(c) + \frac{\partial v_{h}}{\partial y}(c)\right)
+ \frac{i}{2}\left( \frac{\partial v_{h}}{\partial x}(c) - \frac{\partial u_{h}}{\partial y}(c)\right)\\
=& \frac{\partial h}{\partial z}(c).
\end{align*}

To prove the chain rule of the CW-derivative, we apply the chain rule of the W-derivative as well as Propositions \ref{PRO:w_rule1}, \ref{PRO:w_rule2} and obtain:
\begin{align*}
\frac{\partial h}{\partial z^*}(c) =& \left(\frac{\partial h^*}{\partial z}\right)^* = \left( \frac{\partial g^*}{\partial z}(f(c)) \frac{\partial f}{\partial z}(c) + \frac{\partial g^*}{\partial z^*}(f(c))\frac{\partial f^*}{\partial z}(c) \right)^*\\
=& \left( \frac{\partial g^*}{\partial z}(f(c))\right)^* \left(\frac{\partial f}{\partial z}(c)\right)^* + \left(\frac{\partial g^*}{\partial z^*}(f(c))\right)^*\left(\frac{\partial f^*}{\partial z}(c) \right)^*\\
=& \frac{\partial g}{\partial z^*}(f(c)) \frac{\partial f^*}{\partial z^*}(c) + \frac{\partial g}{\partial z}(f(c))\frac{\partial f}{\partial z^*}(c),
\end{align*}
which completes the proof.
\end{proof}

In the following we examine some examples in order to make the aforementioned rules more intelligible.\vspace{1em}
\begin{enumerate}
\item $f(z)=z^2$. Since $f$ is complex analytic, $\frac{\partial f}{\partial z^*}(z)=0$ and $\frac{\partial f}{\partial z}(z)=2z$, for all $z\in\C$.
    \vspace{1em}
\item $f(z)=z^*$. Since $f$ is conjugate-complex analytic, $\frac{\partial f}{\partial z}(z)=0$ and $\frac{\partial f}{\partial z^*}(z)=1$, for all $z\in\C$.
    \vspace{1em}
\item $f(z)=z^3 - iz + (z^*)^2$. Applying the linearity rule we take: $\frac{\partial f}{\partial z}(z)=\frac{\partial (z^3)}{\partial z} + \frac{\partial (iz)}{\partial z}+ \frac{\partial ((z^*)^2)}{\partial z}= 3z^2 + i$. Similarly, $\frac{\partial f}{\partial z^*}(z)=\frac{\partial (z^3)}{\partial z^*} + \frac{\partial (iz)}{\partial z^*}+ \frac{\partial ((z^*)^2)}{\partial z^*}= 2z^*$.
    \vspace{1em}
\item $f(z)=\frac{1}{z}$. Applying the reciprocal rule: $\frac{\partial f}{\partial z}(z) = -\frac{1}{z^2}$, $\frac{\partial f}{\partial z^*}(z) = 0$, for all $z\in\C-\{0\}$.
    \vspace{1em}
\item $f(z)=\left(z^2+z^*\right)^3$. Consider the functions $g(z)=z^3$ and $h(z)=z^2+z^*$. Then $f(z)=g\circ h(z)$. Applying the chain rule: $\frac{\partial f}{\partial z}(z) = \frac{\partial g}{\partial z}((z^2+z^*)^3) \frac{\partial h}{\partial z}(z) + \frac{\partial g}{\partial z^*}((z^2+z^*)^3) \frac{\partial h^*}{\partial z}(z) = 3(z^2+z^*)^2 2z + 0 = 6z(z^2+z^*)^2$. Similarly,
    $\frac{\partial f}{\partial z^*}(z) = 3(z^2+z^*)^2$.
    \vspace{1em}
\end{enumerate}

It should have become clear by now, that all the aforementioned rules can be summarized to the following simple statements:
\textit{
\begin{quote}
\begin{itemize}
\item To compute the W-derivative of a function $f$, which is expressed in terms of $z$ and $z^*$, apply the known differentiation rules considering $z^*$ as a constant.
\item To compute the CW-derivative of a function $f$, which is expressed in terms of $z$ and $z^*$, apply the known differentiation rules considering $z$ as a constant.
\end{itemize}
\end{quote}
}
Most texts or papers, that deal with Wirtinger's methodology, highlight the aforementioned rules only, disregarding the underlying rich mathematical body. Therefore, the first-time reader is left puzzled and with many unanswered questions. For example, it is difficult for the beginner to comprehend the notion ``keep $z^*$ constant'', since if $z^*$ is kept fixed, then so does $z$. We should emphasize, that these statements should be regarded as a simple computational trick, rather than as a rigorous mathematical rule. This trick works well, as shown in the examples, due to Theorem \ref{THE:wirti}, which states the W and CW derivative vanish for conjugate-complex analytic and complex analytic functions respectively, and the differentiation rules (Propositions \ref{PRO:w_linearity}-\ref{PRO:w_chain}). Nonetheless, special care should be considered, whenever this trick is applied. Given the function $f(z)=|z|^2$, one might conclude that $\frac{\partial f}{\partial z^*}=0$, since if we consider $z$ as a constant, according to the aforementioned statements, then $f(z)$ is also a constant. However, on a closer look, one may recast $f$ as $f(z)=zz^*$. Then the same trick produces $\frac{\partial f}{\partial z^*}=z$. Which one is correct?

The answer, of course, is that $\frac{\partial f}{\partial z^*}=z$. One should not conclude that $\frac{\partial f}{\partial z^*}=0$, since $f$ is not conjugate-holomorphic. We can only apply the aforementioned tricks on functions $f$ that are expressed in terms of $z$ and $z^*$ (hence the term $f(z,z^*)$ used often in the literature) in such a way, that: a) if we replace $z$ with a fixed $w$, while keeping $z^*$ intact, the resulting function will be conjugate-complex analytic and b) if we replace $z^*$ with a fixed $w$, while keeping $z$ intact, the resulting function will be complex analytic. Utilizing the methodology employed in Proposition \ref{PRO:CR2}, one may prove that this is possible for any  real analytic function $f$. Of course, another course of action (a more formal one) is to disregard the aforementioned tricks and use directly the differentiation rules (as seen in the examples).

%---------------------------------------------------------
\subsection{Second Order Taylor's expansion formula}\label{SEC:Taylor_sec_ord}
Let $f$ a complex function differentiable in the real sense. Following the same procedure that led us to \ref{EQ:wirti2} we may derive Taylor's expansion formulas of any order. In this section we consider the second order expansion, since it is useful in many problems that employ Newton-based minimization. Consider the second order Taylor's expansion of $u$ and $v$ around $c=c_1+ic_2$:
\begin{align*}
u(c+h) = u(c_1+h_1,c_2+h_2) =& u(c_1,c_2) + \frac{\partial u}{\partial x}(c_1,c_2) h_1 + \frac{\partial u}{\partial y}(c_1,c_2) h_2 \\
& + \frac{1}{2} (h_1, h_2)\cdot H_u\cdot \left(h_1,h_2\right)^T + o(|h|^2), \\
v(c+h) = v(c_1+h_1,c_2+h_2) =& v(c_1,c_2) + \frac{\partial v}{\partial x}(c_1,c_2) h_1 + \frac{\partial v}{\partial y}(c_1,c_2) h_2 \\
& + \frac{1}{2} (h_1, h_2)\cdot H_v\cdot \left(h_1,h_2\right)^T + o(|h|^2),
\end{align*}
where $H_u$, $H_v$ are the associated Hessian matrices. Since $f(c+h) = u(c+h) + iv(c+h)$, after some algebra one obtains that:
\begin{align*}
f(c+h) = f(c) + \left(\frac{\partial f}{\partial z}, \frac{\partial f}{\partial z^*} \right)\cdot \left(\begin{matrix} h \cr h^*\end{matrix}\right)
+ \frac{1}{2}(h, h^*)\cdot\left(
\begin{matrix}\frac{\partial^2 f}{\partial z^2}(c) & \frac{\partial^2 f}{\partial z\partial z^*}(c)\cr
\frac{\partial^2 f}{\partial z^*\partial z}(c) & \frac{\partial^2 f}{\partial (z^*)^2}(c)\end{matrix}\right)\cdot
\left(\begin{matrix}h \cr h^*\end{matrix}\right) + o(|h|^2).
\end{align*}

%---------------------------------------------------------
\subsection{Wirtinger's calculus applied on real valued functions}\label{SEC:wirti_on_cost}

Many problems of complex signal processing involve minimization problems of real valued cost functions defined on complex domains, i.e., $f:X\subset\C\rightarrow\R$. Therefore, in order to successfully implement the associated minimization algorithms, the gradients of the respective cost functions need to be deployed.  Since, such functions are neither complex analytic nor conjugate-complex analytic\footnote{Of course, one may not use a complex valued function as a cost function of a minimization problem, since we cannot define inequalities in $\C$.}, our only option is to compute the gradients, either by employing ordinary $\R^2$ calculus, that is regarding $\C$ as $\R^2$ and evaluating the gradient (i.e., the partial derivatives) of the cost function $f(x+iy)=f(x,y)$, or by using Wirtinger's calculus. Both cases will eventually lead to the same results, but the application of Wirtinger's calculus provides a more elegant and comfortable alternative, especially if the cost function by its definition is given in terms of $z$ and $z^*$ instead of $x$ and $y$ (i.e., the real and imaginary part of $z$).

As the function under consideration $f(z)$ is real valued, the W and CW derivatives are simplified, i.e.,
\begin{align*}
\frac{\partial f}{\partial z}(c) = \frac{1}{2}\left(\frac{\partial f}{\partial x}(c) - i\frac{\partial f}{\partial y}(c)\right) \textrm{ and } \frac{\partial f}{\partial z^*}(c) = \frac{1}{2}\left(\frac{\partial f}{\partial x}(c) + i\frac{\partial f}{\partial y}(c)\right)
\end{align*}
and the following important property can be derived.

\noindent\rule[1ex]{\linewidth}{1pt}
\begin{lemma}\label{LEM:real_function}
If $f:X\subseteq\C\rightarrow\R$ is differentiable in the real sense, then
\begin{align}
\left(\frac{\partial f}{\partial z}(c)\right)^* &= \frac{\partial f}{\partial z^*}(c).
\end{align}
\end{lemma}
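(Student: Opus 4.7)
The plan is to recognize this as an immediate corollary of Proposition \ref{PRO:w_rule1}, which states that for any function differentiable in the real sense at $c$, $\left(\frac{\partial f}{\partial z}(c)\right)^* = \frac{\partial f^*}{\partial z^*}(c)$. Since by hypothesis $f$ takes values in $\R$, we have $f^* = f$ as functions on $X$, and the conclusion follows at once by substitution. The only subtlety is making sure that the earlier proposition applies verbatim to real-valued $f$, which is trivial because a real-valued function is a particular case of a complex-valued function whose imaginary part $v$ vanishes identically.

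As a self-contained alternative (in case one prefers not to invoke the earlier proposition), I would verify the identity by direct computation using the simplified formulas stated just before the lemma, namely
\begin{align*}
\frac{\partial f}{\partial z}(c) = \tfrac{1}{2}\!\left(\tfrac{\partial f}{\partial x}(c) - i\tfrac{\partial f}{\partial y}(c)\right), \qquad
\frac{\partial f}{\partial z^*}(c) = \tfrac{1}{2}\!\left(\tfrac{\partial f}{\partial x}(c) + i\tfrac{\partial f}{\partial y}(c)\right).
\end{align*}
Because $f$ is real valued, both $\partial f/\partial x$ and $\partial f/\partial y$ are real numbers at $c$, so conjugating the first expression simply flips the sign of the imaginary part and produces the second expression. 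This makes the equality transparent.

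There is no real obstacle here; the statement is essentially a reformulation of Proposition \ref{PRO:w_rule1} under the extra assumption $f = f^*$. I would present the argument in two lines, preferring the first approach (citing Proposition \ref{PRO:w_rule1}) for brevity and conceptual clarity, while possibly remarking that the same identity can be read off directly from the definitions (\ref{EQ:wirti_der}) and (\ref{EQ:conj_wirti_der}) once one sets $v \equiv 0$.
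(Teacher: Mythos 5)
Your proposal is correct; both routes (invoking Proposition \ref{PRO:w_rule1} with $f^*=f$, or conjugating the simplified expressions for $\frac{\partial f}{\partial z}$ and $\frac{\partial f}{\partial z^*}$ when $v\equiv 0$) are valid and match the paper, which states the lemma as an immediate consequence of exactly those simplified formulas given just before it. Nothing further is needed.
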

\noindent\rule[1ex]{\linewidth}{1pt}

An important consequence is that if $f$ is a real valued function defined on $\C$, then its first order Taylor's expansion at $z$ is given by:
\begin{align*}
f(c+h) & =  f(c) + \frac{\partial f}{\partial z}(c)h + \frac{\partial f}{\partial z^*}(c)h^* +o(|h|)\\
& = f(c) + \frac{\partial f}{\partial z}(c)h + \left(\frac{\partial f}{\partial z}(c)h\right)^* +o(|h|)\\
& = f(c) + \Re\left[\frac{\partial f}{\partial z}(c)h\right] +o(|h|).
\end{align*}
However, in view of the Cauchy Riemann inequality we have:
\begin{align*}
\Re\left[\frac{\partial f}{\partial z}(c)h\right] = \Re\left[\left\langle h, \left(\frac{\partial f}{\partial z}(c)\right)^*\right\rangle_{\C}\right] &\leq \left|\left\langle h, \left(\frac{\partial f}{\partial z}(c)\right)^*\right\rangle_{\C}\right|\\
&\leq |h| \left|\frac{\partial f}{\partial z^*}(c)\right|.
\end{align*}
The equality in the above relationship holds, if $h\upuparrows \frac{\partial f}{\partial z^*}$. Hence, the direction of increase of $f$ is $\frac{\partial f}{\partial z^*}$. Therefore, any gradient descent based algorithm minimizing $f(z)$ is based on the update scheme:
\begin{align}
z_{n} = z_{n-1} - \mu\cdot\frac{\partial f}{\partial z^*}(z_{n-1}).
\end{align}

Assuming differentiability of $f$, a standard result from elementary real calculus states that a necessary condition for a point $(x_0,y_0)$ to be an optimum (in the sense that $f(x,y)$ is minimized) is that this point is a stationary point of $f$, i.e. the partial derivatives of $f$ at $(x_0, y_0)$ vanish. In the context of Wirtinger's calculus we have the following obvious corresponding result.

\noindent\rule[1ex]{\linewidth}{1pt}
\begin{proposition}\label{PRO:first_order_opt}
If $f:X\subseteq\C\rightarrow\R$ is differentiable at $x$ in the real sense, then a necessary condition for a point $c$ to be a local optimum (in the sense that $f(c)$ is minimized or maximized) is that either the W, or the CW derivative vanishes\footnote{Note, that for real valued functions the W and the CW derivatives constitute a conjugate pair (lemma \ref{LEM:real_function}). Thus if the W derivative vanishes, then the CW derivative vanishes too. The converse is also true.}.
\end{proposition}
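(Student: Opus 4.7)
The plan is to reduce the statement to the familiar first–order necessary condition from real multivariable calculus and then translate the conclusion into the language of Wirtinger derivatives via the definition given in (\ref{EQ:wirti_der})--(\ref{EQ:conj_wirti_der}).

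First, I would regard $f$ as a real valued function of the two real variables $(x,y)\in\R^2$ via the identification $z = x+iy$, and invoke the standard result that, if $c=(c_1,c_2)$ is a local optimum of a real valued function that is differentiable in the real sense at $c$, then both partial derivatives vanish:
\begin{align*}
\frac{\partial f}{\partial x}(c) = 0, \qquad \frac{\partial f}{\partial y}(c) = 0.
\end{align*}
This is the only analytic content needed, and it is simply Fermat's theorem applied coordinate-wise; the hypothesis of real differentiability at $c$ is exactly what justifies it.

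Next, I would plug these vanishing partials directly into the simplified expressions for the W and CW derivatives of a real valued function, namely
\begin{align*}
\frac{\partial f}{\partial z}(c) = \tfrac{1}{2}\!\left(\frac{\partial f}{\partial x}(c) - i\frac{\partial f}{\partial y}(c)\right), \qquad \frac{\partial f}{\partial z^*}(c) = \tfrac{1}{2}\!\left(\frac{\partial f}{\partial x}(c) + i\frac{\partial f}{\partial y}(c)\right),
\end{align*}
recalled just before Lemma \ref{LEM:real_function}. Both derivatives therefore vanish at $c$. Alternatively, one can obtain the vanishing of the CW derivative from that of the W derivative (or vice versa) by applying Lemma \ref{LEM:real_function}, which gives $(\partial f/\partial z(c))^* = \partial f/\partial z^*(c)$; this is precisely the observation recorded in the footnote of the proposition.

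There is no real obstacle here; the proof is essentially a one line translation between two equivalent descriptions of the same differential information. The only subtlety worth mentioning explicitly in the writeup is that the statement says ``either'' W or CW vanishes, which is slightly misleading: for real valued $f$ the two are complex conjugates of each other by Lemma \ref{LEM:real_function}, so vanishing of one is automatically equivalent to vanishing of the other, and both fall out simultaneously from the classical Fermat condition on $\partial f/\partial x$ and $\partial f/\partial y$.
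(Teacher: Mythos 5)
Your proposal is correct and is exactly the argument the paper has in mind: the paper presents the proposition as an "obvious" consequence of the classical stationarity condition $\partial f/\partial x(c)=\partial f/\partial y(c)=0$ quoted immediately before it, combined with the definitions (\ref{EQ:wirti_der})--(\ref{EQ:conj_wirti_der}) and Lemma \ref{LEM:real_function} for the conjugate-pair remark in the footnote. Your observation that both derivatives necessarily vanish simultaneously for real valued $f$ is likewise the content of that footnote, so there is nothing to add.
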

\noindent\rule[1ex]{\linewidth}{1pt}

%---------------------------------------------------------------------------------------
\section{Wirtinger's Calculus on general complex Hilbert spaces}\label{SEC:wirti_hilbert}
%---------------------------------------------------------------------------------------
In this section we generalize the main ideas and results of Wirtinger's calculus on general Hilbert spaces. To this end, we begin with a brief review of the \textit{\frechet derivative}, which generalizes differentiability to abstract Banach spaces.

%---------------------------------------------------------------------------------------
\subsection{\frechet Derivatives}\label{EQ:frechet}
Consider a Hilbert space $H$ over the field $F$  (typically $\R$ or $\C$). The operator $\bT:H\rightarrow F^\nu$ is said to be \textit{\fredif} at $f_0$, if there exists a linear continuous operator $\bW=(W_1,W_2,\dots,W_\nu)^T:H\rightarrow\F^\nu$ such that
\begin{align}\label{EQ:frechet1}
\lim_{\|h\|_{H}\rightarrow 0}\frac{\left\|\bT(f_0+h)-\bT(f_0)- \bW(h)\right\|_{F^\nu}}{\|h\|_{H}}=0,
\end{align}
where $\|\cdot\|_H=\sqrt{\langle\cdot, \cdot\rangle_H}$ is the induced norm of the corresponding Hilbert Space. Note that $F^\nu$ is considered as a Banach space under the Euclidean norm.  The linear operator $\bW$ is called the \textit{\frechet derivative} and is usually denoted by $d\bT(f_0):H\rightarrow F^\nu$.  Observe that this definition is valid not only for Hilbert spaces, but for general Banach spaces too. However, since we are mainly interested at Hilbert spaces, we present the main ideas in that context. It can be proved that if such a linear continuous operator $\bW$ can be found, then it is unique (i.e., the derivative is unique).
In the special case where $\nu=1$ (i.e., the operator $\bT$ takes values on $F$) using the \textit{Riesz's representation} theorem,  we may replace the linear continuous operator $\bW$ with an inner product. Therefore, the operator $T:H\rightarrow F$ is said to be \textit{\fredif} at $f_0$, iff there exists a $w\in H$, such that
\begin{align}\label{EQ:frechet2}
\lim_{\|h\|_{H}\rightarrow 0}\frac{T(f_0+h)-T(f_0)-\langle h, w\rangle_{H}}{\|h\|_{H}}=0,
\end{align}
where $\langle\cdot, \cdot\rangle_{H}$ is the dot product of the Hilbert space $H$ and $\|\cdot\|_H$ is the
induced norm. The element $w^*$ is usually called the gradient of $T$ at $f_0$ and it is denoted by $w^*=\nabla T(f_0)$.

For a general vector valued operator $\bT=(T_1,\dots,T_\nu)^T:H\rightarrow F^\nu$, we may easily  derive that iff $\bT$ is differentiable at $f_0$, then $T_\iota$ is differentiable at $f_0$, for all $\iota=1,2,\dots,\nu$, and that
\begin{align}\label{EQ:vector_grad}
d\bT(f_0)(h) = \left(\begin{matrix}\langle h, \nabla T_1(f_0)^*\rangle_{H}\cr
\vdots
\cr \langle h, \nabla T_\nu(f_0)^*\rangle_{H}\end{matrix}\right).
\end{align}
To prove this claim, consider that since $\bT$ is differentiable, there exists a continuous linear operator $\bW$ such that
\begin{align*}
\lim_{\|h\|_{H}\rightarrow 0}\frac{\left\|\bT(f_0+h)-\bT(f_0)- \bW(h)\right\|_{F^\nu}}{\|h\|_{H}}=0\Leftrightarrow\\
\lim_{\|h\|_{H}\rightarrow 0} \left( \sum_{\iota=1}^{\nu} \frac{\left|T_\iota(f_0+h)-T_\iota(f_0)- W_\iota(h)\right|_{F}^2}{\|h\|^2_{H}}  \right)  = 0,
\end{align*}
for all $\iota=1,\dots,\nu$. Thus,
\begin{align*}
\lim_{\|h\|_{H}\rightarrow 0} \left( \frac{T_\iota(f_0+h)-T_\iota(f_0)- W_\iota(h)}{\|h\|_{H}}  \right) = 0,
\end{align*}
for all $\iota=1,2,\nu$. The Riesz's representation theorem dictates that since $W_\iota$ is a continuous linear operator, there exists $w_\iota\in H$, such that $W_\iota(h)=\langle h, w_\iota\rangle_H$, for all $\iota=1,\dots,\nu$. This proves that $T_\iota$ is differentiable at $f_0$ and that $w_\iota^*=\nabla T_\iota(f_0)$, thus equation (\ref{EQ:vector_grad}) holds. The converse is proved similarly.

The notion of \textit{\frechet differentiability} may be extended to include also partial derivatives. Consider the operator $T:H^\mu\rightarrow F$ defined on the Hilbert space $H^\mu$ with corresponding inner product:
\begin{align*}
\langle \bbf, \bg\rangle_{H^\mu} = \sum_{\iota=1}^{\mu} \langle f_\iota, g_\iota\rangle_H,
\end{align*}
where $\bbf=(f_1,f_2,\dots f_\mu)$ $\bg=(g_1,g_2,\dots g_\mu)$. $T(\bbf)$ is said to be \textit{\fredif} at $\bbf_0$ in respect with $f_\iota$, iff there exists a $w\in H$, such that
\begin{align}\label{EQ:frechet3}
\lim_{\|h\|_{H}\rightarrow 0}\frac{T(\bbf_0 + [h]_\iota)-T(\bbf_0)-\langle [h]_\iota, w\rangle_{H}}{\|h\|_{H}}=0,
\end{align}
where $[h]_\iota=(0, 0, \dots, 0, h, 0, \dots, 0)^T$, is the element of $H^\mu$ with zero entries everywhere, except at place $\iota$.
The element $w^*$ is called the gradient of $T$ at $\bbf_0$ in respect with $f_\iota$ and it is denoted by $w^*=\nabla_\iota T(\bbf_0)$. The \frechet partial derivative at $\bbf_0$ in respect with $f_\iota$ is denoted by $\frac{\partial T}{\partial f_\iota}(\bbf_0)$, $\frac{\partial T}{\partial f_\iota}(\bbf_0)(\bh)=\langle \bh, w\rangle_{\HH}$.

It is also possible to define \frechet derivatives of higher order and a corresponding Taylor's series expansion. In this context the $n$-th \frechet derivative of $\bT$ at $\bbf_0$, i.e., $d^{n}\bT(\bbf_0)$, is a multilinear map. If $T$ has \frechet derivatives of any order, it can be expanded as a Taylor series, i.e.,
\begin{align}\label{EQ:frechet_Taylor}
\bT(\bbf_0+\bh) = \sum_{n=0}^{\infty} \frac{1}{n!}d^n\bT(\bbf_0)(\bh, \bh, \dots, \bh).
\end{align}
In relative literature the term $d^n\bT(\bc)(\bh, \bh, \dots, \bh)$ is often replaced by $d^n\bT(\bc)\cdot \bh^n$, which it denotes that the multilinear map $d^n\bT(\bc)$ is applied to $(\bh, \bh, \dots, \bh)$.

%---------------------------------------------------------------------------------------
\subsection{Complex Hilbert spaces}\label{SEC:Hilbert_complex}

Let $\cH$ be a real Hilbert space with inner product $\langle\cdot,\cdot\rangle_{\cH}$. It is easy to verify that $\cH^2=\cH\times\cH$ is also a real Hilbert space with inner product
\begin{align}\label{EQ:cart_inner}
\langle \bbf,\bg\rangle_{\cH^2}=\langle u_{\bbf}, u_{\bg}\rangle_{\cH} + \langle v_{\bbf}, v_{\bg}\rangle_{\cH},
\end{align}
for $\bbf=(u_{\bbf}, v_{\bbf})^T$, $\bg=(u_{\bg}, v_{\bg})^T$. Our objective is to enrich $\cH^2$ with a complex structure. To this end, we define the space $\HH=\{\bbf=u + i v,\; u,v\in\cH\}$ equipped with the complex inner product:
\begin{align}\label{EQ:complex_inner}
\langle \bbf,\bg\rangle_{\HH}=\langle u_{\bbf}, u_{\bg}\rangle_{\cH} + \langle v_{\bbf}, v_{\bg}\rangle_{\cH} + i\left( \langle v_{\bbf}, u_{\bg}\rangle_{\cH} - \langle u_{\bbf}, v_{\bg}\rangle_{\cH}\right),
\end{align}
for $\bbf=u_{\bbf} + iv_{\bbf}$, $\bg= u_{\bg} +iv_{\bg}$. It may be easily proved that $\HH$ is a complex Hilbert space. In the following, this complex structure will be used to derive derivatives similar to the ones obtained from Wirtinger's calculus on $\C^\nu$.

Consider the function $\bT:\A\subseteq\HH\rightarrow\C$, $\bT(\bbf)=\bT(u_{\bbf}+iv_{\bbf})=T_1(u_{\bbf}, v_{\bbf}) + T_2(u_{\bbf}, v_{\bbf}) i$, where $u_{\bbf}, v_{\bbf}\in \cH$ and $T_1, T_2$ are real valued functions defined on $\cH^2$. Any such function, $\bT$, may be regarded as defined either on a subset of $\HH$ or on a subset of $\cH^2$. Furthermore, $\bT$ may be regarded either as a complex valued function, or as a vector valued function, which takes values in $\R^2$. Therefore, we may equivalently write:
\begin{align*}
\bT(\bbf) = \bT(u_{\bbf}+iv_{\bbf}) = T_1(u_{\bbf}, v_{\bbf}) + T_2(u_{\bbf}, v_{\bbf}) i
= \left( T_1(u_{\bbf}, v_{\bbf}), T_2(u_{\bbf}, v_{\bbf}) \right)^T.
\end{align*}
In the following, we will often change the notation according to the specific problem and consider any element of $\bbf\in\HH$ defined either as $\bbf=u_{\bbf} + i v_{\bbf}\in\HH$, or as $\bbf=(u_{\bbf}, v_{\bbf})^T\in \cH^2$. In a similar manner, any complex number may be regarded as either an element of $\C$, or as an element of $\R^2$.
We say that $\bT$ is \textit{\frechet complex differentiable} at $\bc\in\HH$ if there exists $\bw\in\HH$ such that:
\begin{align*}
\lim_{\|\bh\|_{\HH}\rightarrow 0}\frac{\bT(\bc+\bh)-\bT(\bc)-\langle \bh, \bw\rangle_{\HH}}{\|\bh\|_{\HH}}=0.
\end{align*}
Then $\bw^*$ is called the \textit{complex gradient} of $\bT$ at $\bc$ and it is denoted as $\bw^*=\nabla \bT(\bc)$. The \frechet complex derivative of $\bT$ at $\bc$ is denoted as $d\bT(\bc)(\bh)=\langle \bh, \bw\rangle_{\HH}$.
This definition, although similar with the typical \frechet derivative , exploits the complex structure of $\HH$. More specifically, the complex inner product that appears in the definition forces a great deal of structure on $\bT$. Similarly to the case of simple complex functions, from this simple fact follow all the important strong properties of the complex derivative. For example, it can be proved that if $d\bT(\bc)$ exists, then so does $d^{n}\bT(\bc)$, for $n\in\N$.  If $\bT$ is differentiable at any $\bc\in \A$, $\bT$ is called \textit{\frechet holomorphic} in $\A$, or \textit{\frechet complex analytic} in $\A$, in the sense that it can be expanded as a Taylor series, i.e.,
\begin{align}\label{EQ:fre_complex_Taylor}
\bT(\bc+\bh) = \sum_{n=0}^{\infty} \frac{1}{n!}d^n\bT(\bc)(\bh, \bh, \dots, \bh).
\end{align}
The proof of this statement is out of the scope of this manuscript. The expression ``\textit{$\bT$ is \frechet complex analytic at  $\bc$}'' means that $\bT$ is \frechet complex analytic at a neighborhood around $\bc$. We will say that $\bT$ is \textit{\frechet real analytic}, when both $T_1$ and $T_2$ have a Taylor's series expansion in the real sense.

%---------------------------------------------------------------------------------------
\subsection{Cauchy-Riemann Conditions}\label{SEC:fre_CR_conditions}

We begin our study, exploring the relations between the complex \frechet derivative and the real \frechet derivatives. In the following we will say that $\bT$ is \textit{\frechet differentiable in the complex sense}, if the complex derivative exists, and that $\bT$ is \textit{\frechet differentiable in the real sense}, if its real \frechet derivative exists (i.e., $\bT$ is regarded as a vector valued operator $\bT:\cH^2\rightarrow\cH^2$).

\begin{lemma}\label{LEM:limit}
Consider the Hilbert space $\HH$ and $a, b\in \HH$. The limit
\begin{align}\label{EQ:lem_hilb}
\lim_{\|h\|_{\HH}\rightarrow 0}\frac{\langle h^*, a \rangle_{\HH} - \langle h, b\rangle_{\HH}}{\|h\|_{\HH}} = 0,
\end{align}
if and only if $a=b=\bZero$.
\end{lemma}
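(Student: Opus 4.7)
The ``if'' direction is immediate, since if $a = b = \bZero$ the numerator is identically zero. The work is in the ``only if'' direction, and I will mimic the scalar-case trick used in the footnote of Proposition~\ref{PRO:CR2}: evaluate the limit along two cleverly chosen one-parameter families that approach zero, extract two independent linear conditions on $a$ and $b$, and then specialize the test element to force each of $a,b$ to vanish.

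First, I would fix an arbitrary $g \in \HH$ with $g \neq \bZero$ and let $h = tg$ for real $t > 0$. Because the inner product on $\HH$ is linear in its first argument and conjugation commutes with real scalars, we have $h^* = tg^*$ and $\|h\|_\HH = t\|g\|_\HH$. The quotient in (\ref{EQ:lem_hilb}) then collapses to
\begin{align*}
\frac{\langle g^*, a\rangle_\HH - \langle g, b\rangle_\HH}{\|g\|_\HH},
\end{align*}
which is independent of $t$. Letting $t \downarrow 0$, the hypothesis forces $\langle g^*, a\rangle_\HH = \langle g, b\rangle_\HH$ for every $g \in \HH$.

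Next I would take $h = itg$ for real $t > 0$. A short check using the definition of the complex structure in (\ref{EQ:complex_inner}) shows $(ig)^* = -ig^*$, so $h^* = -itg^*$, while $\|h\|_\HH = t\|g\|_\HH$. Using linearity in the first slot,
\begin{align*}
\frac{\langle h^*, a\rangle_\HH - \langle h, b\rangle_\HH}{\|h\|_\HH}
= \frac{-it\langle g^*, a\rangle_\HH - it\langle g, b\rangle_\HH}{t\|g\|_\HH}
= -i\,\frac{\langle g^*, a\rangle_\HH + \langle g, b\rangle_\HH}{\|g\|_\HH},
\end{align*}
again independent of $t$. Passing to the limit yields $\langle g^*, a\rangle_\HH + \langle g, b\rangle_\HH = 0$ for every $g \in \HH$.

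Adding and subtracting the two identities shows $\langle g^*, a\rangle_\HH = 0$ and $\langle g, b\rangle_\HH = 0$ for every $g \in \HH$. Specializing $g = b$ gives $\|b\|_\HH^2 = 0$, so $b = \bZero$; specializing $g = a^*$ (so that $g^* = a$) gives $\|a\|_\HH^2 = 0$, so $a = \bZero$. The only conceptual subtlety is checking the conjugation identity $(ig)^* = -ig^*$ and confirming that the inner product is linear rather than conjugate-linear in the first argument, which is settled directly from (\ref{EQ:complex_inner}); beyond this, the argument is a routine two-path limit computation.
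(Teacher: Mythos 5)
Your proof is correct and follows essentially the same route as the paper: test the limit along two one-parameter families related by multiplication by $i$ (the paper restricts $h$ to $\cH$ and to $i\cH$, you use the rays $h=tg$ and $h=itg$ with $t\in\R$, $t\downarrow 0$), extract two linear identities, and combine them to force $a=b=\bZero$. Your version just makes the final step more explicit (specializing $g=b$ and $g=a^*$), where the paper simply asserts that the vanishing limits give $a-b=\bZero$ and $a+b=\bZero$.
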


\begin{proof}[Proof]
Evidently, if $a=b=\b0$, this limit exists and is equal to zero. For the converse, consider the case where $h=t + i0$, $t\in \cH$. Then equation (\ref{EQ:lem_hilb}) transforms to $\lim_{\|t\|_{\cH}\rightarrow 0}\frac{\langle t, a-b \rangle_{\cH}}{\|t\|_{\cH}} = 0$, which leads to $a-b=0$. Similarly, if $h=0 + i t$, $t\in \cH$, then $\lim_{\|t\|_{\cH}\rightarrow 0}\frac{\langle t, i(a+b) \rangle_{\cH}}{\|t\|_{\cH}} = 0$ and thus $a+b=0$. We conclude that $a=b=\bZero$.
\end{proof}

\noindent\rule[1ex]{\linewidth}{1pt}
\begin{proposition}
Let $\bT:\A\subset\HH\rightarrow\C$ be an operator such that $\bT(\bbf)=\bT(u_{\bbf} + i v_{\bbf}) = \bT(u_{\bbf}, v_{\bbf}) = T_1(u_{\bbf}, v_{\bbf}) + i T_2(u_{\bbf}, v_{\bbf})$.
If the \frechet complex derivative of $\bT$ at a point $\bc\in\A$ (i.e., $d\bT(\bc):\HH\rightarrow\C$) exists, then $T_1$ and $T_2$ are differentiable at the point $\bc=(c_1,c_1)=c_1 + i c_2$, where $c_1, c_2 \in \cH$. Furthermore,
\begin{align}\label{EQ:fre_cauchy-riemann}
\nabla_1 T_1(c_1,c_2)=\nabla_2 T_2(c_1,c_2) \textrm{ and } \nabla_2 T_1(c_1,c_2)=-\nabla_1 T_2(c_1,c_2).
\end{align}
\end{proposition}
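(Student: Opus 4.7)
The plan is to mirror the second proof of the classical Cauchy--Riemann proposition (Proposition on page 2 of the excerpt), replacing scalar Taylor expansions by the Fr\'echet linear approximation and using the explicit formula for the complex inner product $\langle\cdot,\cdot\rangle_{\HH}$ given in (\ref{EQ:complex_inner}). Concretely, since $\bT$ is Fr\'echet complex differentiable at $\bc$, there exists $\bw\in\HH$, which I write as $\bw = w_1 + iw_2$ with $w_1,w_2\in\cH$, such that
\begin{equation*}
\bT(\bc+\bh) - \bT(\bc) = \langle \bh,\bw\rangle_{\HH} + o(\|\bh\|_{\HH}).
\end{equation*}
Writing $\bh = h_1 + ih_2$ and expanding the inner product via its definition yields
\begin{equation*}
\langle \bh,\bw\rangle_{\HH} = \bigl(\langle h_1,w_1\rangle_{\cH} + \langle h_2,w_2\rangle_{\cH}\bigr) + i\bigl(\langle h_2,w_1\rangle_{\cH} - \langle h_1,w_2\rangle_{\cH}\bigr).
\end{equation*}

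Next I would separate real and imaginary parts of the displayed approximation. Since $\bT = T_1 + iT_2$, comparing real and imaginary components gives
\begin{align*}
T_1(\bc+\bh) - T_1(\bc) &= \langle h_1,w_1\rangle_{\cH} + \langle h_2,w_2\rangle_{\cH} + o(\|\bh\|_{\HH}),\\
T_2(\bc+\bh) - T_2(\bc) &= \langle h_2,w_1\rangle_{\cH} - \langle h_1,w_2\rangle_{\cH} + o(\|\bh\|_{\HH}),
\end{align*}
where I use that $|\Re z|,|\Im z|\le |z|$, so the complex remainder splits into two real remainders of the same order. Noting that $\|\bh\|_{\HH}^{2} = \|h_1\|_{\cH}^{2} + \|h_2\|_{\cH}^{2}$, these are precisely the first-order Fr\'echet expansions of $T_1$ and $T_2$ as maps on $\cH^{2}$, which both establishes their real differentiability at $(c_1,c_2)$ and identifies the partial Fr\'echet gradients.

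Reading off the partial gradients via the definition in (\ref{EQ:frechet3}) (and using that $\cH$ is a real Hilbert space, so the conjugation in $w^{*}=\nabla T(f_0)$ is trivial), I obtain
\begin{equation*}
\nabla_1 T_1(c_1,c_2) = w_1,\quad \nabla_2 T_1(c_1,c_2) = w_2,\quad \nabla_1 T_2(c_1,c_2) = -w_2,\quad \nabla_2 T_2(c_1,c_2) = w_1,
\end{equation*}
from which the Cauchy--Riemann identities $\nabla_1 T_1 = \nabla_2 T_2$ and $\nabla_2 T_1 = -\nabla_1 T_2$ drop out immediately.

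The only non-routine step is justifying that each partial gradient is really uniquely determined by the Fr\'echet relation (i.e., that $T_\iota$ is Fr\'echet differentiable in the real sense as a function on $\cH^{2}$, not merely partially differentiable in each slot). This is where Lemma \ref{LEM:limit} is not yet needed, but a cleaner alternative would be to specialize to $\bh = h_1$ (purely ``real'') and $\bh = ih_2$ (purely ``imaginary''), thereby reducing to two scalar limits in $\cH$, extract $w_1,w_2$, and then invoke uniqueness of the Fr\'echet derivative to conclude. Either route yields the result; the main obstacle is purely bookkeeping, namely tracking how the bilinear-like cross terms $\langle h_2,w_1\rangle_\cH$ and $\langle h_1,w_2\rangle_\cH$ from the complex inner product distribute between the real and imaginary components of $\bT$.
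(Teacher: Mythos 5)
Your proposal is correct and follows essentially the same route as the paper's proof: expand the Fr\'echet complex increment via the complex inner product, split real and imaginary parts (noting the remainder splits since $|\Re z|,|\Im z|\le|z|$ and $\|\bh\|_{\HH}=\|(h_1,h_2)\|_{\cH^2}$), recognize the resulting expressions as the first-order Fr\'echet expansions of $T_1,T_2$ on $\cH^2$, and read off the partial gradients. The only cosmetic difference is that you parametrize $\bw=w_1+iw_2$ while the paper writes $\nabla\bT(\bc)=a+bi$ (so $\bw=a-ib$), and your closing worry about uniqueness is already settled by your own identification of the full Fr\'echet expansion on $\cH^2$.
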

\noindent\rule[1ex]{\linewidth}{1pt}

\begin{proof}[Proof]
Considering the first order Taylor expansion of $\bT$ around $\bc$, we take:
\begin{align}\label{EQ:c_tayl1}
\bT(\bc+\bh)=\bT(\bc) + d\bT(\bc)(\bh) + o(\|h\|_{\HH}) = \bT(\bc) + \langle \bh, \nabla\bT(\bc)^*\rangle_{\HH} + o(\|\bh\|_{\HH}),
\end{align}
where the notation $o$ means that $o(\|\bh\|_{\HH})/\|\bh\|_{\HH}\rightarrow 0$, as $\|\bh\|_{\HH}\rightarrow 0$. Substituting $\nabla\bT(\bc)=a + bi$ and $\bh=h_1+i h_2$, $a,b, h_1, h_2\in\cH$,  we have:
\begin{align*}
\bT(\bc+\bh) =& \bT(\bc) + \langle h_1 + ih_2, a - ib \rangle_{\HH} + o(\|\bh\|_{\HH})\\
=& T_1(c_1,c_2) + \langle h_1, a\rangle_{\HH} - \langle h_2, b\rangle_{\HH} + \Re[o(\|\bh\|_{\HH})] +
i \left(T_2(c_1,c_2) + \langle h_2, a\rangle_{\HH} + \langle h_1, b\rangle_{\HH} + \Im[o(\|\bh\|_{\HH})] \right).
\end{align*}
Therefore,
\begin{align}
T_1(c_1+h_1,c_2+h_2) =&  T_1(c_1,c_2)  + \langle h_1, a\rangle_{\HH} - \langle h_2, b\rangle_{\HH} + \Re[o(\|\bh\|_{\HH})],  \label{EQ:rf_tayl1}\\
T_2(c_1+h_1,c_2+h_2) =&  T_2(c_1,c_2) + \langle h_2, a\rangle_{\HH} + \langle h_1, b\rangle_{\HH} + \Im[o(\|\bh\|_{\HH})]. \label{EQ:rf_tayl2}
\end{align}
Since $o(\|\bh\|_{\HH})/\|\bh\|_{\HH}\rightarrow 0$, we also have
\begin{align*}
\Re[o(|(h_1,h_2)|)]/\|(h_1, h_2)\|_{\cH^2}\rightarrow 0 \textrm{ and } \Im[o(|(h_1,h_2)|)]/\|(h_1,h_2)\|_{\cH^2}\rightarrow 0 \textrm{ as } \bh\rightarrow 0.
\end{align*}
Thus, equations (\ref{EQ:rf_tayl1}-\ref{EQ:rf_tayl2}) are the first order Taylor expansions of $T_1$ and $T_2$ around $(c_1,c_2)$. Hence we deduce that:
\begin{align*}
\nabla_1 T_1(c_1,c_2)=a, \nabla_2 T_1(c_1,c_2)=-b,
\nabla_1 T_2(c_1,c_2)=b, \nabla_2 T_2(c_1,c_2)=a.
\end{align*}
This completes the proof.
\end{proof}

Equations (\ref{EQ:fre_cauchy-riemann}) are  the \textit{Cauchy Riemann conditions} with respect to the \frechet notion of differentiability. Similar to the simple case of complex valued functions, they provide a necessary and sufficient condition, for a complex operator $\bT$ defined on $\HH$ to be differentiable in the complex sense, providing that $\bT$ is differentiable in the real sense. This is explored in the following proposition.

\noindent\rule[1ex]{\linewidth}{1pt}
\begin{proposition}\label{PRO:fre_CR2}
If  the operator $\bT:A\subseteq\HH\rightarrow\C$, $\bT(\bbf)=T_1(\bbf) + i T_2(\bbf)$, where $\bbf=u_{\bbf} + i v_{\bbf}$, is \frechet differentiable in the real sense at a point $(c_1,c_2)\in\cH^2$ and the \frechet Cauchy-Riemann conditions hold:
\begin{align}
\nabla_1 T_1(c_1,c_2)=\nabla_2 T_2(c_1,c_2) \textrm{ and } \nabla_2 T_1(c_1,c_2)=-\nabla_1 T_2(c_1,c_2),
\end{align}
then $\bT$ is differentiable in the complex sense at the point $\bc=(c_1,c_2)=c_1+c_2 i\in\HH$.
\end{proposition}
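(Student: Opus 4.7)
The plan is to mirror the second proof of Proposition \ref{PRO:CR2}, carrying the same substitution-and-cancellation strategy into the Hilbert-space setting. Since $\bT$ is \frechet differentiable in the real sense at $(c_1,c_2)$, the components $T_1,T_2:\cH^2\to\R$ admit first-order expansions whose continuous linear parts can, by the Riesz representation theorem, be written through the partial gradients $\nabla_1 T_j(\bc),\nabla_2 T_j(\bc)\in\cH$:
\begin{align*}
T_j(c_1+h_1,c_2+h_2)=T_j(c_1,c_2)+\langle h_1,\nabla_1 T_j(\bc)\rangle_{\cH}+\langle h_2,\nabla_2 T_j(\bc)\rangle_{\cH}+o(\|\bh\|_{\HH}),
\end{align*}
for $j=1,2$, where $\bh=h_1+ih_2$. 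Assembling $\bT=T_1+iT_2$ gives a single expansion that is $\R$-linear in $(h_1,h_2)$ plus a remainder of order $o(\|\bh\|_{\HH})$.

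Next, I would invoke the \frechet Cauchy--Riemann conditions to introduce $a:=\nabla_1 T_1(\bc)=\nabla_2 T_2(\bc)\in\cH$ and $b:=\nabla_1 T_2(\bc)=-\nabla_2 T_1(\bc)\in\cH$. Substituting reduces the combined expansion to
\begin{align*}
\bT(\bc+\bh)-\bT(\bc)=\bigl[\langle h_1,a\rangle_{\cH}-\langle h_2,b\rangle_{\cH}\bigr]+i\bigl[\langle h_1,b\rangle_{\cH}+\langle h_2,a\rangle_{\cH}\bigr]+o(\|\bh\|_{\HH}).
\end{align*}
Guided by the analogous scalar identity $f'(c)=\tfrac12(\partial f/\partial x-i\partial f/\partial y)$, I would propose the candidate $\bw:=a-ib\in\HH$ and verify directly, using the explicit formula (\ref{EQ:complex_inner}) for the complex inner product, that $\langle \bh,\bw\rangle_{\HH}$ equals the bracketed leading term above. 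Once this equality is checked, the defining limit of \frechet complex differentiability is fulfilled, with $d\bT(\bc)(\bh)=\langle \bh,a-ib\rangle_{\HH}$ and complex gradient $\nabla\bT(\bc)=a+ib$.

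The main obstacle is purely bookkeeping rather than conceptual: the complex inner product on $\HH$ carries the cross terms $i(\langle v_{\bbf},u_{\bg}\rangle_{\cH}-\langle u_{\bbf},v_{\bg}\rangle_{\cH})$ that are absent from the real inner product on $\cH^2$, so one has to align four pairs of terms simultaneously. The Cauchy--Riemann identifications $a$ and $b$ are precisely what cause two of those pairs to merge with the other two rather than leave a residual $\bh^{*}$-type contribution. If any such residual remained, Lemma \ref{LEM:limit} would forbid the existence of a complex \frechet derivative; it is exactly the vanishing of this residual that the hypothesis guarantees.
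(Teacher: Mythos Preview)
Your argument is correct and the candidate $\bw=a-ib$ works exactly as you claim; the verification against formula~(\ref{EQ:complex_inner}) goes through cleanly, and the resulting gradient $\nabla\bT(\bc)=a+ib$ agrees with the paper's.

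The route, however, differs from the paper's in a meaningful way. You impose the Cauchy--Riemann identifications $a,b$ at the outset and then check by direct computation that the real-linear leading term coincides with a single complex inner product $\langle\bh,\bw\rangle_{\HH}$. The paper instead keeps the four partial gradients separate, substitutes $h_1=(\bh+\bh^*)/2$ and $h_2=(\bh-\bh^*)/(2i)$, and arrives at the two-term decomposition
\[
\bT(\bc+\bh)=\bT(\bc)+\tfrac12\langle\bh,(\nabla_1\bT-i\nabla_2\bT)^*\rangle_{\HH}+\tfrac12\langle\bh^*,(\nabla_1\bT+i\nabla_2\bT)^*\rangle_{\HH}+o(\|\bh\|_{\HH}),
\]
valid for any real-differentiable $\bT$. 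Only then does it invoke Lemma~\ref{LEM:limit} to conclude that the $\bh^*$-coefficient must vanish, which is precisely the Cauchy--Riemann system. Your approach is more economical for the proposition as stated; the paper's detour buys the general identity (\ref{EQ:frechet_wirti2}), which is reused immediately afterward as the foundation for the \frechet Wirtinger derivatives in Section~\ref{SEC:fre_wirti_deriv}. You allude to Lemma~\ref{LEM:limit} only hypothetically, whereas the paper actually needs it to close its argument.
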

\noindent\rule[1ex]{\linewidth}{1pt}

\begin{proof}
Consider the first order Taylor expansions of $T_1$ and $T_2$ at $\bc=c_1 + i c_2 = (c_1,c_2)^T$:
\begin{align*}
T_1(\bc+\bh) &= T_1(\bc) + \left\langle h_1, \nabla_1 T_1(\bc)\right\rangle_{\cH} + \left\langle h_2, \nabla_2 T_1(\bc)\right\rangle_{\cH} + o(\|\bh\|_{\cH^2}),\\
T_2(\bc+\bh) &= T_2(\bc) + \left\langle h_1, \nabla_1 T_2(\bc)\right\rangle_{\cH}  + \left\langle h_2, \nabla_2 T_2(\bc)\right\rangle_{\cH} + o(\|\bh\|_{\cH^2}).
\end{align*}
Multiplying the second relation with $i$ and adding it to the first one, we take:
\begin{align*}
\bT(\bc+\bh) &= \bT(\bc) + \left\langle h_1, \nabla_1 T_1(\bc)\right\rangle_{\cH} + \left\langle h_2, \nabla_2 T_1(\bc) \right\rangle_{\cH}
+ i\left\langle h_1, \nabla_1 T_2(\bc)\right\rangle_{\cH} + i\left\langle h_2, \nabla_2 T_2(\bc)\right\rangle_{\cH} + o(\|\bh\|_{\cH^2})\\
&= \bT(\bc) + \left\langle h_1,  \nabla_1 T_1(\bc)  - i \nabla_1 T_2(\bc)  \right\rangle_{\HH}  + \left\langle h_2,  \nabla_2 T_1(\bc)  - i \nabla_2 T_2(\bc) \right\rangle_{\HH} + o(\|\bh\|_{\HH}).
\end{align*}
To simplify the notation we may define
\begin{align*}
\nabla_1 \bT(\bc) = \nabla_1 T_1(\bc)  + i \nabla_1 T_2(\bc)  \textrm{ and }
\nabla_2 \bT(\bc) = \nabla_2 T_1(\bc) + i \nabla_2 T_2(\bc)
\end{align*}
and obtain:
\begin{align*}
\bT(\bc+\bh) &= \bT(\bc) + \left\langle h_1, (\nabla_1\bT(\bc))^* \right\rangle_{\HH}  + \left\langle h_2, (\nabla_2 \bT(\bc))^* \right\rangle_{\HH} + o(\|\bh\|_{\cH^2}).
\end{align*}
Next, we substitute $h_1$ and $h_2$ using the relations $h_1= \frac{\bh+\bh^*}{2}$ and $h_2=\frac{\bh-\bh^*}{2i}$ and use the sesquilinear property of the inner product of $\HH$:
\begin{align}
\bT(\bc+\bh) &= \bT(\bc) + \frac{1}{2}\left\langle \bh, \left(\nabla_1\bT(\bc)\right)^* - \frac{1}{i}\left(\nabla_2\bT(\bc)\right)^*\right\rangle_{\HH} +
\frac{1}{2}\left\langle \bh^*, \left(\nabla_1\bT(\bc)\right)^* + \frac{1}{i} \left(\nabla_2\bT(\bc)\right)^*\right\rangle_{\HH}  + o(\|\bh\|_{\cH^2}) \nonumber\\
    &=\bT(\bc) + \frac{1}{2}\left\langle \bh, \left(\nabla_1\bT(\bc) + \frac{1}{i}\nabla_2\bT(\bc)\right)^*\right\rangle_{\HH} +
\frac{1}{2}\left\langle \bh^*, \left(\nabla_1\bT(\bc) - \frac{1}{i} \nabla_2\bT(\bc)\right)^*\right\rangle_{\HH}  + o(\|\bh\|_{\HH})\nonumber\\
 &=\bT(\bc) + \frac{1}{2}\left\langle \bh, \left(\nabla_1\bT(\bc) -i\nabla_2\bT(\bc)\right)^*\right\rangle_{\HH} +
\frac{1}{2}\left\langle \bh^*, \left(\nabla_1\bT(\bc) +i \nabla_2\bT(\bc)\right)^*\right\rangle_{\HH}  + o(\|\bh\|_{\HH}).\label{EQ:frechet_wirti2}
\end{align}
It will be shown that equation (\ref{EQ:frechet_wirti2}) is essential for the development of Wirtinger's calculus. To complete the proof of the proposition we compute the fraction that appears in the definition of the complex \frechet derivative:
\begin{align*}
\frac{\bT(\bc+\bh)-\bT(\bc) - \langle \bh, \bw\rangle_{\HH}}{\|\bh\|_{\HH}}  =& \frac{\frac{1}{2}\left\langle \bh, \left(\nabla_1\bT(\bc) -i\nabla_2\bT(\bc)\right)^*\right\rangle_{\HH} +
\frac{1}{2}\left\langle \bh^*, \left(\nabla_1\bT(\bc) + i \nabla_2\bT(\bc)\right)^*\right\rangle_{\HH}  - \langle \bh, \bw\rangle_{\HH} }{\|\bh\|_{\HH}}\\
& + \frac{o(\|\bh\|_{\HH})}{\|\bh\|_{\HH}}.
\end{align*}
Recall that, since $o(\|\bh\|_{\HH})/\|\bh\|_{\HH}\rightarrow 0$ as $\|\bh\|_{\HH}\rightarrow 0$, for this limit to exist and vanish, it is necessary that $\nabla_1\bT(\bc) + i \nabla_2\bT(\bc)=0$ and $\bw^*=\nabla_1\bT(\bc)(\bc) - i \nabla_2\bT(\bc)$ (see lemma \ref{LEM:limit}).
However, according to our definition,
\begin{align*}
\nabla_1\bT(\bc)  + i \nabla_2\bT(\bc)= \left(\nabla_1 T_1(\bc) - \nabla_2 T_2(\bc)\right) + i \left(\nabla_1 T_2(\bc) + \nabla_2 T_1(\bc)\right).
\end{align*}
Thus, $\bT$ is differentiable in the \frechet complex sense, iff the Cauchy-Riemann conditions hold. Moreover, in this case:
\begin{align*}
\nabla\bT(\bc) =& \frac{1}{2}\left(\nabla_1\bT(\bc)  -i \nabla_2\bT(\bc)\right)\\
         =& \frac{1}{2}\left(\nabla_1 T_1(\bc) + i\nabla_1 T_2(\bc)\right)
           - \frac{i}{2}\left(\nabla_2 T_1(\bc) + i\nabla_2 T_2(\bc)\right)\\
           =& \frac{1}{2}\left(\nabla_1 T_1(\bc) + \nabla_2 T_2(\bc)\right)
           + \frac{i}{2}\left(\nabla_1 T_2(\bc) - \nabla_2 T_1(\bc)\right)\\
         =& \nabla_1 T_1(\bc) + i\nabla_1 T_2(\bc)\\
         =& \nabla_2 T_2(\bc) - i\nabla_2 T_1(\bc).
\end{align*}
\end{proof}

%---------------------------------------------------------------------------------------
\subsection{\frechet conjugate-complex Derivative}\label{SEC:fre_conj_compl}

An alternative definition of a complex derivative based on the \frechet notion of the differentiability on the Hilbert space $\HH$ is the following. Consider an operator $\bT:A\subseteq\HH\rightarrow\C$, such that $\bT(\bbf) = \bT(u_{\bbf} + i v_{\bbf}) = \bT(u_{\bbf}, v_{\bbf}) = T_1(u_{\bbf}, v_{\bbf}) + iT_2(u_{\bbf}, v_{\bbf})$, $u_{\bbf},v_{\bbf}\in\cH$. We shall say that $\bT$ is \textit{\frechet conjugate-complex differentiable} (or that it is differentiable in the \textit{\frechet conjugate-complex sense}) at $\bc\in\HH$, if there is a $\bw\in\HH$, such that the limit
\begin{align}
\lim_{\|\bh\|_{\HH}\rightarrow 0}\frac{\bT(\bc+\bh) - \bT(\bc) - \langle \bh^*, \bw\rangle_{\HH}}{\|\bh\|_{\HH}} = 0.
\end{align}
The continuous linear operator $d_*\bT(\bc):\HH\rightarrow\C$, such that $d_*\bT(\bc)(\bh)=\langle \bh^*, \bw\rangle_{\HH}$ is called the \textit{\frechet conjugate-complex derivative} of $\bT$ at $\bc$ and the element $\nabla_*\bT(\bc) = w^*\in\HH$ is called the \textit{\frechet conjugate-complex gradient} of $\bT$ at $\bc$. Following a procedure similar to the one presented in section \ref{SEC:fre_CR_conditions}, we may prove the following

\noindent\rule[1ex]{\linewidth}{1pt}
\begin{proposition}
Let $\bT:A\subset\HH\rightarrow\C$ be an operator, such that $\bT(\bbf)=\bT(u_{\bbf} + i v_{\bbf}) = \bT(u_{\bbf}, v_{\bbf}) = T_1(u_{\bbf}, v_{\bbf}) + i T_2(u_{\bbf}, v_{\bbf})$.
If the \frechet conjugate-complex derivative of $\bT$ at a point $\bc\in A$ (i.e., $d_*\bT(\bc):\HH\rightarrow\C$) exists, then $T_1$ and $T_2$ are differentiable at the point $\bc=(c_1,c_1)=c_1 + i c_2$, where $c_1, c_2 \in \cH$. Furthermore,
\begin{align}\label{EQ:fre_cauchy-riemann}
\nabla_1 T_1(c_1,c_2)=-\nabla_2 T_2(c_1,c_2) \textrm{ and } \nabla_2 T_1(c_1,c_2)=\nabla_1 T_2(c_1,c_2).
\end{align}
These are called the \frechet conjugate Cauchy-Riemann conditions.
\end{proposition}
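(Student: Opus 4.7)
The plan is to mirror exactly the proof strategy used for the ordinary Cauchy--Riemann proposition immediately preceding this one, with the single modification that the continuous linear functional representing the derivative now pairs $\bh^*$ (rather than $\bh$) with some $\bw \in \HH$. The crux is to expand this pairing in terms of real inner products on $\cH$, split into real and imaginary parts, and read off the partial gradients.

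First, I would start from the definition of conjugate-complex Fréchet differentiability: there is a $\bw \in \HH$, which I write as $\bw = a + i b$ with $a, b \in \cH$, such that
\begin{equation*}
\bT(\bc + \bh) = \bT(\bc) + \langle \bh^*, \bw\rangle_{\HH} + o(\|\bh\|_{\HH}).
\end{equation*}
Setting $\bh = h_1 + i h_2$ with $h_1, h_2 \in \cH$, so that $\bh^* = h_1 - i h_2$, I would then compute $\langle \bh^*, \bw\rangle_{\HH}$ directly from the complex inner product definition (\ref{EQ:complex_inner}). This yields
\begin{equation*}
\langle h_1 - i h_2, a + i b\rangle_{\HH} = \langle h_1, a\rangle_{\cH} - \langle h_2, b\rangle_{\cH} - i\bigl(\langle h_2, a\rangle_{\cH} + \langle h_1, b\rangle_{\cH}\bigr).
\end{equation*}
This is the one algebraic step where care is needed; the sign pattern differs from the holomorphic case precisely because of the conjugation on $\bh$, and it is what will eventually flip the signs in the Cauchy--Riemann relations.

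Next, I would equate real and imaginary parts with those of $\bT(\bc+\bh) - \bT(\bc)$, thereby obtaining first-order expansions
\begin{align*}
T_1(c_1 + h_1, c_2 + h_2) &= T_1(c_1, c_2) + \langle h_1, a\rangle_{\cH} - \langle h_2, b\rangle_{\cH} + \Re[o(\|\bh\|_{\HH})],\\
T_2(c_1 + h_1, c_2 + h_2) &= T_2(c_1, c_2) - \langle h_1, b\rangle_{\cH} - \langle h_2, a\rangle_{\cH} + \Im[o(\|\bh\|_{\HH})].
\end{align*}
Since $\|\bh\|_{\HH} = \|(h_1,h_2)\|_{\cH^2}$, the remainder terms are genuine $o(\|(h_1,h_2)\|_{\cH^2})$, so these are the real first-order Fréchet expansions of $T_1, T_2$ at $(c_1, c_2)$.

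Finally, by uniqueness of the real Fréchet partial gradients I would read off
\begin{equation*}
\nabla_1 T_1(c_1, c_2) = a, \quad \nabla_2 T_1(c_1, c_2) = -b, \quad \nabla_1 T_2(c_1, c_2) = -b, \quad \nabla_2 T_2(c_1, c_2) = -a,
\end{equation*}
from which both conjugate Cauchy--Riemann identities $\nabla_1 T_1 = -\nabla_2 T_2$ and $\nabla_2 T_1 = \nabla_1 T_2$ follow immediately. The only real obstacle is the bookkeeping in the inner-product expansion; the remainder of the argument is completely parallel to the holomorphic case and requires no new ideas beyond the sign change introduced by conjugating $\bh$.
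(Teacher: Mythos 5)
Your proof is correct and follows exactly the route the paper intends: it mirrors the first-order expansion argument of Proposition 3.4 (the holomorphic Fr\'echet Cauchy--Riemann case), with the conjugation on $\bh$ handled correctly in the inner-product expansion, leading to $\nabla_1 T_1 = a$, $\nabla_2 T_1 = -b$, $\nabla_1 T_2 = -b$, $\nabla_2 T_2 = -a$ and hence to the conjugate Cauchy--Riemann conditions. The only cosmetic difference is that you parametrize the representer $\bw = a+ib$ directly rather than the gradient $\nabla_*\bT(\bc)=\bw^*$ as the paper does in the analogous proof, which changes nothing of substance.
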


\begin{proposition}\label{PRO:fre_conj_CR2}
If  the operator $\bT:A\subseteq\HH\rightarrow\C$, $\bT(\bbf)=T_1(\bbf) + i T_2(\bbf)$, where $\bbf=u_{\bbf} + i v_{\bbf}$, is \frechet differentiable in the real sense at a point $(c_1,c_2)\in\cH^2$ and the \frechet conjugate Cauchy-Riemann conditions hold:
\begin{align}
\nabla_1 T_1(c_1,c_2)=-\nabla_2 T_2(c_1,c_2) \textrm{ and } \nabla_2 T_1(c_1,c_2)=\nabla_1 T_2(c_1,c_2),
\end{align}
then $\bT$ is differentiable in the \frechet conjugate-complex sense at the point $\bc=(c_1,c_2)=c_1+c_2 i\in\HH$.
\end{proposition}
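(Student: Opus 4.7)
The plan is to adapt the proof of Proposition \ref{PRO:fre_CR2} essentially verbatim, since the real-sense Taylor expansion in equation (\ref{EQ:frechet_wirti2}) was derived without invoking either set of Cauchy--Riemann conditions. So I would first invoke that expansion:
\begin{align*}
\bT(\bc+\bh) = \bT(\bc) &+ \tfrac{1}{2}\left\langle \bh, \left(\nabla_1\bT(\bc) -i\nabla_2\bT(\bc)\right)^*\right\rangle_{\HH} \\
&+ \tfrac{1}{2}\left\langle \bh^*, \left(\nabla_1\bT(\bc) +i \nabla_2\bT(\bc)\right)^*\right\rangle_{\HH} + o(\|\bh\|_{\HH}),
\end{align*}
which holds by the assumption that $\bT$ is \frechet differentiable in the real sense at $\bc$.

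Next I would set up the conjugate-complex candidate. By the definition in Section \ref{SEC:fre_conj_compl}, \frechet conjugate-complex differentiability requires exhibiting $\bw\in\HH$ such that $\bT(\bc+\bh)-\bT(\bc)-\langle \bh^*,\bw\rangle_{\HH}=o(\|\bh\|_{\HH})$. Reading off the formula above, the natural choice is
\begin{align*}
\bw=\tfrac{1}{2}\left(\nabla_1\bT(\bc) +i \nabla_2\bT(\bc)\right)^*.
\end{align*}
With this choice, the difference $\bT(\bc+\bh)-\bT(\bc)-\langle\bh^*,\bw\rangle_{\HH}$ reduces to $\tfrac{1}{2}\langle\bh,(\nabla_1\bT(\bc)-i\nabla_2\bT(\bc))^*\rangle_{\HH}+o(\|\bh\|_{\HH})$, so the whole argument hinges on showing that the first coefficient vanishes.

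Here the \frechet conjugate Cauchy--Riemann conditions come in. Using the definitions $\nabla_1\bT(\bc)=\nabla_1 T_1(\bc)+i\nabla_1 T_2(\bc)$ and $\nabla_2\bT(\bc)=\nabla_2 T_1(\bc)+i\nabla_2 T_2(\bc)$ from the proof of Proposition \ref{PRO:fre_CR2}, a short calculation gives
\begin{align*}
\nabla_1\bT(\bc)-i\nabla_2\bT(\bc) = \bigl(\nabla_1 T_1(\bc)+\nabla_2 T_2(\bc)\bigr) + i\bigl(\nabla_1 T_2(\bc)-\nabla_2 T_1(\bc)\bigr),
\end{align*}
and the hypotheses $\nabla_1 T_1=-\nabla_2 T_2$ and $\nabla_2 T_1=\nabla_1 T_2$ make both parentheses vanish.

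Finally I would close by dividing through by $\|\bh\|_{\HH}$ and applying Lemma \ref{LEM:limit} to rule out any residual term, concluding that $\bT$ is \frechet conjugate-complex differentiable at $\bc$ with conjugate-complex gradient $\nabla_*\bT(\bc)=\tfrac{1}{2}(\nabla_1\bT(\bc)+i\nabla_2\bT(\bc))$. I do not anticipate any real obstacle: the conceptual work was already done in Proposition \ref{PRO:fre_CR2}, and the only substantive difference is that the roles of the two conjugate-linear forms $\langle\bh,\cdot\rangle_{\HH}$ and $\langle\bh^*,\cdot\rangle_{\HH}$ are swapped, forcing the opposite coefficient to vanish and producing the sign flip in the Cauchy--Riemann relations.
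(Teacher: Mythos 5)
Your proposal is correct and follows exactly the route the paper intends: the paper omits an explicit proof and simply notes it is obtained by the same procedure as Proposition \ref{PRO:fre_CR2}, which is what you do --- reuse the expansion (\ref{EQ:frechet_wirti2}) (valid under real \frechet differentiability alone), take $\bw=\tfrac{1}{2}\left(\nabla_1\bT(\bc)+i\nabla_2\bT(\bc)\right)^*$, and let the conjugate Cauchy--Riemann conditions kill the coefficient of $\bh$. The only cosmetic remark is that Lemma \ref{LEM:limit} is not actually needed for this (sufficiency) direction, since once the $\bh$-coefficient vanishes the remainder is already $o(\|\bh\|_{\HH})$.
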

\noindent\rule[1ex]{\linewidth}{1pt}

If an operator $\bT$ is differentiable in the \frechet conjugate-complex  sense, at every point of an open set $A$, we will say the $\bT$ is \textit{\frechet conjugate holomorphic} on $A$. It can be shown, that then $\bT$ has a form of a Taylor series expansion around $\bc\in A$, i.e.,
\begin{align}\label{EQ:fre_con_complex_Taylor}
\bT(\bc+\bh) = \sum_{n=0}^{\infty} \frac{1}{n!}d_*^n\bT(\bc)(\bh^*, \bh^*,\dots, \bh^*).
\end{align}
In this case, we will say that $\bT$ is \textit{conjugate complex analytic at $\bc$}. Note, that if $\bT(\bbf)$ is complex analytic at $\bc$, then $\bT(\bbf)^*$ is conjugate -complex analytic at $\bc$.

%---------------------------------------------------------------------------------------
\subsection{\frechet Wirtinger Derivatives}\label{SEC:fre_wirti_deriv}

It is evident, that if neither the \frechet Cauchy Riemann conditions, nor the \frechet conjugate Cauchy-Riemann conditions are satisfied for an operator $\bT$, then the \frechet complex derivatives cannot be exploited and the function cannot be expressed in terms of $\bh$ or $\bh^*$, as in the case of \frechet complex or conjugate-complex differentiable functions.  Nevertheless, if $\bT$ is \frechet differentiable in the real sense (i.e., $T_1$ and $T_2$ are \frechet differentiable), we may still find a form of Taylor's series expansion. Recall, for example, that in the proof of proposition \ref{PRO:fre_CR2}, we concluded, based on the first order Taylor's series expansion of $T_1$, $T_2$, that (equation (\ref{EQ:frechet_wirti2})):
\begin{align*}
\bT(\bc+\bh) =\bT(\bc) + \frac{1}{2}\left\langle \bh, \left(\nabla_1\bT(\bc) -i\nabla_2\bT(\bc)\right)^*\right\rangle_{\HH} +
\frac{1}{2}\left\langle \bh^*, \left(\nabla_1\bT(\bc) +i \nabla_2\bT(\bc)\right)^*\right\rangle_{\HH}  + o(\|\bh\|_{\HH}).
\end{align*}
One may notice that in the more general case, where $\bT$ is \frechet real-differentiable, it's Taylor's expansion is casted in terms of both $\bh$ and $\bh^*$. This can be generalized for higher order Taylor's expansion formulas following the same rationale. Observe also that, if $\bT$ is \frechet complex, or conjugate-complex differentiable, this relation degenerates (due to the Cauchy Riemann conditions) to the respective Taylor's expansion formula (i.e., (\ref{EQ:fre_complex_Taylor}) or (\ref{EQ:fre_con_complex_Taylor})). In this context, the following definitions come naturally.

We define the \textit{\frechet Wirtinger's gradient} (or \textit{W-gradient} for short) of $\bT$ at $\bc$ as
\begin{align}\label{EQ:fre_wirti_der}
\nabla_{\bbf}\bT(\bc) = \frac{1}{2}\left(\nabla_1\bT(\bc)  -i \nabla_2\bT(\bc)\right)
= \frac{1}{2}\left(\nabla_1 T_1(\bc) + \nabla_2 T_2(\bc)\right)
      + \frac{i}{2}\left(\nabla_1 T_2(\bc) - \nabla_2 T_1(\bc)\right),
\end{align}
and the \textit{\frechet Wirtinger's derivative} (or \textit{$W$-derivative}) as $\frac{\partial \bT}{\partial \bbf}(\bc):\HH\rightarrow\C$, such that $\frac{\partial \bT}{\partial \bbf}(\bc)(\bh)=\langle \bh, \nabla_{\bbf}\bT(\bc)^*\rangle_{\HH}$.
Consequently, the \textit{\frechet conjugate Wirtinger's gradient} (or \textit{CW-gradient} for short) and the \textit{\frechet conjugate Wirtinger's derivative} (or \textit{CW-derivative}) of $\bT$ at $\bc$ are defined by:
\begin{align}\label{EQ:fre_conj_wirti_der}
\nabla_{\bbf^*}\bT(\bc) = \frac{1}{2}\left(\nabla_1\bT(\bc)  +  i \nabla_2\bT(\bc)\right)
= \frac{1}{2}\left(\nabla_1 T_1(\bc) - \nabla_2 T_2(\bc)\right)
      + \frac{i}{2}\left(\nabla_1 T_2(\bc) + \nabla_2 T_1(\bc)\right),
\end{align}
and $\frac{\partial \bT}{\partial \bbf^*}(\bc):\HH\rightarrow\C$, such that $\frac{\partial \bT}{\partial \bbf^*}(\bc)(\bh)=\langle\nabla\bh, \left(\nabla_{\bbf^*}\bT(\bc)\right)^*\rangle_{\HH}$.
Note, that both the W-derivative and the CW-derivative
exist, if $\bT$ is \frechet differentiable in the real sense. In view of these new definitions, equation (\ref{EQ:frechet_wirti2}) may now be recasted as follows
\begin{align}\label{EQ:fre_wirti3}
\bT(\bc+\bh) = \bT(\bc) + \left\langle \bh, \left(\nabla_{\bbf}\bT(\bc)\right)^* \right\rangle_{\HH} +
\left\langle\bh^*,  \left(\nabla_{\bbf^*}\bT(\bc)\right)^*\right\rangle_{\HH} + o(\|\bh\|_{\HH}).
\end{align}

At first glance the definitions the W and CW derivatives seem rather obscure. Although, it is evident that they are defined so that that they are consistent with the Taylor's formula (equation (\ref{EQ:frechet_wirti2})), their computation seems quite difficult. However, this is not the case. We will show that they may be computed quickly using simple differentiation rules. First, observe that if $\bT$ satisfies the \frechet Cauchy Riemann conditions then the W-derivative degenerates to the standard complex derivative. The following theorem establishes the fundamental property of W and CW derivatives. Its proof is rather obvious.

\noindent\rule[1ex]{\linewidth}{1pt}
\begin{theorem}\label{THE:fre_wirti}
If $\bT$ is \frechet complex differentiable at $\bc$, then its W derivative degenerates to the standard \frechet complex derivative, while its CW derivative vanishes, i.e.,
\begin{align*}
\nabla_{\bbf}\bT(\bc)=\nabla\bT(\bc),\quad \nabla_{\bbf^*}\bT(\bc)=0.
\end{align*}
Consequently, if $\bT$ is \frechet conjugate-complex differentiable at $\bc$, then its CW derivative degenerates to the standard \frechet conjugate-complex derivative, while its W derivative vanishes, i.e.,
\begin{align*}
\nabla_{\bbf^*}\bT(\bc)=\nabla_{^*}\bT(\bc),\quad \nabla_{\bbf}\bT(\bc)=0.
\end{align*}
\end{theorem}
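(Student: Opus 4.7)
The plan is to reduce the statement to the two sets of Cauchy--Riemann conditions already established and then substitute directly into the definitions \eqref{EQ:fre_wirti_der} and \eqref{EQ:fre_conj_wirti_der}. Since the claim is of the form ``if $\bT$ is complex/conjugate-complex differentiable, then the corresponding Wirtinger gradient collapses to the ordinary gradient and the other gradient vanishes,'' no limit arguments are needed beyond those already carried out in Sections \ref{SEC:fre_CR_conditions} and \ref{SEC:fre_conj_compl}; the W/CW objects are \emph{defined} in terms of the real partial \frechet gradients $\nabla_1 T_1, \nabla_2 T_1, \nabla_1 T_2, \nabla_2 T_2$, and the hypotheses force algebraic relations among these gradients.

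First I would handle the complex-differentiable case. Assuming $\bT$ is \frechet complex differentiable at $\bc$, the proposition in Section \ref{SEC:fre_CR_conditions} gives $\nabla_1 T_1(\bc)=\nabla_2 T_2(\bc)$ and $\nabla_2 T_1(\bc)=-\nabla_1 T_2(\bc)$. Substituting these into the definition of the CW-gradient
$$\nabla_{\bbf^*}\bT(\bc) = \tfrac{1}{2}\bigl(\nabla_1 T_1(\bc) - \nabla_2 T_2(\bc)\bigr) + \tfrac{i}{2}\bigl(\nabla_1 T_2(\bc) + \nabla_2 T_1(\bc)\bigr)$$
makes both parenthesised terms vanish, giving $\nabla_{\bbf^*}\bT(\bc)=0$ at once. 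For the W-gradient, the same substitutions yield $\nabla_{\bbf}\bT(\bc)=\nabla_1 T_1(\bc)+i\nabla_1 T_2(\bc)$; but the final chain of equalities at the end of the proof of Proposition \ref{PRO:fre_CR2} already identifies exactly this expression with $\nabla\bT(\bc)$, so $\nabla_{\bbf}\bT(\bc)=\nabla\bT(\bc)$.

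The conjugate-complex case is entirely parallel. Using the \frechet conjugate Cauchy--Riemann conditions from Section \ref{SEC:fre_conj_compl}, namely $\nabla_1 T_1(\bc)=-\nabla_2 T_2(\bc)$ and $\nabla_2 T_1(\bc)=\nabla_1 T_2(\bc)$, direct substitution into \eqref{EQ:fre_wirti_der} causes both $\nabla_1 T_1+\nabla_2 T_2$ and $\nabla_1 T_2-\nabla_2 T_1$ to vanish, so $\nabla_{\bbf}\bT(\bc)=0$. For the CW-gradient, the same substitutions reduce \eqref{EQ:fre_conj_wirti_der} to $\nabla_1 T_1(\bc) + i\nabla_1 T_2(\bc)$ (or equivalently $-\nabla_2 T_2(\bc)+i\nabla_1 T_2(\bc)$), which matches the closed form for $\nabla_*\bT(\bc)$ obtainable by the analogue of the argument used in Proposition \ref{PRO:fre_CR2}.

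There is no genuine obstacle here: every step is a direct algebraic substitution of the relevant Cauchy--Riemann identities into the definitions of the W- and CW-gradients. The only mildly delicate point is making sure the closed-form expression for the complex (resp.\ conjugate-complex) \frechet gradient produced at the end of the proof of Proposition \ref{PRO:fre_CR2} is invoked cleanly, so that the W-gradient is seen to coincide with $\nabla\bT(\bc)$ (resp.\ the CW-gradient with $\nabla_*\bT(\bc)$) rather than merely with ``some expression built from the partials of $T_1$ and $T_2$.''
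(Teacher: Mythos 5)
Your proposal is correct and follows essentially the route the paper intends: the paper declares the proof ``rather obvious'' precisely because, as you do, one substitutes the \frechet Cauchy--Riemann conditions (resp.\ their conjugate counterparts) into the definitions (\ref{EQ:fre_wirti_der}) and (\ref{EQ:fre_conj_wirti_der}), and identifies the surviving expression $\nabla_1 T_1(\bc)+i\nabla_1 T_2(\bc)$ with $\nabla\bT(\bc)$ (resp.\ $\nabla_*\bT(\bc)$) via the closed form obtained at the end of Proposition \ref{PRO:fre_CR2} and its conjugate analogue. No gaps; your algebra checks out in both cases.
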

\noindent\rule[1ex]{\linewidth}{1pt}

In the sequel, we will develop the main differentiation rules of \frechet Wirtinger's derivatives. Most of the proofs of the following properties are straightforward. Nevertheless, we present them all for completeness.

\noindent\rule[1ex]{\linewidth}{1pt}
\begin{proposition}\label{PRO:fre_w_rule1}
If $\bT$ is  \frechet differentiable in the real sense at $\bc$, then
\begin{align}
\left(\nabla_{\bbf}\bT(\bc)\right)^* &= \nabla_{\bbf^*}\bT^*(\bc).\label{EQ:fre_w_rule1}
\end{align}
\end{proposition}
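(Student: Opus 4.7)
The plan is to proceed by direct computation, mirroring the argument given for the scalar case in Proposition \ref{PRO:w_rule1}. The identity claims equality between two elements of $\HH$, both of which are defined explicitly in terms of the four real \frechet gradients $\nabla_1 T_1(\bc), \nabla_2 T_1(\bc), \nabla_1 T_2(\bc), \nabla_2 T_2(\bc)$, so the argument reduces to rearranging these four terms.

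First, I would apply the definition (\ref{EQ:fre_wirti_der}) to write
\begin{align*}
\left(\nabla_{\bbf}\bT(\bc)\right)^{*}
= \tfrac{1}{2}\bigl(\nabla_1 T_1(\bc) + \nabla_2 T_2(\bc)\bigr)
  - \tfrac{i}{2}\bigl(\nabla_1 T_2(\bc) - \nabla_2 T_1(\bc)\bigr),
\end{align*}
using only that complex conjugation flips the sign of the imaginary part (which, pointwise at each $\bh\in\cH$, corresponds to conjugating the scalar output of $\nabla_{\bbf}\bT(\bc)$ viewed as an element of $\HH$).

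Next, I would decompose $\bT^{*}=T_{1}+i(-T_{2})$ into its real and imaginary parts and feed this into definition (\ref{EQ:fre_conj_wirti_der}) of the CW-gradient, with $T_{1}$ playing the role of the real part and $-T_{2}$ the role of the imaginary part. A direct substitution gives
\begin{align*}
\nabla_{\bbf^{*}}\bT^{*}(\bc)
= \tfrac{1}{2}\bigl(\nabla_1 T_1(\bc) - \nabla_2(-T_2)(\bc)\bigr)
  + \tfrac{i}{2}\bigl(\nabla_1(-T_2)(\bc) + \nabla_2 T_1(\bc)\bigr),
\end{align*}
and the linearity of the \frechet gradient (pull out the minus sign) rewrites this as exactly the expression obtained in the previous step. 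Equality of the two sides then follows.

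There is essentially no obstacle here: the argument is purely bookkeeping on the four real gradients, and the only subtlety is to be careful that conjugation of a $\HH$-valued gradient really does just negate the $\cH$-component playing the role of the imaginary part, which is immediate from the definition of the complex structure on $\HH$ given in (\ref{EQ:complex_inner}). No limit-based argument or appeal to Lemma \ref{LEM:limit} is required, since both sides are expressed as a priori well-defined elements of $\HH$ whenever $\bT$ is real \frechet differentiable at $\bc$.
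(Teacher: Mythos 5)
Your proposal is correct and is essentially the paper's own proof: both sides are expanded via the definitions (\ref{EQ:fre_wirti_der}) and (\ref{EQ:fre_conj_wirti_der}) with $\bT^*=T_1+i(-T_2)$, and the identity follows by pulling the minus sign through the real \frechet gradients, exactly as in the paper's two-line computation. The extra remarks on conjugation in $\HH$ and linearity of the real gradient only make explicit what the paper leaves implicit.
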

\noindent\rule[1ex]{\linewidth}{1pt}

\begin{proof}
\begin{align*}
\left(\nabla_{\bbf}\bT(\bc)\right)^* &=  \frac{1}{2}\left(\nabla_1 T_1(\bc) + \nabla_2 T_2(\bc)\right)
      - \frac{i}{2}\left(\nabla_1 T_2(\bc) - \nabla_2 T_1(\bc)\right)\\
&=  \frac{1}{2}\left(\nabla_1 T_1(\bc) - \nabla_2(-T_2)(\bc)\right)
      + \frac{i}{2}\left(\nabla_1(-T_2)(\bc) + \nabla_2 T_1(\bc)\right)\\
&=\left(\nabla_{\bbf^*}\bT^*(\bc)\right).
\end{align*}
\end{proof}

\noindent\rule[1ex]{\linewidth}{1pt}
\begin{proposition}\label{PRO:fre_w_rule2}
If $\bT$ is  \frechet differentiable in the real sense at $\bc$, then
\begin{align}
\left(\nabla_{\bbf^*}\bT(\bc)\right)^* &= \nabla_{\bbf}\bT^*(\bc)).\label{EQ:fre_w_rule2}
\end{align}
\end{proposition}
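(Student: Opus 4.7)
The plan is to mirror the proof of Proposition \ref{PRO:fre_w_rule1} essentially verbatim, just with the signs in the CW-gradient swapped from those in the W-gradient. Concretely, I would start from the explicit formula for the CW-gradient,
\begin{align*}
\nabla_{\bbf^*}\bT(\bc) = \frac{1}{2}\left(\nabla_1 T_1(\bc) - \nabla_2 T_2(\bc)\right) + \frac{i}{2}\left(\nabla_1 T_2(\bc) + \nabla_2 T_1(\bc)\right),
\end{align*}
take its complex conjugate, and then rewrite the result by absorbing minus signs into the second component $T_2$.

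The key manipulation is to observe that $\bT^* = T_1 + i(-T_2)$, so the real and imaginary parts of $\bT^*$ are $T_1$ and $-T_2$. Substituting these into the definition of the W-gradient (equation \eqref{EQ:fre_wirti_der}) gives
\begin{align*}
\nabla_{\bbf}\bT^*(\bc) = \frac{1}{2}\left(\nabla_1 T_1(\bc) + \nabla_2(-T_2)(\bc)\right) + \frac{i}{2}\left(\nabla_1(-T_2)(\bc) - \nabla_2 T_1(\bc)\right),
\end{align*}
which after simplification matches $(\nabla_{\bbf^*}\bT(\bc))^*$ term by term.

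There is no real obstacle here; the proof is an algebraic verification using only the linearity of the real Fréchet gradient $\nabla_\iota$ in its argument and the definitions \eqref{EQ:fre_wirti_der} and \eqref{EQ:fre_conj_wirti_der}. The real-sense differentiability hypothesis is needed only to guarantee that all four gradients $\nabla_1 T_1(\bc)$, $\nabla_2 T_1(\bc)$, $\nabla_1 T_2(\bc)$, $\nabla_2 T_2(\bc)$ exist, so that the identity has content. Alternatively, one could derive this proposition as an immediate corollary of Proposition \ref{PRO:fre_w_rule1} applied to $\bT^*$ in place of $\bT$ (since $(\bT^*)^* = \bT$), but a direct computation is just as short and more transparent.
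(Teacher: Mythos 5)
Your proposal is correct and follows essentially the same route as the paper: conjugate the explicit formula \eqref{EQ:fre_conj_wirti_der} for $\nabla_{\bbf^*}\bT(\bc)$, absorb the minus signs into $-T_2$ using linearity of the real \frechet gradients, and recognize the result as $\nabla_{\bbf}\bT^*(\bc)$ via \eqref{EQ:fre_wirti_der} with $\bT^*=T_1+i(-T_2)$. The paper's proof is exactly this term-by-term verification, so no further comment is needed.
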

\noindent\rule[1ex]{\linewidth}{1pt}

\begin{proof}
\begin{align*}
\left(\nabla_{\bbf^*}\bT(\bc)\right)^* &=  \frac{1}{2}\left(\nabla_1 T_1(\bc) - \nabla_2 T_2(\bc)\right)
      - \frac{i}{2}\left(\nabla_1 T_2(\bc) + \nabla_2 T_1(\bc)\right)\\
&=  \frac{1}{2}\left(\nabla_1 T_1(\bc) + \nabla_2(-T_2)(\bc)\right)
      + \frac{i}{2}\left(\nabla_1(-T_2)(\bc) - \nabla_2 T_1(\bc)\right)\\
&=\left(\nabla_{\bbf}\bT^*(\bc)\right).
\end{align*}
\end{proof}

\noindent\rule[1ex]{\linewidth}{1pt}
\begin{proposition}[Linearity]\label{PRO:fre_w_linearity}
If $\bT$, $\bS$ are \frechet differentiable in the real sense at $\bc$ and $\alpha, \beta\in\C$, then
\begin{align}
\nabla_{\bbf}(\alpha \bT + \beta \bS)(\bc) &= \alpha\nabla_{\bbf}\bT(\bc) + \beta\nabla_{\bbf}\bS(\bc),\\
\nabla_{\bbf^*}(\alpha \bT + \beta \bS)(\bc) &= \alpha\nabla_{\bbf^*}\bT(\bc) + \beta\nabla_{\bbf^*}\bS(\bc)
\end{align}
\end{proposition}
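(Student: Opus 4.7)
The plan is to mirror the finite-dimensional proof of Proposition \ref{PRO:w_linearity}, but with real Fréchet gradients $\nabla_1,\nabla_2$ in place of the partial derivatives $\partial/\partial x,\partial/\partial y$. First I decompose everything into real components: write $\bT=T_1+iT_2$, $\bS=S_1+iS_2$ with $T_j,S_j:\cH^2\to\R$ Fréchet differentiable in the real sense at $\bc$, and $\alpha=\alpha_1+i\alpha_2$, $\beta=\beta_1+i\beta_2$ with $\alpha_j,\beta_j\in\R$. Setting $\bR=\alpha\bT+\beta\bS$, carrying out the complex multiplication gives
\begin{align*}
\bR &= (\alpha_1 T_1-\alpha_2 T_2+\beta_1 S_1-\beta_2 S_2)\;+\;i(\alpha_1 T_2+\alpha_2 T_1+\beta_1 S_2+\beta_2 S_1),
\end{align*}
so the real and imaginary parts $R_1,R_2$ of $\bR$ are identified as real linear combinations of the $T_j,S_j$.

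Next I would substitute these expressions into the definition \eqref{EQ:fre_wirti_der},
\begin{align*}
\nabla_{\bbf}\bR(\bc)=\tfrac{1}{2}\bigl(\nabla_1 R_1(\bc)+\nabla_2 R_2(\bc)\bigr)+\tfrac{i}{2}\bigl(\nabla_1 R_2(\bc)-\nabla_2 R_1(\bc)\bigr),
\end{align*}
and invoke the ordinary $\R$-linearity of the real Fréchet gradients (a standard consequence of the uniqueness of the Fréchet derivative) to distribute $\nabla_1,\nabla_2$ across the real linear combinations. Then I collect the resulting eight terms: those carrying $\alpha_1,\alpha_2$ should reassemble, after factoring the complex scalar $\alpha=\alpha_1+i\alpha_2$ through the pair $(\nabla_1 T_1+\nabla_2 T_2,\,\nabla_1 T_2-\nabla_2 T_1)$, into $\alpha\,\nabla_{\bbf}\bT(\bc)$; symmetrically for the $\beta$ terms. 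This is purely mechanical bookkeeping, identical in shape to the computation displayed in the proof of Proposition \ref{PRO:w_linearity}.

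For the CW-derivative I would avoid redoing the calculation and instead chain together the just-proven W-linearity with the conjugation identities of Propositions \ref{PRO:fre_w_rule1} and \ref{PRO:fre_w_rule2}:
\begin{align*}
\nabla_{\bbf^*}(\alpha\bT+\beta\bS)(\bc)
&=\bigl(\nabla_{\bbf}(\alpha\bT+\beta\bS)^*(\bc)\bigr)^*
=\bigl(\nabla_{\bbf}(\alpha^*\bT^*+\beta^*\bS^*)(\bc)\bigr)^*\\
&=\bigl(\alpha^*\nabla_{\bbf}\bT^*(\bc)+\beta^*\nabla_{\bbf}\bS^*(\bc)\bigr)^*
=\alpha\,\nabla_{\bbf^*}\bT(\bc)+\beta\,\nabla_{\bbf^*}\bS(\bc),
\end{align*}
where the last step applies Proposition \ref{PRO:fre_w_rule1} pointwise.

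The only genuine obstacle is notational: the real Fréchet gradients are $\R$-linear, so any factor of $i$ must be moved by hand before distribution and collected afterwards, and one must verify that the eight cross-terms in the W-derivative calculation recombine exactly into the two complex products $\alpha\,\nabla_{\bbf}\bT(\bc)$ and $\beta\,\nabla_{\bbf}\bS(\bc)$. Nothing new happens relative to the scalar case treated earlier; the proof is the same piece of algebra transported from $\C$ to $\HH$ via the $\HH$-valued gradient.
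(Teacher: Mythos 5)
Your proposal is correct and follows essentially the same route as the paper: decompose $\bR=\alpha\bT+\beta\bS$ into real components, use the $\R$-linearity of the real \frechet gradients in the definition of $\nabla_{\bbf}$, and then obtain the CW case by conjugation via Propositions \ref{PRO:fre_w_rule1} and \ref{PRO:fre_w_rule2} together with the just-proven W-linearity. The only difference is that you sketch the eight-term bookkeeping rather than writing it out, which is exactly the computation the paper displays.
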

\noindent\rule[1ex]{\linewidth}{1pt}

\begin{proof}
Let $\bT(\bbf) = \bT(u_{\bbf}+i v_{\bbf}) = \bT(u_{\bbf},v_{\bbf})  = T_1(u_{\bbf},v_{\bbf}) + i T_2(u_{\bbf},v_{\bbf})$, $\bS(\bbf) = \bS(u_{\bbf}+i v_{\bbf}) = \bS(u_{\bbf},v_{\bbf})  = S_1(u_{\bbf},v_{\bbf}) + i S_2(u_{\bbf},v_{\bbf})$ be two complex functions and $\alpha, \beta\in\C$, such that $\alpha=\alpha_1 + i\alpha_2$, $\beta=\beta_1 + i\beta_2$. Then
\begin{align*}
\bR(\bbf) =& \alpha \bT(\bbf) + \beta \bS(\bbf) = (\alpha_1+ i\alpha_2)(T_1(u_{\bbf}, v_{\bbf}) + i T_2(u_{\bbf}, v_{\bbf})) + (\beta_1+ i\beta_2)(S_1(u_{\bbf}, v_{\bbf}) + i S_2(u_{\bbf}, v_{\bbf}))\\
=& \left(\alpha_1 T_1(u_{\bbf}, v_{\bbf}) - \alpha_2 T_2(u_{\bbf}, v_{\bbf}) + \beta_1 S_1(u_{\bbf}, v_{\bbf})  -\beta_2 S_2(u_{\bbf}, v_{\bbf})  \right)\\
& + i\left(\alpha_1 T_2(u_{\bbf}, v_{\bbf}) +\alpha_2 T_1(u_{\bbf}, v_{\bbf}) + \beta_1 S_2(u_{\bbf}, v_{\bbf}) +\beta_2 S_1(u_{\bbf}, v_{\bbf}) \right).
\end{align*}
Thus, the \frechet W-derivative of $\bR$ will be given by:
\begin{align*}
\nabla_{\bbf}\bR(\bc) =& \frac{1}{2}\left(\nabla_1 R_1(\bc) + \nabla_2 R_2(\bc)\right)
      + \frac{i}{2}\left( \nabla_1 R_2(\bc) -  \nabla_2 R_1(\bc)\right)\\
=& \frac{1}{2}\left( \alpha_1 \nabla_1 T_1(\bc) - \alpha_2 \nabla_1 T_2(\bc) + \beta_1 \nabla_1 S_1(\bc) -\beta_2 \nabla_1 S_2(\bc)\right.\\
    & \left. +\alpha_1 \nabla_2 T_2(\bc)  + \alpha_2 \nabla_2 T_1(\bc) +\beta_1 \nabla_2 S_2(\bc) +\beta_2 \nabla_2 S_1(\bc) \right)\\
    & +\frac{i}{2}\left( \alpha_1 \nabla_1 T_2(\bc) +\alpha_2 \nabla_1 T_1(\bc) + \beta_1 \nabla_1 S_2(\bc) +\beta_2 \nabla_1 S_1(\bc) \right.\\
    & \left. -\alpha_1 \nabla_2 T_1(\bc)  + \alpha_2 \nabla_2 T_2(\bc) -\beta_1 \nabla_2 S_1(\bc) +\beta_2 \nabla_2 S_2(\bc) \right)\\
=& \frac{1}{2}(\alpha_1+i\alpha_2) \nabla_1 T_1(\bc)  + \frac{i}{2}(\alpha_1+i\alpha_2) \nabla_1 T_2(\bc)
        + \frac{1}{2}(\beta_1+i\beta_2) \nabla_1 S_1(\bc) + \frac{i}{2}(\beta_1+i\beta_2) \nabla_1 S_2(\bc)\\
  & + \frac{1}{2}(\alpha_1+i\alpha_2) \nabla_2 T_2(\bc) - \frac{i}{2}(\alpha_1+i\alpha_2)\nabla_2 T_1(\bc)
  +\frac{1}{2}(\beta_1+i\beta_2) \nabla_2 S_2(\bc) - \frac{i}{2}(\beta_1+i\beta_2) \nabla_2 S_1(\bc)\\
=& \alpha\left(\frac{1}{2}\left(\nabla_1 T_1(\bc) + \nabla_2 T_2(\bc)\right) +  \frac{i}{2}\left(\nabla_1 T_2(\bc) - \nabla_2 T_1(\bc)\right)  \right)\\
&+\beta\left(\frac{1}{2}\left(\nabla_1 S_1(\bc) + \nabla_2 S_2(\bc)\right) +  \frac{i}{2}\left(\nabla_1 S_2(\bc) - \nabla_2 S_1(\bc)\right)  \right)\\
=& \alpha \nabla_{\bbf}\bT(\bc) + \beta \nabla_{\bbf}\bS(\bc).
\end{align*}
On the other hand, in view of Propositions \ref{PRO:fre_w_rule2} and \ref{PRO:fre_w_rule1} and the linearity property of the \frechet W-derivative, the \frechet CW-derivative of $\bR$ at $\bc$ will be given by:
\begin{align*}
\nabla_{\bbf^*}\bR(\bc) =& \nabla_{\bbf^*}(\alpha \bT+\beta \bS)(\bc) = \left(\nabla_{\bbf}(\alpha \bT+\beta \bS)^*(\bc)\right)^*\\
    =& \left(\nabla_{\bbf}(\alpha^*\bT^*+\beta^*\bS^*)(\bc)\right)^* = \left(\alpha^*\nabla_{\bbf}\bT^*(\bc) + \beta^*\nabla_{\bbf}\bS^*(\bc) \right)^*\\
    =& \alpha \left(\nabla_{\bbf}\bT^*(\bc)\right)^* + \beta \left(\nabla_{\bbf}\bS^*(\bc)\right)^*
    = \alpha \nabla_{\bbf^*}\bT(\bc) + \beta \nabla_{\bbf^*}\bS(\bc).
\end{align*}
\end{proof}

\noindent\rule[1ex]{\linewidth}{1pt}
\begin{proposition}[Product Rule]\label{PRO:fre_w_product}
If $\bT$, $\bS$ are  \frechet differentiable in the real sense at $\bc\in\HH$, then
\begin{align}
\nabla_{\bbf} (\bT\cdot\bS)(\bc) &= \nabla_{\bbf}\bT(\bc)\bS(\bc) + \bT(\bc)\nabla_{\bbf}\bS(\bc),\\
\nabla_{\bbf^*} (\bT\cdot\bS)(\bc) &= \nabla_{\bbf^*}\bT(\bc)\bS(\bc) + \bT(\bc)\nabla_{\bbf^*}\bS(\bc).
\end{align}
\end{proposition}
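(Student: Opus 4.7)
The plan is to mimic the scalar-case product rule proof (Proposition \ref{PRO:w_product}) almost verbatim, replacing partial derivatives $\partial/\partial x$, $\partial/\partial y$ by the Fréchet partial gradients $\nabla_1, \nabla_2$ on $\cH$, and using the ordinary product rule for real-valued Fréchet differentiable functions on $\cH^2$. The CW-derivative part will then be reduced to the W-derivative part via conjugation.

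First I would write the components of $\bR := \bT\cdot\bS$ explicitly. With $\bT = T_1 + iT_2$ and $\bS = S_1 + iS_2$, we obtain
\begin{align*}
R_1 = T_1 S_1 - T_2 S_2, \qquad R_2 = T_1 S_2 + T_2 S_1,
\end{align*}
where $R_1,R_2,T_j,S_j$ are real-valued Fréchet differentiable maps on $\cH^2$. Applying the standard real product rule componentwise, each $\nabla_k R_\ell$ ($k,\ell\in\{1,2\}$) expands as a sum of four terms of the shape $T_i\,\nabla_k S_j + S_j\,\nabla_k T_i$ (with appropriate signs). This is the analogue of the list of partials of $u_r,v_r$ in the scalar proof.

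Next I would plug these expressions into the definition
\begin{align*}
\nabla_{\bbf}\bR(\bc) = \tfrac{1}{2}\bigl(\nabla_1 R_1(\bc) + \nabla_2 R_2(\bc)\bigr) + \tfrac{i}{2}\bigl(\nabla_1 R_2(\bc) - \nabla_2 R_1(\bc)\bigr),
\end{align*}
yielding a sum of sixteen terms in the coefficients $T_j(\bc), S_j(\bc)$ times gradients $\nabla_k T_j(\bc), \nabla_k S_j(\bc)$. The key step, and the main obstacle, is the bookkeeping: regrouping these sixteen terms into four clusters, each having one of $S_1(\bc), S_2(\bc), T_1(\bc), T_2(\bc)$ as a scalar factor, and checking that the resulting $\cH$-expression inside each cluster reconstructs the real or imaginary part of $\nabla_{\bbf}\bT(\bc)$ or $\nabla_{\bbf}\bS(\bc)$. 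Using the identity $1/i = -i$ exactly as in the scalar proof, the clusters collapse into $(S_1+iS_2)\nabla_{\bbf}\bT(\bc) + (T_1+iT_2)\nabla_{\bbf}\bS(\bc)$, which equals $\bS(\bc)\nabla_{\bbf}\bT(\bc) + \bT(\bc)\nabla_{\bbf}\bS(\bc)$, as required. Since the computation is formally identical to the $\C$-case with scalar multiplication of $\cH$-elements replacing ordinary multiplication, no genuinely new phenomenon appears.

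For the CW-derivative identity, I would avoid redoing the algebra and instead piggyback on the W-derivative result via Propositions \ref{PRO:fre_w_rule1} and \ref{PRO:fre_w_rule2}. Writing
\begin{align*}
\nabla_{\bbf^*}(\bT\bS)(\bc) = \bigl(\nabla_{\bbf}(\bT\bS)^*(\bc)\bigr)^* = \bigl(\nabla_{\bbf}(\bT^*\bS^*)(\bc)\bigr)^*,
\end{align*}
we apply the W-derivative product rule just proved to $\bT^*\bS^*$, then conjugate termwise and invoke Propositions \ref{PRO:fre_w_rule1}, \ref{PRO:fre_w_rule2} once more to convert $(\nabla_{\bbf}\bT^*(\bc))^* = \nabla_{\bbf^*}\bT(\bc)$ and likewise for $\bS$. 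This mirrors exactly the argument used at the end of the scalar product rule proof and contains no further difficulty.
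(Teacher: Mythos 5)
Your proposal is correct and follows essentially the same route as the paper's own proof: expand $\bR=\bT\bS$ into components $R_1=T_1S_1-T_2S_2$, $R_2=T_1S_2+T_2S_1$, apply the ordinary real Fr\'echet product rule componentwise, regroup the sixteen terms with scalar factors $T_j(\bc),S_j(\bc)$ using $1/i=-i$, and then obtain the CW-derivative identity by conjugation via Propositions \ref{PRO:fre_w_rule1} and \ref{PRO:fre_w_rule2}. No gap to report.
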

\noindent\rule[1ex]{\linewidth}{1pt}

\begin{proof}
Let $\bT(\bbf)=\bT(u_{\bbf}+iv_{\bbf})=\bT(u_{\bbf},v_{\bbf})=T_1(u_{\bbf},v_{\bbf}) + i T_2(u_{\bbf},v_{\bbf})$, $\bS(\bbf)=\bS(u_{\bbf}+iv_{\bbf})=\bS(u_{\bbf},v_{\bbf})=S_1(u_{\bbf},v_{\bbf}) + i S_2(u_{\bbf},v_{\bbf})$,  be two complex functions \frechet differentiable at $\bc$. Consider the complex-valued operator $\bR$ defined as $ \bR(\bbf)=\bT(\bbf)\bS(\bbf)$. Then
\begin{align*}
\bR(\bbf)=(T_1(\bbf)+i T_2(\bbf))(S_1(\bbf)+i S_2(\bbf))=(T_1(\bbf)S_1(\bbf) - T_2(\bbf)S_2(\bbf)) + i(T_1(\bbf)S_2(\bbf) + T_2(\bbf)S_1(\bbf)).
\end{align*}
Hence the \frechet W-derivative of $\bR$ at $\bc$ is given by:
\begin{align*}
\nabla_{\bbf}\bR(\bc) =& \frac{1}{2}\left(\nabla_1 R_1(\bc) + \nabla_2 R_2(\bc) \right) + \frac{i}{2}\left(\nabla_1 R_2(\bc) - \nabla_2 R_1(\bc) \right)\\
=& \frac{1}{2}\left(\nabla_1(T_1S_1 - T_2S_2)(\bc) + \nabla_2(T_1S_2+T_2S_1)(\bc)\right) + \frac{i}{2}\left(\nabla_1(T_1S_2+T_2S_1)(\bc) - \nabla_2(T_1S_1-T_2S_2)(\bc)\right)\\
=& \frac{1}{2}\left(\nabla_1(T_1S_1)(\bc) - \nabla_1(T_2S_2)(\bc)
+ \nabla_2(T_1S_2)(\bc) + \nabla_2(T_2S_1)(\bc)\right) \\
&+ \frac{i}{2}\left(\nabla_1(T_1S_2)(\bc) + \nabla_1(T_2S_1)(\bc)
- \nabla_2(T_1S_1)(\bc) + \nabla_2(T_2S_2)(\bc) \right)
\end{align*}
Applying the chain rule of the ordinary \frechet calculus we take:
\begin{align*}
\nabla_{\bbf}\bR(\bc) =& \frac{1}{2}\left(\nabla_1(T_1)(\bc)S_1(\bc) + \nabla_1 S_1(\bc)T_1(\bc)
- \nabla_1 T_2(\bc)S_2(\bc) - \nabla_1S_2(\bc)T_2(\bc) \right.\\
&\left. + \nabla_2 T_1(\bc)S_2(\bc) + \nabla_2 S_2(\bc)T_1(\bc)
+ \nabla_2 T_2(\bc) S_1(\bc) + \nabla_2 S_1(\bc)T_2(\bc)   \right)  \\
& + \frac{i}{2}\left(\nabla_1 T_1(\bc) S_2(\bc) + \nabla_1 S_2(\bc) T_1(\bc)
+ \nabla_1 T_2(\bc) S_1(\bc) + \nabla_1 S_1(\bc) T_2(\bc)\right.\\
&\left. - \nabla_2 T_1(\bc)S_1(\bc)  - \nabla_2 S_1(\bc) T_1(\bc)
+  \nabla_2 T_2(\bc) S_2(\bc)   +  \nabla_2 S_2(\bc) T_2(\bc) \right).
\end{align*}
After factorization we obtain:
\begin{align*}
\nabla_{\bbf}\bR(\bc)
=& S_1(\bc)\left(\frac{1}{2}\left(\nabla_1 T_1(\bc) + \nabla_2 T_2(\bc)\right) + \frac{i}{2}\left(\nabla_1 T_2(\bc) - \nabla_2 T_2(\bc)\right)\right)\\
&+ S_2(\bc)\left(\frac{1}{2}\left(-\nabla_1 T_2(\bc) + \nabla_2 T_1(\bc)\right) + \frac{i}{2}\left(\nabla_1 T_1(\bc) + \nabla_2 T_2(\bc)\right)\right)\\
&+ T_1(\bc)\left(\frac{1}{2}\left(\nabla_1 S_1(\bc) + \nabla_2 S_2(\bc)\right) + \frac{i}{2}\left(\nabla_1 S_2(\bc) - \nabla_2 S_1(\bc)\right)\right)\\
&+ T_2(\bc)\left(\frac{1}{2}\left(-\nabla_1 S_2(\bc) + \nabla_2 S_1(\bc)\right) + \frac{i}{2}\left(\nabla_1 S_1(\bc) + \nabla_2 S_2(\bc)\right)\right).
\end{align*}
Considering that $1/i=-i$, we take:
\begin{align*}
\nabla_{\bbf}\bR(\bc)
=& S_1(\bc)\left(\frac{1}{2}\left(\nabla_1 T_1(\bc) + \nabla_2 T_2(\bc)\right) + \frac{i}{2}\left(\nabla_1 T_2(\bc) - \nabla_2 T_1(\bc) \right)\right)\\
& + i S_2(\bc)\left(\frac{1}{2}\left(\nabla_1 T_1(\bc) + \nabla_2 T_2(\bc)\right) + \frac{i}{2}\left(\nabla_1 T_2(\bc) - \nabla_2 T_1(\bc)\right)\right)\\
&+ T_1(\bc)\left(\frac{1}{2}\left(\nabla_1 S_1(\bc) + \nabla_2 S_2(\bc)\right) + \frac{i}{2}\left(\nabla_1 S_2(\bc) - \nabla_2 S_1(\bc)\right)\right)\\
&+ i T_2(\bc)\left(\frac{1}{2}\left(\nabla_1 S_1(\bc) + \nabla_2 S_2(\bc)\right) + \frac{i}{2}\left(\nabla_1 S_2(\bc) - \nabla_2 S_1(\bc)\right)\right)\\
=& (S_1(\bc) + i S_2(\bc))\nabla_{\bbf}\bT(\bc) + (T_1(\bc) + i T_2(\bc))\nabla_{\bbf}\bS(\bc),
\end{align*}
which gives the result.

The product rule of the \frechet CW-derivative follows from the product rule of the W-derivative and Propositions \ref{PRO:fre_w_rule1}, \ref{PRO:fre_w_rule1} as follows:
\begin{align*}
\nabla_{\bbf^*}(\bT \bS)(\bc) &= \left(\nabla_{\bbf} (\bT\bS)^*(\bc)\right)^* = \left(\nabla_{\bbf} (\bT^*\bS^*)(\bc)\right)^*\\
& =\left(\nabla_{\bbf}\bT^*(\bc)\bS^*(c) + \nabla_{\bbf} \bS^*(\bc)\bT^*(\bc)\right)^*\\
&= \nabla_{\bbf^*}\bT(\bc)\bS(\bc) + \nabla_{\bbf^*}\bS(\bc)\bT(\bc).
\end{align*}
\end{proof}

\begin{lemma}[Reciprocal Rule]\label{PRO:fre_w_reciprocal}
If $\bT$ is \frechet differentiable in the real sense at $\bc$ and $\bT(\bc)\not=0$, then
\begin{align}
\nabla_{\bbf}\left(\frac{1}{\bT}\right)(\bc) &= - \frac{\nabla_{\bbf}\bT(\bc)}{\bT^2(\bc)},\\
\nabla_{\bbf^*}\left(\frac{1}{\bT}\right)(\bc) &= - \frac{\nabla_{\bbf^*}\bT(\bc)}{\bT^2(\bc)}.
\end{align}
\end{lemma}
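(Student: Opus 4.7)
The plan is to bypass the brute-force real-partial-derivative computation that would mirror Lemma \ref{PRO:w_reciprocal} and instead exploit the Product Rule (Proposition \ref{PRO:fre_w_product}) applied to the trivial identity $\bT\cdot(1/\bT)\equiv 1$. Before that can be invoked, however, one must know that $1/\bT$ is itself \frechet differentiable in the real sense at $\bc$, so that the W- and CW-gradients appearing on the left-hand side are well defined.

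First I would fix an open neighbourhood $U$ of $\bc$ on which $\bT$ is nonzero; this is guaranteed by the continuity of $\bT$ together with the hypothesis $\bT(\bc)\neq 0$. Writing $\bT=T_1+iT_2$, we have the explicit representation
\begin{align*}
\frac{1}{\bT}=\frac{T_1}{T_1^2+T_2^2}-i\,\frac{T_2}{T_1^2+T_2^2},
\end{align*}
which exhibits $1/\bT$ on $U$ as the composition of the real-\frechet-differentiable map $(T_1,T_2):U\to\R^2$ with the smooth real map $(s,t)\mapsto\bigl(s/(s^2+t^2),\,-t/(s^2+t^2)\bigr)$ defined on $\R^2\setminus\{(0,0)\}$. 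The standard chain rule for real \frechet differentiability then guarantees that $1/\bT$ is \frechet differentiable in the real sense at $\bc$, so both $\nabla_{\bbf}(1/\bT)(\bc)$ and $\nabla_{\bbf^*}(1/\bT)(\bc)$ exist.

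Next I would apply the Product Rule to the constant operator $\bT\cdot(1/\bT)\equiv 1$. Since the W-gradient of a constant vanishes, Proposition \ref{PRO:fre_w_product} immediately yields
\begin{align*}
0=\nabla_{\bbf}\bT(\bc)\cdot\frac{1}{\bT(\bc)}+\bT(\bc)\cdot\nabla_{\bbf}\!\left(\frac{1}{\bT}\right)\!(\bc),
\end{align*}
and a one-line algebraic rearrangement gives the claimed W-gradient formula. For the CW-gradient I would then imitate the closing argument used in Lemma \ref{PRO:w_reciprocal}: exploit the elementary identity $(1/\bT)^{*}=1/\bT^{*}$ together with Propositions \ref{PRO:fre_w_rule1} and \ref{PRO:fre_w_rule2} to convert the already-established W-formula applied to $1/\bT^{*}$ into the CW-formula for $1/\bT$, namely
\begin{align*}
\nabla_{\bbf^*}\!\left(\frac{1}{\bT}\right)\!(\bc)=\left(\nabla_{\bbf}\!\left(\frac{1}{\bT^{*}}\right)\!(\bc)\right)^{*}=\left(-\frac{\nabla_{\bbf}\bT^{*}(\bc)}{(\bT^{*}(\bc))^{2}}\right)^{*}=-\frac{\nabla_{\bbf^*}\bT(\bc)}{\bT^{2}(\bc)}.
\end{align*}

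The main obstacle is the preliminary differentiability check for $1/\bT$: without it the product-rule identity is purely formal, and one must invoke the chain rule for real \frechet differentiability with some care, since $\bT$ maps $\HH$ into $\C\simeq\R^2$ rather than into an arbitrary Banach space. If one wished to avoid this subtlety entirely, an alternative is to mimic Lemma \ref{PRO:w_reciprocal} verbatim, computing $\nabla_1 R_j(\bc)$ and $\nabla_2 R_j(\bc)$ for $j=1,2$ from the explicit fractions above via the real-\frechet quotient rule and substituting into the definition of $\nabla_{\bbf}$ — correct but considerably longer and far less illuminating than the product-rule route outlined here.
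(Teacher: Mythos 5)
Your argument is correct, but it reaches the W-gradient formula by a genuinely different route than the paper. The paper proves the lemma by brute force, exactly as in the scalar case of Lemma \ref{PRO:w_reciprocal}: it writes $\bR=1/\bT$ explicitly as $T_1/(T_1^2+T_2^2)-iT_2/(T_1^2+T_2^2)$, computes the four partial gradients $\nabla_1 R_1,\nabla_2 R_1,\nabla_1 R_2,\nabla_2 R_2$ by the ordinary \frechet quotient rule, and substitutes into the definition of $\nabla_{\bbf}\bR$; you instead apply the product rule (Proposition \ref{PRO:fre_w_product}) to the identity $\bT\cdot(1/\bT)\equiv 1$ and solve. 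Your route is much shorter and makes transparent why the formula must take this shape, at the cost of the preliminary verification that $1/\bT$ is \frechet differentiable in the real sense at $\bc$ --- which you handle correctly: differentiability of $\bT$ at $\bc$ gives continuity at $\bc$, hence a neighbourhood where $\bT\neq 0$, and composing $(T_1,T_2)$ with the smooth inversion map on $\R^2\setminus\{(0,0)\}$ settles it via the standard chain rule for \frechet derivatives between Banach spaces (the identity $\bT\cdot(1/\bT)=1$ only holding near $\bc$ is harmless, since gradients at $\bc$ are local). It is worth noting that the paper's computational proof silently relies on the same kind of fact, since its quotient-rule manipulations presuppose that the partial gradients of $R_1,R_2$ exist, so your explicit attention to this hypothesis is if anything a gain in rigour; what the paper's longer computation buys is an explicit exhibition of all four partial gradients and a proof that runs verbatim parallel to the scalar Lemma \ref{PRO:w_reciprocal}. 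Your treatment of the CW-gradient, via $(1/\bT)^*=1/\bT^*$ and Propositions \ref{PRO:fre_w_rule1} and \ref{PRO:fre_w_rule2}, coincides with the paper's.
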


\begin{proof}
Let $\bT(\bbf) = \bT(u_{\bbf} + iv_{\bbf}) = \bT(u_{\bbf}, v_{\bbf}) =  T_1(u_{\bbf}, v_{\bbf}) + i T_2(u_{\bbf}, v_{\bbf})$ be a complex function,  \frechet differentiable in the real sense at $\bc$, such that $\bT(\bc)\not=0$. Consider the function $\bR(\bbf)=1/\bT(\bbf)$. Then
\begin{align*}
\bR(\bbf)= \frac{T_1(\bbf)}{T_1^2(\bbf) + T_2^2(\bbf)}  -   i\frac{ T_2(\bbf)}{T_1^2(\bbf) + T_2^2(\bbf)}.
\end{align*}
For the partial derivatives of $R_1$, $R_2$ we have:
\begin{align*}
\nabla_1 R_1(\bc) =& \frac{\nabla_1 T_1(\bc)(T_1^2(\bc) + T_2^2(\bc)) - 2 T_1^2(\bc)\nabla_1 T_1(\bc) - 2T_1(\bc)T_2(\bc)\nabla_1 T_2(\bc)} {(T_1^2(\bc) + T_2^2(\bc))^2},\\
\nabla_2 R_1(\bc) = & \frac{\nabla_2 T_1(\bc)(T_1^2(\bc) + T_2^2(\bc)) - 2T_1^2(\bc)\nabla_2 T_1(\bc) - 2T_1(\bc)T_2(\bc)\nabla_2 T_2(\bc)} {(T_1^2(\bc) + T_2^2(\bc))^2},\\
\nabla_1 R_2(\bc) = & - \frac{\nabla_1 T_2(\bc)(T_1^2(\bc) + T_2^2(\bc)) - 2T_2^2(\bc)\nabla_1 T_2(\bc) - 2T_1(\bc)T_2(\bc)\nabla_1 T_1(\bc)} {(T_1^2(\bc) + T_2^2(\bc))^2},\\
\nabla_2 R_2(\bc) =& - \frac{\nabla_2 T_2(\bc)(T_1^2(\bc) + T_2^2(\bc)) - 2T_2^2(\bc)\nabla_2 T_2(\bc) - 2T_1(\bc)T_2(\bc)\nabla_2 T_1(\bc)} {(T_1^2(\bc) + T_2^2(\bc))^2}.
\end{align*}
Therefore,
\begin{align*}
\nabla_{\bbf}\bR(\bc) = & \frac{1}{2}\left(\nabla_1 R_1(\bc) + \nabla_2 R_2(\bc)\right) + \frac{i}{2}\left( \nabla_1 R_2(\bc) - \nabla_2 R_1(\bc)\right)\\
= & \frac{1}{(T_1^2(\bc) + T_2^2(\bc))^2} \left( \nabla_1 T_1(\bc) \left(-T_1^2(\bc) + T_2^2(\bc) + 2iT_1(\bc)T_2(\bc) \right) \right. \\
 &\left. + \nabla_1 T_2(\bc) \left(-2T_1(\bc)T_2(\bc) - iT_1^2(\bc)+i T_2^2(\bc)\right)   \right.\\
 & \left.  + \nabla_2 T_2(\bc) \left(-T_1^2(\bc)+T_2^2(\bc)+2iT_1(\bc)T_2(\bc)\right)
 + \nabla_2 T_1(\bc) \left(2 T_1(\bc)T_2(\bc) + iT_1^2(\bc) - iT_2^2(\bc)\right)    \right)\\
= &  \frac{T_1^2(\bc) - T_2^2(\bc) - 2iT_1(\bc)T_2(\bc)}{2(T_1^2(\bc) + T_2^2(\bc))^2}
 \left( -  \left( \nabla_1 T_1(\bc) + \nabla_2 T_2(\bc)\right) - i \left( \nabla_1 T_2(\bc) - \nabla_2 T_1(\bc) \right) \right)\\
= & - \frac{\nabla_{\bbf}\bT(\bc)}{\bT^2(\bc)}.
\end{align*}

To prove the corresponding rule of the \frechet CW-derivative we apply the reciprocal rule of the \frechet W-derivative as well as Propositions \ref{PRO:fre_w_rule1}, \ref{PRO:fre_w_rule2}:
\begin{align*}
\nabla_{\bbf^*}\left(\frac{1}{\bT}\right)(\bc) &= \left(\nabla_{\bbf}\left(\frac{1}{\bT^*}\right)(\bc)\right)^*
= \left( - \frac{\nabla_{\bbf} \bT^*(\bc)}{\left(\bT^*(\bc)\right)^2}\right)^*
=  - \frac{\nabla_{\bbf^*}\bT(\bc)}{\left(\bT(\bc)\right)^2}.
\end{align*}
\end{proof}

\noindent\rule[1ex]{\linewidth}{1pt}
\begin{proposition}[Division Rule]\label{PRO:fre_w_division}
If $\bT$, $\bS$ are  \frechet differentiable in the real sense at $\bc$ and $\bS(\bc)\not=0$, then
\begin{align}
\nabla_{\bbf}\left(\frac{\bT}{\bS}\right)(\bc) &= \frac{\nabla_{\bbf}\bT(\bc) \bS(\bc) - \bT(\bc)\nabla_{\bbf}\bS(\bc)}{\bS^2(\bc)},\\
\nabla_{\bbf^*}\left(\frac{\bT}{\bS}\right)(\bc) &= \frac{\nabla_{\bbf^*}\bT(\bc) \bS(\bc) - \bT(\bc)\nabla_{\bbf^*}\bS(\bc)}{\bS^2(\bc)}.
\end{align}
\end{proposition}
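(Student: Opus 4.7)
The plan is to imitate the one-line proof of Proposition \ref{PRO:w_division} from the scalar case, now using the already-established \frechet product rule (Proposition \ref{PRO:fre_w_product}) and \frechet reciprocal rule (Lemma \ref{PRO:fre_w_reciprocal}). Since $\bS(\bc)\neq 0$ and $\bS$ is \frechet real-differentiable at $\bc$, continuity of the scalar map $\bbf\mapsto\bS(\bbf)$ (which follows from real \frechet differentiability) gives $\bS(\bbf)\neq 0$ on a neighborhood of $\bc$, so the factorization $\bT/\bS = \bT\cdot(1/\bS)$ is well-defined on that neighborhood and the map $1/\bS$ is itself \frechet real-differentiable there. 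This makes both the product rule and the reciprocal rule applicable.

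First I would apply the product rule of Proposition \ref{PRO:fre_w_product} to $\bT\cdot(1/\bS)$ at $\bc$, obtaining
\begin{align*}
\nabla_{\bbf}\left(\frac{\bT}{\bS}\right)(\bc)
= \nabla_{\bbf}\bT(\bc)\cdot\frac{1}{\bS(\bc)} + \bT(\bc)\cdot\nabla_{\bbf}\!\left(\frac{1}{\bS}\right)(\bc).
\end{align*}
Then I would substitute the reciprocal rule (Lemma \ref{PRO:fre_w_reciprocal}), namely $\nabla_{\bbf}(1/\bS)(\bc) = -\nabla_{\bbf}\bS(\bc)/\bS^{2}(\bc)$, and put the two resulting summands over the common denominator $\bS^{2}(\bc)$. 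This yields
\begin{align*}
\nabla_{\bbf}\left(\frac{\bT}{\bS}\right)(\bc)
= \frac{\nabla_{\bbf}\bT(\bc)\,\bS(\bc)-\bT(\bc)\,\nabla_{\bbf}\bS(\bc)}{\bS^{2}(\bc)},
\end{align*}
which is exactly the first claim.

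For the second identity, the argument is identical word for word after replacing $\nabla_{\bbf}$ with $\nabla_{\bbf^{*}}$, since both the product rule and the reciprocal rule have been proved in parallel for the $W$ and CW versions. Alternatively, one could derive the CW-statement directly from the $W$-statement by conjugating and applying Propositions \ref{PRO:fre_w_rule1} and \ref{PRO:fre_w_rule2} to $\bT^{*}/\bS^{*}$; either route works.

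There is no real obstacle here: the result is a formal algebraic consequence of two rules already at our disposal, and no new calculation in coordinates $(T_1,T_2,S_1,S_2)$ is needed. The only mild point worth noting in the write-up is the neighborhood argument ensuring that $1/\bS$ is defined and real-differentiable near $\bc$, so that Lemma \ref{PRO:fre_w_reciprocal} legitimately applies.
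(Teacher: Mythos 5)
Your proposal is correct and matches the paper's proof, which likewise disposes of the statement in one line by writing $\frac{\bT}{\bS}=\bT\cdot\frac{1}{\bS}$ and invoking the product rule (Proposition \ref{PRO:fre_w_product}) together with the reciprocal rule (Lemma \ref{PRO:fre_w_reciprocal}). Your added remark on the neighborhood where $\bS\neq 0$ is a harmless refinement that the paper omits.
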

\noindent\rule[1ex]{\linewidth}{1pt}

\begin{proof}
It follows immediately from the multiplication rule and the reciprocal rule $\left(\frac{\bT(\bc)}{\bS(\bc)}=\bT(c)\cdot\frac{1}{\bS(\bc)}\right)$.
\end{proof}

\noindent\rule[1ex]{\linewidth}{1pt}
\begin{proposition}[Chain Rule]\label{PRO:fre_w_chain}
Consider the functions $\bT:\HH\rightarrow\C$ and $\bS:\C\rightarrow\C$ so that they are differentiable in the real sense at $\bc$ and $z_0=\bT(\bc)$ respectively. Then the operator $\bR=\bS\circ\bT$ is differentiable in the real sense at $\bc$, and
\begin{align}
\nabla_{\bbf}\bS\circ\bT(\bc) &= \frac{\partial \bS}{\partial z}(\bT(c))\nabla_{\bbf}\bT(\bc) + \frac{\partial \bS}{\partial z^*}(\bT(\bc))\nabla_{\bbf}(\bT^*)(\bc),\\
\nabla_{\bbf^*}\bS\circ\bT(\bc) &= \frac{\partial \bS}{\partial z}(\bT(\bc))\nabla_{\bbf^*}\bT(\bc) + \frac{\partial \bS}{\partial z^*}(\bT(\bc))\nabla_{\bbf^*}(\bT^*)(\bc).
\end{align}
\end{proposition}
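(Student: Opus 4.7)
The plan is to mirror the Taylor-expansion argument of the ordinary $\C$-Wirtinger calculus, now lifted to $\HH$, exploiting the already-proved real-Fr\'{e}chet Taylor formula (\ref{EQ:fre_wirti3}) for $\bT$ and the Wirtinger Taylor formula (\ref{EQ:wirti3}) for $\bS$. First I note that $\bR = \bS\circ\bT$ is Fr\'{e}chet differentiable in the real sense at $\bc$: writing $R_1 = S_1(T_1, T_2)$ and $R_2 = S_2(T_1, T_2)$, the classical real Fr\'{e}chet chain rule applied to the composition of $\bbf \mapsto (T_1(\bbf), T_2(\bbf))$ with $(x, y) \mapsto (S_1(x,y), S_2(x,y))$ yields differentiability of $R_1$ and $R_2$ at $\bc$, hence the existence of both $\nabla_{\bbf}\bR(\bc)$ and $\nabla_{\bbf^*}\bR(\bc)$.

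For the W-rule, I form the complex increment $\zeta := \bT(\bc+\bh) - \bT(\bc) \in \C$ and expand it via (\ref{EQ:fre_wirti3}),
\[
\zeta = \langle \bh, (\nabla_{\bbf}\bT(\bc))^*\rangle_{\HH} + \langle \bh^*, (\nabla_{\bbf^*}\bT(\bc))^*\rangle_{\HH} + o(\|\bh\|_{\HH}),
\]
which in particular gives $|\zeta| = O(\|\bh\|_{\HH})$. Feeding this $\zeta$ into the scalar Wirtinger Taylor expansion (\ref{EQ:wirti3}) of $\bS$ at $\bT(\bc)$ produces
\[
\bR(\bc+\bh) - \bR(\bc) = \tfrac{\partial\bS}{\partial z}(\bT(\bc))\,\zeta + \tfrac{\partial\bS}{\partial z^*}(\bT(\bc))\,\zeta^* + o(\|\bh\|_{\HH}).
\]
I then expand $\zeta^* = \bT^*(\bc+\bh) - \bT^*(\bc)$ by applying (\ref{EQ:fre_wirti3}) to $\bT^*$, and use Propositions \ref{PRO:fre_w_rule1}, \ref{PRO:fre_w_rule2} to convert $\nabla_{\bbf}\bT^*(\bc)$ and $\nabla_{\bbf^*}\bT^*(\bc)$ into conjugates of $\nabla_{\bbf}\bT(\bc)$ and $\nabla_{\bbf^*}\bT(\bc)$. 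Using the sesquilinearity identity $\lambda\langle \bh, \bu\rangle_{\HH} = \langle \bh, \lambda^* \bu\rangle_{\HH}$ to absorb the complex scalar prefactors into the second slot of the inner product, I gather everything into the canonical form $\langle \bh, \bv_1\rangle_{\HH} + \langle \bh^*, \bv_2\rangle_{\HH} + o(\|\bh\|_{\HH})$. Lemma \ref{LEM:limit} (uniqueness of the $\bh$ and $\bh^*$ coefficients) then identifies $\bv_1 = (\nabla_{\bbf}\bR(\bc))^*$ and $\bv_2 = (\nabla_{\bbf^*}\bR(\bc))^*$, and a final application of Propositions \ref{PRO:fre_w_rule1}, \ref{PRO:fre_w_rule2} converts the expression for $\bv_1$ into the claimed form involving $\nabla_{\bbf}(\bT^*)(\bc)$.

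The CW-rule is obtained cheaply from the W-rule by applying it to $\bR^* = \bS^*\circ\bT$, taking the complex conjugate of both sides, and invoking Propositions \ref{PRO:fre_w_rule1}, \ref{PRO:fre_w_rule2} on the $\HH$-side together with the scalar identities (\ref{EQ:w_rule1})--(\ref{EQ:w_rule2}) applied to $\bS$ on $\C$. The main obstacle throughout will be the conjugation bookkeeping: the complex scalar multipliers $\partial\bS/\partial z$ and $\partial\bS/\partial z^*$, when pulled inside the sesquilinear $\HH$-inner product, appear conjugated on the kernel side, so producing $\nabla_{\bbf}(\bT^*)(\bc)$ rather than $(\nabla_{\bbf^*}\bT(\bc))^*$ in the final form (and the analogous move for the CW-side) hinges on invoking Propositions \ref{PRO:fre_w_rule1}, \ref{PRO:fre_w_rule2} at precisely the right moment.
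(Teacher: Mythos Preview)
Your proposal is correct and takes a genuinely different route from the paper. The paper proceeds by brute force: it writes out the real Fr\'{e}chet chain-rule expressions for $\nabla_j R_k(\bc)$ in terms of $\partial u_{\bS}/\partial x$, $\partial u_{\bS}/\partial y$, $\partial v_{\bS}/\partial x$, $\partial v_{\bS}/\partial y$ and $\nabla_j T_k$, then expands each of the two products $\tfrac{\partial\bS}{\partial z}(\bT(\bc))\,\nabla_{\bbf}\bT(\bc)$ and $\tfrac{\partial\bS}{\partial z^*}(\bT(\bc))\,\nabla_{\bbf}\bT^*(\bc)$ into sixteen terms, adds, cancels, and matches the result against the definition of $\nabla_{\bbf}\bR(\bc)$. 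Your argument instead composes the two first-order Wirtinger--Taylor expansions (\ref{EQ:wirti3}) and (\ref{EQ:fre_wirti3}) and reads off the coefficients via the uniqueness furnished by Lemma~\ref{LEM:limit}. The paper's approach is entirely elementary and self-contained but computationally heavy; yours is cleaner and makes transparent why the chain rule has the stated form, at the cost of relying on the $o(|\zeta|)=o(\|\bh\|_{\HH})$ bookkeeping (which you handle via $|\zeta|=O(\|\bh\|_{\HH})$) and on the uniqueness lemma. Both proofs obtain the CW-rule by the same conjugation trick through Propositions~\ref{PRO:fre_w_rule1} and~\ref{PRO:fre_w_rule2}.
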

\noindent\rule[1ex]{\linewidth}{1pt}

\begin{proof}
Consider the function $\bR(\bbf)=\bS\circ \bT(\bbf)= u_{\bR}(\bbf) +i v_{\bR}(\bbf) = u_{\bS}(T_1(\bbf), T_2(\bbf)) + i v_{\bS}(T_1(\bbf), T_2(\bbf))$. Then the \frechet partial derivatives of $R_1$ and $R_2$ are given by the chain rule:
\begin{align*}
\nabla_1 R_1(\bc) &= \frac{\partial u_{\bS}}{\partial x}(\bT(\bc))\nabla_1 T_1(\bc) + \frac{\partial u_{\bS}}{\partial y}(\bT(\bc))\nabla_1 T_2(\bc),\\
\nabla_2 R_1(\bc) &= \frac{\partial u_{\bS}}{\partial x}(\bT(\bc))\nabla_2 T_1(\bc) + \frac{\partial u_{\bS}}{\partial y}(\bT(\bc))\nabla_2 T_2(\bc),\\
\nabla_1 R_2(\bc) &= \frac{\partial v_{\bS}}{\partial x}(\bT(\bc))\nabla_1 T_1(\bc) + \frac{\partial v_{\bS}}{\partial y}(\bT(\bc))\nabla_1 T_2(\bc),\\
\nabla_2 R_2(\bc) &= \frac{\partial v_{\bS}}{\partial x}(\bT(\bc))\nabla_2 T_1(\bc) + \frac{\partial v_{\bS}}{\partial y}(\bT(\bc))\nabla_2 T_2(\bc).
\end{align*}
In addition, we have:
\begin{align*}
\frac{\partial \bS}{\partial z}(\bT(\bc))\nabla_{\bbf}\bT(\bc) = &
\frac{1}{4}\left(
\frac{\partial u_{\bS}}{\partial x}(\bT(\bc))\nabla_1 T_1(\bc)
+ \frac{\partial u_{\bS}}{\partial x}(\bT(\bc))\nabla_2 T_2(\bc)
+ i\frac{\partial u_{\bS}}{\partial x}(\bT(\bc))\nabla_1 T_2(\bc)
- i\frac{\partial u_{\bS}}{\partial x}(\bT(\bc))\nabla_2 T_1(\bc)
\right.\\
& +\frac{\partial v_{\bS}}{\partial y}(\bT(\bc)) \nabla_1 T_1(\bc)
+ \frac{\partial v_{\bS}}{\partial y}(\bT(\bc)) \nabla_2 T_2(\bc)
+ i\frac{\partial v_{\bS}}{\partial y}(\bT(\bc)) \nabla_1 T_2(\bc)
- i\frac{\partial v_{\bS}}{\partial y}(\bT(\bc)) \nabla_2 T_1(\bc)\\
& + i \frac{\partial v_{\bS}}{\partial x}(\bT(\bc)) \nabla_1 T_1(\bc)
+ i\frac{\partial v_{\bS}}{\partial x}(\bT(\bc)) \nabla_2 T_2(\bc)
- \frac{\partial v_{\bS}}{\partial x}(\bT(\bc)) \nabla_1 T_2(\bc)
+ \frac{\partial v_{\bS}}{\partial x}(\bT(\bc)) \nabla_2 T_1(\bc)\\
& \left.
- i\frac{\partial u_{\bS}}{\partial y}(\bT(\bc)) \nabla_1 T_1(\bc)
- i\frac{\partial u_{\bS}}{\partial y}(\bT(\bc)) \nabla_2 T_2(\bc)
+  \frac{\partial u_{\bS}}{\partial y}(\bT(\bc)) \nabla_1 T_2(\bc)
-  \frac{\partial u_{\bS}}{\partial y}(\bT(\bc)) \nabla_2 T_1(\bc)\right)
\end{align*}
and
\begin{align*}
\frac{\partial \bS}{\partial z^*}(\bT(\bc))\nabla_{\bbf}\bT^*(\bc) = &
\frac{1}{4}\left(
\frac{\partial u_{\bS}}{\partial x}(\bT(\bc))\nabla_1 T_1(\bc)
- \frac{\partial u_{\bS}}{\partial x}(\bT(\bc))\nabla_2 T_2(\bc)
- i\frac{\partial u_{\bS}}{\partial x}(\bT(\bc))\nabla_1 T_2(\bc)
- i\frac{\partial u_{\bS}}{\partial x}(\bT(\bc))\nabla_2 T_1(\bc)
\right.\\
& -\frac{\partial v_{\bS}}{\partial y}(\bT(\bc)) \nabla_1 T_1(\bc)
+ \frac{\partial v_{\bS}}{\partial y}(\bT(\bc)) \nabla_2 T_2(\bc)
+ i\frac{\partial v_{\bS}}{\partial y}(\bT(\bc)) \nabla_1 T_2(\bc)
+ i\frac{\partial v_{\bS}}{\partial y}(\bT(\bc)) \nabla_2 T_1(\bc)\\
& + i \frac{\partial v_{\bS}}{\partial x}(\bT(\bc)) \nabla_1 T_1(\bc)
- i\frac{\partial v_{\bS}}{\partial x}(\bT(\bc)) \nabla_2 T_2(\bc)
+ \frac{\partial v_{\bS}}{\partial x}(\bT(\bc)) \nabla_1 T_2(\bc)
+ \frac{\partial v_{\bS}}{\partial x}(\bT(\bc)) \nabla_2 T_1(\bc)\\
& \left.
+ i\frac{\partial u_{\bS}}{\partial y}(\bT(\bc)) \nabla_1 T_1(\bc)
- i\frac{\partial u_{\bS}}{\partial y}(\bT(\bc)) \nabla_2 T_2(\bc)
+  \frac{\partial u_{\bS}}{\partial y}(\bT(\bc)) \nabla_1 T_2(\bc)
+  \frac{\partial u_{\bS}}{\partial y}(\bT(\bc)) \nabla_2 T_1(\bc)\right).
\end{align*}
Summing up the last two relations and eliminating the opposite terms, we obtain:
\begin{align*}
\frac{\partial \bS}{\partial z}(\bT(\bc))\nabla_{\bbf}\bT(\bc) + \frac{\partial \bS}{\partial z^*}(\bT(\bc))\nabla_{\bbf}\bT^*(\bc) =& \frac{1}{2}\left(
\frac{\partial u_{\bS}}{\partial x}(\bT(\bc)) \nabla_1 T_1(\bc)
- i\frac{\partial u_{\bS}}{\partial x}(\bT(\bc)) \nabla_2 T_1(\bc)
+ \frac{\partial u_{\bS}}{\partial y}(\bT(\bc)) \nabla_2 T_1(\bc)  \right.\\
& + i\frac{\partial u_{\bS}}{\partial y}(\bT(\bc)) \nabla_1 T_2(\bc)
+ i\frac{\partial v_{\bS}}{\partial x}(\bT(\bc)) \nabla_1 T_1(\bc)
+ \frac{\partial v_{\bS}}{\partial x}(\bT(\bc)) \nabla_2 T_1(\bc)\\
& \left. -i \frac{\partial u_{\bS}}{\partial y}(\bT(\bc)) \nabla_2 T_2(\bc)
+ \frac{\partial u_{\bS}}{\partial y}(\bT(\bc)) \nabla_1 T_2(\bc)\right)\\
=& \frac{1}{2}\left( \nabla_1 R_1(\bc) + \nabla_2 R_2(\bc)\right)
+ \frac{i}{2}\left( \nabla_1 R_2(\bc) - \nabla_1 R_1(\bc)\right)\\
=& \nabla_{\bbf}\bR(\bc).
\end{align*}

To prove the chain rule of the \frechet CW-derivative, we apply the chain rule of the W-derivative as well as Propositions \ref{PRO:fre_w_rule1}, \ref{PRO:fre_w_rule2} and obtain:
\begin{align*}
\nabla_{\bbf^*}\bR(\bc) =& \left(\nabla_{\bbf}\bR^*(\bc)\right)^* = \left( \frac{\partial \bS^*}{\partial z}(\bT(\bc))\nabla_{\bbf}\bT(\bc) + \frac{\partial \bS^*}{\partial z^*}(\bT(\bc)) \nabla_{\bbf} \bT^*(\bc) \right)^*\\
=& \left( \frac{\partial \bS^*}{\partial z}(\bT(\bc))\right)^* \left(\nabla_{\bbf}\bT(\bc)\right)^* + \left(\frac{\partial \bS^*}{\partial z^*}(\bT(\bc))\right)^* \left(\nabla_{\bbf} \bT^*(\bc) \right)^*\\
=& \frac{\partial \bS}{\partial z^*}(\bT(\bc)) \nabla_{\bbf^*} \bT^*(\bc)  + \frac{\partial \bS}{\partial z}(\bT(\bc)) \nabla_{\bbf^*} \bT(\bc),
\end{align*}
which completes the proof.
\end{proof}

The following rules may be immediately proved using the definition of \frechet W and CW derivatives and the aforementioned rules.
\begin{enumerate}
\item If $\bT(\bbf)=\langle \bbf, \bw\rangle_\HH$, then $\nabla_{\bbf}\bT=\bw^*$, $\nabla_{\bbf^*}\bT=\bZero$.
\item If $\bT(\bbf)=\langle \bw, \bbf\rangle_\HH$, then $\nabla_{\bbf}\bT=\bZero$, $\nabla_{\bbf^*}\bT=\bw$.
\item If $\bT(\bbf)=\langle \bbf^*, \bw\rangle_\HH$, then $\nabla_{\bbf}\bT=\bZero$, $\nabla_{\bbf^*}\bT=\bw^*$.
\item If $\bT(\bbf)=\langle \bw, \bbf^*\rangle_\HH$, then $\nabla_{\bbf}\bT=\bw$, $\nabla_{\bbf^*}\bT=\bZero$.
\end{enumerate}

%---------------------------------------------------------
\subsection{Generalized Wirtinger's calculus applied on real valued functions}\label{SEC:wirti_on_cost}

In non-linear complex signal processing, we are often interested in minimization problems of real valued cost functions defined on certain complex Hilbert spaces. Therefore, in order to successfully implement the associated minimization algorithms, the gradients of the respective cost functions need to be deployed.  We may compute the gradients, either by employing ordinary \frechet calculus, that is regarding the complex Hilbert space as a cartesian product of real Hilbert spaces, or by using Wirtinger's calculus. Both cases will eventually lead to the same results, but the application of Wirtinger's calculus provides a more elegant and comfortable alternative, especially if the cost function, by its definition, is given in terms of $\bbf$ and $\bbf^*$ (where $\bbf$ is an element of the respective Hilbert space).

As the function under consideration $T(\bbf)$ is real valued, the \frechet W and CW derivatives are simplified, i.e.,
\begin{align*}
\nabla_{\bbf}T(\bc) = \frac{1}{2}\left(\nabla_1 T(\bc) - i\nabla_2 T(\bc)\right) \textrm{ and } \nabla_{\bbf^*} T(\bc) = \frac{1}{2}\left(\nabla_1 T(\bc) + i\nabla_2 T(\bc)\right)
\end{align*}
and the following important property can be derived.

\noindent\rule[1ex]{\linewidth}{1pt}
\begin{lemma}\label{LEM:fre_real_function}
If $f:A\subseteq\HH\rightarrow\R$ is \frechet differentiable in the real sense, then
\begin{align}
\left(\nabla_{\bbf}T(\bc)\right)^* &= \nabla_{\bbf^*} T(\bc).
\end{align}
\end{lemma}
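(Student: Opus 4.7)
The plan is to observe that this lemma is essentially an immediate corollary of Proposition \ref{PRO:fre_w_rule1}, applied to the special case of a real-valued operator. First I would note that, viewing $T$ as a $\C$-valued operator of the form $T(\bbf) = T_1(\bbf) + i T_2(\bbf)$, the assumption that $T$ takes values in $\R$ means exactly that $T_2 \equiv 0$, so $T^* = T_1 = T$.

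Once this identification is made, the lemma follows instantly by substitution: Proposition \ref{PRO:fre_w_rule1} gives $(\nabla_{\bbf} T(\bc))^* = \nabla_{\bbf^*} T^*(\bc)$, and replacing $T^*$ by $T$ on the right-hand side yields the claim. There is no real obstacle here; the only subtlety worth flagging is the consistency check with the simplified formulas for $\nabla_{\bbf} T(\bc)$ and $\nabla_{\bbf^*} T(\bc)$ stated just above the lemma, where one recognizes $\nabla_1 T(\bc), \nabla_2 T(\bc) \in \cH$ as embedded in $\HH$ via the natural inclusion $\cH \hookrightarrow \HH$ (so they are their own conjugates under the $\HH$-conjugation).

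As a completely self-contained alternative, I could give a one-line direct verification: with $T_2 \equiv 0$, the two formulas
\begin{align*}
\nabla_{\bbf} T(\bc) = \tfrac{1}{2}\bigl(\nabla_1 T(\bc) - i \nabla_2 T(\bc)\bigr), \qquad
\nabla_{\bbf^*} T(\bc) = \tfrac{1}{2}\bigl(\nabla_1 T(\bc) + i \nabla_2 T(\bc)\bigr)
\end{align*}
are manifestly complex conjugates of each other in $\HH$, since $\nabla_1 T(\bc)$ and $\nabla_2 T(\bc)$ lie in the real subspace $\cH \subset \HH$. I would likely present the Proposition \ref{PRO:fre_w_rule1} route as the main argument, since it is cleaner and emphasizes the structural reason for the identity, and mention the direct calculation as a sanity check.
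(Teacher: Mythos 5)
Your proposal is correct and matches the paper's (implicit) justification: the paper states this lemma without a separate proof, the intended argument being exactly your one-line observation that with $T_2\equiv 0$ the simplified formulas for $\nabla_{\bbf}T(\bc)$ and $\nabla_{\bbf^*}T(\bc)$ are complex conjugates, since $\nabla_1 T(\bc),\nabla_2 T(\bc)\in\cH$ are fixed by conjugation in $\HH$. Your main route via Proposition \ref{PRO:fre_w_rule1} together with $\bT^*=\bT$ is an equally immediate and equivalent way to phrase the same fact, so there is no substantive difference from the paper.
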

\noindent\rule[1ex]{\linewidth}{1pt}

An important consequence is that if $T$ is a real valued function defined on $\HH$, then its first order Taylor's expansion at $\bc$ is given by:
\begin{align*}
T(\bc+\bh) & =  T(\bc) + \left\langle \bh, \left(\nabla_{\bbf}T(\bc)\right)^*\right\rangle_{\HH} + \left\langle \bh^*, \left(\nabla_{\bbf^*}T(\bc)\right)^*\right\rangle_{\HH} +o(\|\bh\|_{\HH})\\
& = T(\bc) + \left\langle \bh, \left(\nabla_{\bbf}T(\bc)\right)^*\right\rangle_{\HH} + \left(\left\langle \bh, \nabla_{\bbf^*}T(\bc)\right\rangle_{\HH}\right)^* + o(\|\bh\|_{\HH})\\
& = T(\bc) + \Re\left[\left\langle \bh, \left(\nabla_{\bbf}T(\bc)\right)^*\right\rangle_{\HH}\right] +o(\|\bh\|_{\HH}).
\end{align*}
However, in view of the Cauchy Riemann inequality we have:
\begin{align*}
\Re\left[\left\langle \bh, \left(\nabla_{\bbf} T(\bc)\right)^*\right\rangle_{\HH}\right] &\leq \left|\left\langle \bh, \left(\nabla_{\bbf} T(\bc)\right)^*\right\rangle_{\HH}\right|\\
&\leq \|\bh\|_{\HH} \left\|\nabla_{\bbf^*}T(\bc)\right\|_{\HH}.
\end{align*}
The equality in the above relationship holds, if $\bh\upuparrows \nabla_{\bbf^*} T(\bc)$. Hence, the direction of increase of $T$ is $\nabla_{\bbf^*}T(\bc)$. Therefore, any gradient descent based algorithm minimizing $T(\bbf)$ is based on the update scheme:
\begin{align}
\bbf_{n} = \bbf_{n-1} - \mu\cdot\nabla_{\bbf^*}T(\bbf_{n-1}).
\end{align}

Assuming differentiability of $T$, a standard result from \frechet real calculus states that a necessary condition for a point $\bc$ to be an optimum (in the sense that $T(\bbf)$ is minimized) is that this point is a stationary point of $T$, i.e. the partial derivatives of $T$ at $\bc$ vanish. In the context of Wirtinger's calculus we have the following obvious corresponding result.

\noindent\rule[1ex]{\linewidth}{1pt}
\begin{proposition}\label{PRO:fre_first_order_opt}
If $T:X\subseteq\HH\rightarrow\C$ is \frechet differentiable at $\bc$ in the real sense, then a necessary condition for a point $\bc$ to be a local optimum (in the sense that $T(\bc)$ is minimized or maximized) is that either the \frechet W, or the CW derivative vanishes\footnote{Note, that for real valued functions the W and the CW derivatives constitute a conjugate pair (lemma \ref{LEM:fre_real_function}). Thus if the W derivative vanishes, then the CW derivative vanishes too. The converse is also true.}.
\end{proposition}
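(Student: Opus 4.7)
The plan is to reuse the first-order Taylor expansion that was just derived for real-valued $T$, namely
\begin{align*}
T(\bc+\bh) = T(\bc) + \Re\!\left[\left\langle \bh, \left(\nabla_{\bbf}T(\bc)\right)^*\right\rangle_{\HH}\right] + o(\|\bh\|_{\HH}),
\end{align*}
and then argue by contradiction. Suppose $\bc$ is a local minimum (the maximum case is symmetric) but $\nabla_{\bbf}T(\bc)\neq\bZero$. Set $\bw=\left(\nabla_{\bbf}T(\bc)\right)^*$ and consider the perturbation $\bh_t=-t\bw$ for small real $t>0$. Then $\langle \bh_t,\bw\rangle_{\HH}=-t\|\bw\|_{\HH}^2$ is real and strictly negative, so
\begin{align*}
T(\bc+\bh_t) - T(\bc) = -t\|\bw\|_{\HH}^2 + o(t),
\end{align*}
which is strictly negative for all sufficiently small $t>0$. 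This contradicts $\bc$ being a local minimum, and we conclude $\nabla_{\bbf}T(\bc)=\bZero$.

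Finally, Lemma \ref{LEM:fre_real_function} asserts $\left(\nabla_{\bbf}T(\bc)\right)^*=\nabla_{\bbf^*}T(\bc)$, so the W and CW gradients constitute a conjugate pair; in particular one vanishes if and only if the other does. This yields the claim in the form stated: the vanishing of either the W or the CW derivative is a necessary condition for a local optimum, and in fact the two conditions are equivalent.

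The only subtlety, and what might look like an obstacle at first, is that the pairing appearing in the Taylor expansion is $\Re\langle\bh,\bw\rangle_{\HH}$ rather than $\langle\bh,\bw\rangle_{\HH}$ itself, so one cannot simply pick $\bh$ in the direction of $\nabla_{\bbf}T(\bc)$ and extract a nonzero linear term in the complex sense. The fix is precisely the choice $\bh_t=-t\left(\nabla_{\bbf}T(\bc)\right)^*$ above, which forces the inner product to be a negative real number and thus guarantees a genuine first-order decrease. Apart from this small bookkeeping, the argument is a direct transcription of the classical stationarity result from elementary real calculus, which is presumably why the proposition is labeled as obvious in the text.
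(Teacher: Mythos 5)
Your proof is correct. The one point to keep explicit is that the proposition only makes sense (and your argument only works) for \emph{real-valued} $T$: the codomain $\C$ in the statement is evidently a typo for $\R$, and your use of the simplified expansion $T(\bc+\bh)=T(\bc)+\Re\left[\left\langle \bh,\left(\nabla_{\bbf}T(\bc)\right)^*\right\rangle_{\HH}\right]+o(\|\bh\|_{\HH})$ together with Lemma \ref{LEM:fre_real_function} silently relies on this; with that understood, the choice $\bh_t=-t\left(\nabla_{\bbf}T(\bc)\right)^*$ does make the first-order term equal to $-t\left\|\nabla_{\bbf}T(\bc)\right\|_{\HH}^2<0$, giving the desired contradiction, and the conjugate-pair lemma transfers the conclusion to the CW gradient. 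Your route differs from the paper's: the text treats the proposition as an immediate corollary of the classical Fr\'echet stationarity condition ($\nabla_1 T(\bc)=\nabla_2 T(\bc)=0$ at a local optimum) combined with the definitions $\nabla_{\bbf}T=\tfrac{1}{2}\left(\nabla_1 T-i\nabla_2 T\right)$ and $\nabla_{\bbf^*}T=\tfrac{1}{2}\left(\nabla_1 T+i\nabla_2 T\right)$, which is why it is labeled obvious and given no proof. Your argument instead re-derives the stationarity fact directly from the Wirtinger-form Taylor expansion; it is longer but self-contained, and it has the side benefit of exhibiting $-\nabla_{\bbf^*}T(\bc)$ as a genuine descent direction, which dovetails with the gradient-descent update scheme discussed just before the proposition. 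Either approach is acceptable.
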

\noindent\rule[1ex]{\linewidth}{1pt}

% trigger a \newpage just before the given reference
% number - used to balance the columns on the last page
% adjust value as needed - may need to be readjusted if
% the document is modified later
%\IEEEtriggeratref{8}
% The "triggered" command can be changed if desired:
%\IEEEtriggercmd{\enlargethispage{-5in}}

% references section

% can use a bibliography generated by BibTeX as a .bbl file
% BibTeX documentation can be easily obtained at:
% http://www.ctan.org/tex-archive/biblio/bibtex/contrib/doc/
% The IEEEtran BibTeX style support page is at:
% http://www.michaelshell.org/tex/ieeetran/bibtex/
%\bibliographystyle{IEEEtran}
% argument is your BibTeX string definitions and bibliography database(s)
%\bibliography{IEEEabrv,../bib/paper}
%
\bibliographystyle{plain}
\bibliography{refs}

% <OR> manually copy in the resultant .bbl file
% set second argument of \begin to the number of references
% (used to reserve space for the reference number labels box)

% biography section
%
% If you have an EPS/PDF photo (graphicx package needed) extra braces are
% needed around the contents of the optional argument to biography to prevent
% the LaTeX parser from getting confused when it sees the complicated
% \includegraphics command within an optional argument. (You could create
% your own custom macro containing the \includegraphics command to make things
% simpler here.)
%\begin{biography}[{\includegraphics[width=1in,height=1.25in,clip,keepaspectratio]{mshell}}]{Michael Shell}
% or if you just want to reserve a space for a photo:

% You can push biographies down or up by placing
% a \vfill before or after them. The appropriate
% use of \vfill depends on what kind of text is
% on the last page and whether or not the columns
% are being equalized.

%\vfill

% Can be used to pull up biographies so that the bottom of the last one
% is flush with the other column.
%\enlargethispage{-5in}

% that's all folks
\end{document}